\def\eqref#1{equation~\ref{#1}}
\def\1{\bm{1}}
\DeclareMathAlphabet{\mathsfit}{\encodingdefault}{\sfdefault}{m}{sl}
\SetMathAlphabet{\mathsfit}{bold}{\encodingdefault}{\sfdefault}{bx}{n}
\newcommand{\E}{\mathbb{E}}
\newcommand{\Var}{\mathrm{Var}}
\setlist{leftmargin=5mm}
\definecolor{darkblue}{rgb}{0.0, 0.0, 0.55}
\definecolor{cornellred}{rgb}{0.7, 0.11, 0.11}
\definecolor{darkred}{rgb}{0.55, 0.0, 0.0}
\definecolor{red(ryb)}{rgb}{1.0, 0.15, 0.07}
\definecolor{red}{rgb}{1.0, 0.0, 0.0}
\newcommand{\xeta}{x}
\newcommand{\bw}{{\boldsymbol w}}
\newcommand{\bR}{{\boldsymbol R}}
\newcommand{\bL}{{\boldsymbol L}}
\newcommand{\bU}{{\boldsymbol U}}
\newcommand{\bH}{{\boldsymbol H}}
\newcommand{\bI}{{\boldsymbol I}}
\newcommand{\MK}{{\mathcal{K}}}
\newcommand{\var}{{\mbox{Var}}}
\newcommand{\cov}{{\mbox{Cov}}}
\newcommand{\MX}{{\mathcal{X}}}
\newcommand{\btheta}{{\boldsymbol{\theta}}}
\newcommand{\bTheta}{{\boldsymbol \Theta}}
\newcommand{\bSigma}{{\boldsymbol \Sigma}}
\newcommand{\ea}{\end{array}}
\newcommand{\ee}{\end{equation}}
\newcommand{\bea}{\begin{eqnarray}}
\newcommand{\eea}{\end{eqnarray}}
\newcommand{\beaa}{\begin{eqnarray*}}
\newcommand{\eeaa}{\end{eqnarray*}}
\def\E{\mathbb{E}}
\def\bx{{\bf x}}
\def\by{{\bf y}}
\def\qed{ \hfill \vrule width.25cm height.25cm depth0cm\smallskip}
\newcommand{\basa}{\begin{assumption}}
\newcommand{\easa}{\end{assumption}}
\newcommand{\bas}{\begin{assum}}
\newcommand{\eas}{\end{assum}}
\def\limP2{\,\mathop{\buildrel \Pi_2\over\longrightarrow\,}}
\def\1{{\bf 1}}
\def\by{{\bf y}}
\def\:{\!:\!}
\newcommand{\bW}{{\boldsymbol W}}
\def\pop{\bigotimes}
\newtheorem{assump}{Assumption}
\newcommand{\bbeta}{{\boldsymbol \beta}}
\newtheorem{remark}{Remark}
\newtheorem{theorem}{Theorem}
\newtheorem{corollary}{Corollary}
\newtheorem{lemma}{Lemma}
\newtheorem{assumption}{Assumption}
\title{Interacting Contour Stochastic Gradient
Langevin Dynamics}
\author{
  Wei Deng\textsuperscript{1, 2},\;
  Siqi Liang\textsuperscript{1},\;
  Botao Hao\textsuperscript{3},\; 
  \vspace{0.3em}
  \textbf{
  Guang Lin\textsuperscript{1},\;
  Faming Liang\textsuperscript{1}\;
  }\; \\
  \vspace{0.3em}
   \textsuperscript{1}{Purdue University} 
  \textsuperscript{2}{Morgan Stanley} 
  \textsuperscript{3}{DeepMind}  \\
  \texttt{fmliang@purdue.edu; weideng056@gmail.com}
}
\begin{document}

\maketitle

\begin{abstract}
We propose an \emph{interacting} contour stochastic gradient Langevin dynamics (ICSGLD) sampler, an embarrassingly parallel multiple-chain contour stochastic gradient Langevin dynamics (CSGLD) sampler with \emph{efficient interactions}. We show  that ICSGLD can be \emph{theoretically more efficient} than a single-chain CSGLD with an equivalent computational budget. We also present a novel random-field function, which facilitates the estimation of self-adapting parameters in big data and obtains free mode explorations. Empirically, we compare the proposed algorithm with popular benchmark methods for posterior sampling. The numerical results show a great potential of ICSGLD for large-scale uncertainty estimation tasks.
\end{abstract}

\section{Introduction}
\vskip -0.05in

Stochastic gradient Langevin dynamics (SGLD) \citep{Welling11} has achieved great successes in simulations of high-dimensional systems for big data problems. It, however, yields only a fast mixing rate when the energy landscape is simple, e.g., local energy wells are shallow and not well separated. To improve its convergence for the problems with complex energy landscapes, various strategies have been proposed, such as momentum augmentation  \citep{Chen14, Ding14}, Hessian approximation \citep{Ahn12, Li16}, high-order numerical schemes \citep{Chen15, Li19}, and cyclical learning rates 
\citep{SWA1, swag, ruqi2020}. In spite of their asymptotic properties in Bayesian inference \citep{VollmerZW2016} and non-convex optimization \citep{Yuchen17}, it is still difficult to achieve compelling empirical results for  pathologically complex deep neural networks (DNNs).

 To simulate from distributions with complex energy landscapes, e.g., those with a multitude of modes well  separated by high energy barriers,
 an emerging trend is to run multiple chains, where interactions between different chains can potentially accelerate the convergence of the simulation. For example, \cite{SongWL2014} and \cite{Futoshi2020} showed theoretical advantages of appropriate interactions in ensemble/population simulations. Other multiple chain methods include particle-based nonlinear Markov (Vlasov) processes \citep{SVGD, SPOS} and replica exchange methods (also known as parallel tempering) \citep{deng_VR}. However, the particle-based methods result in an expensive kernel matrix computation given a large number of particles \citep{SVGD}; similarly, na\"{i}vely extending replica exchange methods to population chains leads to a long waiting time to swap between non-neighboring chains \citep{Doucet19}. Therefore, how to conduct interactions between different chains, while maintaining the scalability of the algorithm, is the key to the success of the parallel stochastic gradient MCMC algorithms. 

In this paper, we propose an interacting contour stochastic gradient Langevin dynamics (ICSGLD) sampler, a pleasingly parallel extension of contour stochastic gradient Langevin dynamics (CSGLD) \citep{CSGLD} with \emph{efficient interactions}. 
The proposed algorithm requires minimal communication cost in that each chain shares with others the marginal energy likelihood estimate only. 
As a result, the interacting mechanism improves the convergence of the simulation, while the minimal communication mode between different chains enables the  proposed algorithm to be naturally adapted to distributed computing with little overhead. 
For the single-chain CSGLD algorithm,  despite its theoretical advantages as shown in \cite{CSGLD}, estimation of the marginal energy likelihood remains challenging for big data problems with a wide energy range, jeopardizing the empirical performance of the class of importance sampling methods \citep{SMC1, SMC2, WangLandau2001, Liang07, Particle_MCMC, CSGLD} in big data applications. To resolve this issue, we resort to a novel interacting random-field function based on multiple chains for an ideal variance reduction and a more robust estimation. As such, we can greatly facilitate the estimation of the marginal energy likelihood so as to accelerate the simulations of notoriously complex distributions. To summarize, the algorithm has three main contributions:
\begin{itemize}
    \item We propose a scalable interacting importance sampling method for big data problems with the minimal communication cost. A novel random-field function is derived to tackle the incompatibility issue of the class of importance sampling methods in big data problems.
    \item Theoretically, we study the local stability of a non-linear mean-field system and justify regularity properties of the solution of Poisson's equation. We also prove the asymptotic normality for the stochastic approximation process in mini-batch settings and show that ICSGLD is asymptotically more efficient than the single-chain CSGLD with an equivalent computational budget. 
    \item Our proposed algorithm achieves appealing mode explorations using a fixed learning rate on the MNIST dataset and obtains remarkable performance in large-scale uncertainty estimation tasks.
\end{itemize}

\section{Preliminaries}
\vskip -0.05in
\subsection{Stochastic gradient Langevin dynamics}

A standard sampling algorithm for big data problems is SGLD \citep{Welling11}, which is a numerical scheme of a stochastic differential equation in mini-batch settings:
\begin{equation}
    \bx_{k+1}=\bx_k-\epsilon_{k} \frac{N}{n}\nabla_{\bx} \widetilde U(\bx_k)+\sqrt{2\tau \epsilon_{k}}\bw_{k},
\end{equation}
where $\bx_k\in \MX\in\mathbb{R}^{d}$, $\epsilon_{k}$ is the learning rate at iteration $k$, $N$ denotes the number of total data points,  $\tau$ is the temperature, and $\bw_k$ is a standard Gaussian vector of dimension $d$. In particular, $\frac{N}{n}\nabla_{\bx} \widetilde U(\bx)$ is an unbiased stochastic gradient estimator based on a mini-batch data $\mathcal{B}$ of size $n$ and $\frac{N}{n}\widetilde U(\bx)$ is the unbiased energy estimator for the exact energy function $U(\bx)$. Under mild conditions on $U$, $\bx_{k+1}$ is known to converge weakly to a unique invariant distribution $\pi(\bx)\propto e^{-\frac{U(\bx)}{\tau}}$ as $\epsilon_k\rightarrow 0$.

\subsection{Contour stochastic gradient Langevin dynamics}
\label{ori_csgld}
Despite its theoretical guarantees, SGLD can converge exponentially slow when $U(\bx)$ is non-convex and exhibits high energy barriers. To remedy this issue, CSGLD \citep{CSGLD} exploits the flat histogram idea and proposes to simulate from a flattened density with much lower energy barriers
\begin{equation}
\label{flat_density}
    \varpi_{\Psi_{\btheta}}(\bx) \propto {\pi(\bx)}/{\Psi^{\zeta}_{\btheta}(U(\bx))},
\end{equation}
where $\zeta$ is a hyperparameter, $\Psi_{\btheta}(u)= \sum_{i=1}^m \bigg(\theta(i-1)e^{(\log\theta(i)-\log\theta(i-1)) \frac{u-u_{i-1}}{\Delta u}}\bigg) 1_{u_{i-1} < u \leq u_i}.$

In particular, $\{u_i\}_{i=0}^m$ determines the partition $\{\MX_i\}_{i=1}^m$ of $\MX$ such that $\MX_i=\{\bx: u_{i-1}<U(\bx)\leq u_i\}$, where $-\infty=u_0<u_1<\cdots<u_{m-1}<u_m=\infty$. For practical purposes, we assume $u_{i+1}-u_i=\Delta u$ for $i=1,\cdots, m-2$. In addition, $\btheta=(\theta(1), \theta(2), \ldots, \theta(m))$ 
 is the self-adapting parameter in the space $\bTheta=\bigg\{\left(\theta(1),\cdots, \theta(m)\right)\big|0<\theta(1),\cdots,\theta(m)<1 \&\ \sum_{i=1}^m \theta(i)=1 \bigg\}$.
 
Ideally, setting $\zeta=1$ and $\theta(i)=\theta_{\infty}(i)$, where $\theta_{\infty}(i)=\int_{\MX_i} \pi(\bx)d\bx$ for $i\in\{1,2,\cdots, m\}$, enables CSGLD to achieve a ``random walk'' in the space of energy and to penalize the over-visited partition \citep{WangLandau2001, Liang07, Fortetal2011, Fort15}. However, the optimal values of $\btheta_{\infty}$ is unknown {\it a priori}. To tackle this issue, CSGLD proposes the following procedure to adaptively estimate $\btheta$ via stochastic approximation (SA) \citep{RobbinsM1951, Albert90}:
\begin{itemize}
\item[(1)] Sample $\bx_{k+1}=\bx_k+\epsilon_{k} \frac{N}{n}\nabla_{\bx}\widetilde  U_{\Psi_{\btheta_k}}(\bx_k)+\sqrt{2\tau \epsilon_{k}}\bw_{k}$, 
\item[(2)] Optimize $\bm{\theta}_{k+1}=\bm{\theta}_{k}+\omega_{k+1} \mathbb{ \widetilde H}(\bm{\theta}_{k}, \bx_{k+1}),$
\end{itemize}
where $\nabla_{\bx}\widetilde U_{\Psi_{\btheta}}(\cdot)$ is a stochastic gradient function of $\varpi_{\Psi_{\btheta}}(\cdot)$ to be detailed in Algorithm \ref{alg:ICSGLD}. $\mathbb{ \widetilde H}(\btheta,\bx):=\left(\mathbb{ \widetilde H}_1(\btheta,\bx), \cdots, \mathbb{ \widetilde H}_m(\btheta,\bx)\right)$ is random-field function where each entry follows
\begin{equation}
\label{ori_randomF}
    \mathbb{ \widetilde H}_i(\btheta,\bx)={\theta}^{\zeta}( J_{\widetilde U} (\bx))\left(1_{i= J_{\widetilde U}(\bx)}-{\theta}(i)\right), \text{where } J_{\widetilde U}(\bx)=\sum_{i=1}^m i 1_{u_{i-1}<\frac{N}{n} \widetilde U(\bx)\leq u_i}.
\end{equation}
Theoretically, CSGLD converges to a sampling-optimization equilibrium in the sense that $\btheta_{k}$ approaches to a fixed point $\btheta_{\infty}$ and the samples are drawn from the flattened density $\varpi_{\Psi_{\btheta_{\infty}}}(\bx)$. Notably, the mean-field system is \emph{globally stable} with a unique stable equilibrium point in a small neighborhood of $\btheta_{\infty}$. Moreover, such an appealing property holds even when $U(\bx)$ is non-convex.

 \begin{algorithm*}[tb]
   \caption{Interacting contour stochastic gradient Langevin dynamics algorithm (ICSGLD). $\{\MX_i\}_{i=1}^m$ is pre-defined partition and $\zeta$ is a hyperparameter. The update rule in distributed-memory settings and discussions of hyperparameters is detailed in section \ref{hyper_setup} in the supplementary material. 
   }
   \label{alg:ICSGLD}
\begin{algorithmic}
   \STATE{\bfseries [1.] (Data subsampling)} Draw a mini-batch data $\mathcal{B}_k$ from $\mathcal{D}$, and compute stochastic gradients $\nabla_{\bx}\widetilde U(\bx_k^{(p)})$ and energies $\widetilde U(\bx_k^{(p)})$ for each $\bx^{(p)}$, where $p\in\{1,2,\cdots, P\}$, $|\mathcal{B}_k|=n$, and $|\mathcal{D}|=N$.

   \STATE {\bfseries [2.] (Parallel simulation)}
  Sample $\bx_{k+1}^{\pop P}:=(\bx_{k+1}^{(1)}, \bx_{k+1}^{(2)}, \cdots, \bx_{k+1}^{(P)})^{\top}$ based on SGLD and $\btheta_k$
   \begin{equation}
   \bx_{k+1}^{\pop P}=\bx_k^{\pop P}+\epsilon_{k} \frac{N}{n}\nabla_{\bx}\widetilde  \bU_{\Psi_{\btheta_k}}(\bx_k^{\pop P})+\sqrt{2\tau \epsilon_{k}}\bw_{k}^{\pop P},
   \end{equation}
   where $\epsilon_{k}$ is the learning rate, $\tau$ is the temperature, $\bw_{k}^{\pop P}$ denotes $P$ independent standard Gaussian vectors, $\nabla_{\bx}\widetilde \bU_{\Psi_{\btheta}}(\bx^{\pop P})=(\nabla_{\bx}\widetilde U_{\Psi_{\btheta}}(\bx^{(1)}), \nabla_{\bx}\widetilde U_{\Psi_{\btheta}}(\bx^{(2)}), \cdots, \nabla_{\bx}\widetilde U_{\Psi_{\btheta}}(\bx^{(P)}))^{\top}$, and $\nabla_{\bx}\widetilde U_{\Psi_{\btheta}}(\bx)= \left[1+ 
   \frac{\zeta\tau}{\Delta u}  \left(\log \theta({J}_{\widetilde U}(\bx))-\log\theta((J_{\widetilde U}(\bx)-1)\vee 1) \right) \right]  
    \nabla_{\bx} \widetilde U(\bx)$ for any $\bx\in\MX$.
  \STATE {\bfseries [3.] (Stochastic approximation)} Update the self-adapting parameter $\theta(i)$ for $i\in\{1,2,\cdots, m\}$ 
  \begin{equation}
  \begin{split}
  \label{updateeq}
 \theta_{k+1}(i)&={\theta}_{k}(i)+\omega_{k+1}\frac{1}{P}\sum_{p=1}^P {\theta}_{k}( J_{\widetilde U}(\bx_{k+1}^{(p)}))\left(1_{i=J_{\widetilde U}(\bx_{k+1}^{(p)})}-{\theta}_{k}(i)\right),
 \end{split}
 \end{equation}
 where $1_{A}$ is an indicator function that takes value 1 if the event $A$ appears and equals 0 otherwise. 

\vspace{-0.04in}
\end{algorithmic}
\end{algorithm*}

\section{Interacting contour stochastic gradient Langevin dynamics}
\vskip -0.05in
The major goal of interacting CSGLD (ICSGLD) is to improve the efficiency of CSGLD. In particular, the self-adapting parameter $\btheta$ is crucial for ensuring the sampler to escape from the local traps and traverse the whole energy landscape, and how to reduce the variability of $\btheta_k$'s is the key to the success of such a dynamic importance sampling algorithm. To this end, we propose an efficient variance reduction scheme via interacting parallel systems to improve the accuracy of  $\btheta_{k}$'s.

\subsection{Interactions in parallelism}

Now we first consider a na\"{i}ve parallel sampling scheme with $P$ chains as follows
\begin{equation*}
\small
    \bx_{k+1}^{\pop P}=\bx_k^{\pop P}+\epsilon_{k} \frac{N}{n}\nabla_{\bx}\widetilde  \bU_{\Psi_{\btheta_k}}(\bx_k^{\pop P})+\sqrt{2\tau \epsilon_{k}}\bw_{k}^{\pop P},
\end{equation*}
where $\bx^{\pop P}=(\bx^{(1)}, \bx^{(2)}, \cdots, \bx^{(P)})^{\top}$, $\bw_{k}^{\pop P}$ denotes $P$ independent standard Gaussian vectors, and $\widetilde \bU_{\Psi_{\btheta}}(\bx^{\pop P})=(\widetilde U_{\Psi_{\btheta}}(\bx^{(1)}), \widetilde U_{\Psi_{\btheta}}(\bx^{(2)}), \cdots, \widetilde U_{\Psi_{\btheta}}(\bx^{(P)}))^{\top}$.

Stochastic approximation aims to find the solution $\btheta$ of the mean-field system $h(\btheta)$ such that 
\begin{equation*}
\small
\begin{split}
    h(\btheta)&=\int_{\MX} \widetilde H(\bm{\theta}, \bm{\bx}) \varpi_{\btheta}(d\bx)=0,
\end{split}
\end{equation*}
where $\varpi_{\btheta}$ is the invariant measure simulated via SGLD that approximates $\varpi_{\Psi_{\btheta}}$ in (\ref{flat_density}) and $\widetilde H(\bm{\theta}, \bm{\bx})$ is the novel random-field function to be defined later in (\ref{new_randomF}). Since $h(\btheta)$ is observable only up to large random perturbations (in the form of $\widetilde H(\bm{\theta}, \bm{\bx})$), the optimization of $\btheta$ based on isolated random-field functions may not be efficient enough. However, due to the \emph{conditional independence} of $\bx^{(1)}, \bx^{(2)},\cdots, \bx^{(P)}$ in parallel sampling, it is very natural to consider a Monte Carlo average
\begin{equation}
\label{monte_carlo_avg}
\small
\begin{split}
    h(\btheta)&=\frac{1}{P}\sum_{p=1}^P\int_{\MX} \widetilde H(\bm{\theta}, \bm{\bx}^{(p)}) \varpi_{\btheta}(d\bx^{(p)})=0.
\end{split}
\end{equation}
Namely, we are considering the following stochastic approximation scheme
\begin{equation}
\label{SA_step}
    \bm{\theta}_{k+1}=\bm{\theta}_{k}+\omega_{k+1} \widetilde \bH(\bm{\theta}_{k}, \bx_{k+1}^{\pop P}),
\end{equation}
where $\widetilde \bH(\bm{\theta}_{k}, \bx_{k+1}^{\pop P})$ is an interacting random-field function $ \widetilde \bH(\bm{\theta}_{k}, \bx_{k+1}^{\pop P})=\frac{1}{P}\sum_{p=1}^P \widetilde H(\bm{\theta}_{k}, \bx_{k+1}^{(p)})$.
Note that the Monte Carlo average is very effective to reduce the variance of the interacting random-field function $\widetilde \bH(\bm{\theta}, \bm{\bx}^{\pop P})$ based on the conditionally independent random field functions. Moreover, each chain shares with others only a very short message during each iteration. Therefore, the interacting parallel system is well suited for distributed computing, where the \emph{implementations and communication costs} are further detailed in section \ref{communication_cost} in the supplementary material. By contrast, each chain of the non-interacting parallel CSGLD algorithm deals with the parameter $\btheta$ and a large-variance random-field function $\widetilde H(\bm{\theta}, \bx)$ individually, leading to coarse estimates in the end.

Formally, for the population/ensemble interaction scheme (\ref{SA_step}),  we define a novel random-field function $\widetilde H(\btheta,\bx)=(\widetilde H_1(\btheta,\bx), \widetilde H_2(\btheta,\bx), \cdots, \widetilde H_m(\btheta,\bx))$, where each component satisfies
\begin{equation}
\label{new_randomF}
\widetilde H_i(\btheta,\bx)={\theta}( J_{\widetilde U} (\bx))\left(1_{i= J_{\widetilde U}(\bx)}-{\theta}(i)\right).  
\end{equation}
As shown in Lemma \ref{convex_main}, the corresponding mean-field function proposes to converge to a different fixed point $\btheta_{\star}$, s.t.
\begin{equation}
\small
\label{relation_two_thetas}
    \theta_{\star}(i)\propto\left(\int_{\MX_i} e^{-\frac{U(\bx)}{\tau}}d\bx\right)^{\frac{1}{\zeta}}\propto \btheta_{\infty}^{\frac{1}{\zeta}}(i).
\end{equation}
A large data set often renders the task of estimating $\btheta_{\infty}$ numerically challenging. By contrast, we resort to a different solution by estimating $\btheta_{\star}$ instead based on a large value of $\zeta$. The proposed algorithm is summarized in Algorithm \ref{alg:ICSGLD}. For more study on the scalablity of the new scheme, we leave the discussion in section \ref{scalability}.

\subsection{Related works}

Replica exchange SGLD \citep{deng2020, deng_VR} has successfully extended the traditional replica exchange \citep{PhysRevLett86, Geyer91, parallel_tempering05} to big data problems. However, it works with two chains only and has a low swapping rate. As shown in Figure \ref{replica_vs_contour}(a), a na\"{i}ve extension of multi-chain replica exchange SGLD yields low communication efficiency. Despite some recipe in the literature \citep{Katzgraber06, Elmar08, Doucet19}, how to conduct multi-chain replica exchange with low-frequency swaps is still an open question.

\begin{figure*}[!ht]
  \centering
  \vskip -0.17in
  \subfigure[Replica Exchange (parallel tempering)]{\includegraphics[scale=0.44
  ]{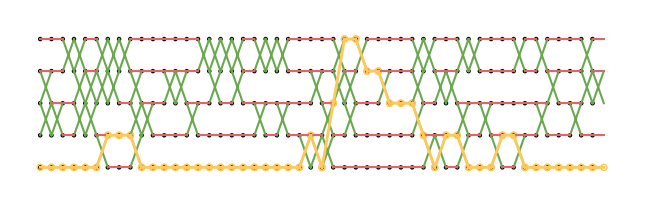}}\label{fig: 3a}\quad\quad
  \hspace{-0.5cm}
  \subfigure[Interacting contour SGLD (ICSGLD)]{\includegraphics[scale=0.44]{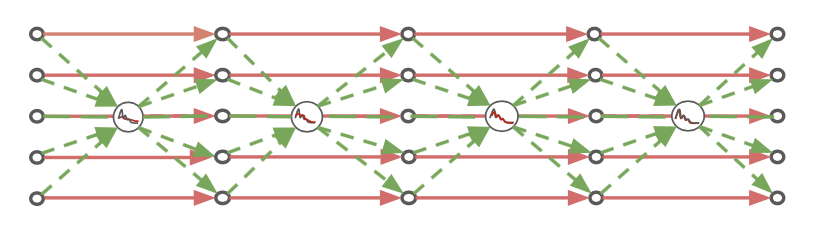}}\label{fig: 3b}
  \vspace{-0.5em}
  \caption{A comparison of communication costs between replica exchange (RE) and ICSGLD. We see RE takes many iterations to swap with all the other chains; by contrast, ICSGLD possesses a pleasingly parallel mechanism where the only cost comes from sharing a light message.}
  \label{replica_vs_contour}
  \vspace{-0.05in}
\end{figure*}

Stein variational gradient descent (SVGD) \citep{SVGD} is a popular approximate inference method to drive a set of particles for posterior approximation. In particular, repulsive forces are proposed to prevent particles to collapse together into neighboring regions, which resembles our strategy of penalizing over-visited partition. However, SVGD tends to underestimate the uncertainty given a limited number of particles. Moreover, the quadratic cost in kernel matrix computation further raises the scalability concerns as more particles are proposed.

Admittedly, ICSGLD is not the first interacting importance sampling algorithm. For example, 
a population stochastic approximation Monte Carlo (pop-SAMC) algorithm  has been proposed in \cite{SongWL2014}, and 
an interacting particle Markov chain Monte Carlo (IPMCMC) algorithm has been proposed  in \cite{IPMCMC}.
A key difference between our algorithm and others is that our algorithm is mainly devised for big data problems. The IPMCMC and pop-SAMC are gradient-free samplers, which are hard to be adapted to high-dimensional big data problems.

Other parallel SGLD methods \citep{Ahn14_icml, chen16_distributed} aim to reduce the computational cost of gradient estimations in distributed computing, which, however, does not consider interactions for accelerating the convergence. \cite{Li19_v2} proposed asynchronous protocols to reduce communication costs when the master aggregates model parameters from all workers. Instead, we don't communicate the parameter $\bx\in\mathbb{R}^d$ but only share $\btheta\in \mathbb{R}^m$ and the indices, where $m\ll d$.

Our work also highly resembles the well-known Federated Averaging (FedAvg) algorithm \citep{lhy+20, FA-LD}, except that the stochastic gradient $\widetilde U(\bx)$ is replaced with the random field function $\widetilde H(\btheta, \bx)$ and we only share the low-dimensional latent vector $\btheta$. Since privacy concerns and communication cost are not major bottlenecks of our problem, we leave the study of taking the Monte Carlo average in Eq.(\ref{monte_carlo_avg}) every $K>1$ iterations for future works.

\section{Convergence properties}
\label{all_proof}
\vskip -0.05in
To study theoretical properties of ICSGLD, we first show a local stability property that is well-suited to big data problems, and then we present the asymptotic normality for the stochastic approximation process in \emph{mini-batch settings}, which eventually yields the desired result that ICSGLD is asymptotically more efficient than a single-chain CSGLD with an equivalent computational cost.

\subsection{Local stability for non-linear mean-field systems in big data}

The first obstacle  for the theoretical study is to approximate the components of $\btheta_{\infty}$  corresponding to the high energy region.  
 To get around this issue, the random field function $\widetilde H(\btheta,\bx)$ in (\ref{new_randomF}) is adopted to estimate a different target $\btheta_{\star}\propto \btheta_{\infty}^{\frac{1}{\zeta}}$. As detailed in Lemma \ref{convex_appendix} in the supplementary material, the mean-field equation is now formulated as follows
\begin{equation}
\label{h_i_theta_main}
     h_i(\btheta)\propto \theta_{\star}^{\zeta}(i)-{\left(\theta(i) C_{\btheta}\right)^{\zeta}}+{\text{perturbations}},
\end{equation}
where $C_{\btheta}=\bigg(\frac{\widetilde Z_{\zeta,\btheta}}{\widetilde Z_{\zeta, \btheta_{\star}}^{\zeta}}\bigg)^{\frac{1}{\zeta}}$ and $\widetilde Z_{\zeta,\btheta}=\sum_{k=1}^m  \frac{\int_{\MX_k} \pi(\bx)d\bx}{\theta^{\zeta-1}(k)}$. We see that (\ref{h_i_theta_main}) may not be linearly stable as in \cite{CSGLD}. Although the solution of the mean-field system $h(\btheta)=0$ is still unique, there may exist unstable invariant subspaces, leading us to consider the local properties. For a proper initialization of $\btheta$, which can be achieved by pre-training the model long enough time through SGLD, the mean value theorem implies a linear property in a local region
\begin{equation*}
     h_i(\btheta)\propto \theta_{\star}(i)-\theta(i)+{\text{perturbations}}.
\end{equation*}
Combining the perturbation theory \citep{Eric}, we present the following stability result:
\begin{lemma}[Local stability, informal version of Lemma \ref{convex_appendix}] \label{convex_main} 
Assume Assumptions A1-A4 (given in the supplementary material) hold. For any properly initialized $\btheta$, we have $\langle h(\btheta), \btheta - \widehat\btheta_{\star}\rangle \leq  -\phi\|\btheta - \widehat\btheta_{\star}\|^2$,  where $\widehat \btheta_{\star}=\btheta_{\star}+\mathcal{O}\left(\sup_{\bx}\Var(\xi_n(\bx))+\epsilon+\frac{1}{m}\right)$, $\btheta_{\star}\propto \btheta_{\infty}^{\frac{1}{\zeta}}$ ,
$\phi>0$, $\epsilon$ denotes a learning rate, and $\xi_n(\bx)$ denotes the noise in the stochastic energy estimator of batch size $n$ and $\Var(\cdot)$ denotes the variance. 
\end{lemma}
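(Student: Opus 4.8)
The plan is to establish a \emph{local dissipativity} estimate for the mean-field vector field $h(\btheta)=\int_{\MX}\widetilde H(\btheta,\bx)\,\varpi_{\btheta}(d\bx)$ in three stages: (i) derive the explicit near-equilibrium form of $h$, separating out the three sources of perturbation that appear in the error term of the statement; (ii) locate the genuine stable root $\widehat\btheta_{\star}$ as a small perturbation of the idealized root $\btheta_{\star}$; and (iii) bound the quadratic form $\langle h(\btheta),\btheta-\widehat\btheta_{\star}\rangle$ using the negative-definiteness of the Jacobian of $h$ on the tangent space of the simplex $\bTheta$.

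First I would compute $h(\btheta)$ with the new random-field function $\widetilde H_i(\btheta,\bx)=\theta(J_{\widetilde U}(\bx))\big(1_{i=J_{\widetilde U}(\bx)}-\theta(i)\big)$, which one checks depends on $\btheta$ only through the partition probabilities $\int_{\MX_i}\varpi_{\btheta}(d\bx)$, giving $h_i(\btheta)=\theta(i)\big(\int_{\MX_i}\varpi_{\btheta}(d\bx)-\sum_{l}\theta(l)\int_{\MX_l}\varpi_{\btheta}(d\bx)\big)$. I then peel off three layers of approximation: (a) replacing the SGLD-invariant measure $\varpi_{\btheta}$ by the ideal flattened density $\varpi_{\Psi_{\btheta}}$ of (\ref{flat_density}) costs $\mathcal{O}(\epsilon)$ by the discretization bias of SGLD; (b) replacing the stochastic index $J_{\widetilde U}$ built from $\tfrac{N}{n}\widetilde U$ by the exact index built from $U$ costs $\mathcal{O}(\sup_{\bx}\Var(\xi_n(\bx)))$, obtained by convolving the discontinuous indicator $1_{u_{i-1}<\frac{N}{n}\widetilde U(\bx)\le u_i}$ against the law of the energy noise $\xi_n$ and using the density regularity near the breakpoints $\{u_i\}$ supplied by Assumptions A1--A4; and (c) replacing $\Psi_{\btheta}$ by its piecewise-constant surrogate on each $\MX_i$ costs $\mathcal{O}(1/m)$. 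This yields the form recorded in (\ref{h_i_theta_main}), $h_i(\btheta)\propto\theta_{\star}^{\zeta}(i)-(\theta(i)C_{\btheta})^{\zeta}+r_i(\btheta)$ with $\|r(\btheta)\|=\mathcal{O}(\sup_{\bx}\Var(\xi_n(\bx))+\epsilon+1/m)$, and the idealized root solves $\theta_{\star}^{\zeta}(i)=(\theta_{\star}(i)C_{\btheta_{\star}})^{\zeta}$; imposing $\sum_i\theta_{\star}(i)=1$ then gives $\theta_{\star}(i)\propto(\int_{\MX_i}e^{-U(\bx)/\tau}d\bx)^{1/\zeta}\propto\btheta_{\infty}^{1/\zeta}(i)$, reproducing (\ref{relation_two_thetas}).

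Next I would linearize around $\btheta_{\star}$, which is legitimate because a sufficiently long SGLD pre-training (``proper initialization'') places $\btheta_0$, and hence the whole stochastic-approximation trajectory, inside a fixed neighborhood of $\btheta_{\star}$. The mean value theorem applied to $\theta\mapsto(\theta(i)C_{\btheta})^{\zeta}$ shows $h_i(\btheta)\propto\theta_{\star}(i)-\theta(i)+(\text{perturbations})$ on that neighborhood, so $\nabla h(\btheta_{\star})$ is a negative multiple of the identity plus a low-rank correction from the $\btheta$-dependence of $C_{\btheta}$ and of the normalizing constant $\widetilde Z_{\zeta,\btheta}$; Assumptions A1--A4 (uniform lower bounds on $\int_{\MX_i}\pi$, smoothness and boundedness of $U$ over $\MX$) make $\nabla h(\btheta_{\star})$ negative definite on the tangent space $\{\bm{v}\in\mathbb{R}^m:\sum_i v_i=0\}$ of $\bTheta$ with a spectral gap $\ge\phi_0>0$. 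A Banach fixed-point / implicit-function argument, using this spectral gap as the invertibility modulus, then produces the actual stable root $\widehat\btheta_{\star}=\btheta_{\star}+\mathcal{O}(\sup_{\bx}\Var(\xi_n(\bx))+\epsilon+1/m)$ of $h$ (this is where the perturbation theory of \citep{Eric} enters), and, after shrinking the neighborhood and $\phi_0$ slightly to absorb the Lipschitz remainder of the nonlinearity and of $r(\cdot)$, yields $\langle h(\btheta),\btheta-\widehat\btheta_{\star}\rangle\le-\phi\|\btheta-\widehat\btheta_{\star}\|^2$ for a fresh constant $\phi>0$ uniformly on that neighborhood.

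The main obstacle is the middle stage: unlike the globally linearly stable system of \cite{CSGLD}, the term $(\theta(i)C_{\btheta})^{\zeta}$ is genuinely nonlinear and couples all coordinates through $C_{\btheta}$ and $\widetilde Z_{\zeta,\btheta}$, so only a local statement is available and one must pin down the spectral gap of $\nabla h$ on the simplex tangent space quantitatively rather than merely qualitatively. A closely related subtlety is that $J_{\widetilde U}(\bx)$ enters through a discontinuous indicator of the \emph{noisy} energy; turning a crude bound into the clean $\mathcal{O}(\sup_{\bx}\Var(\xi_n(\bx)))$ term requires the smoothing-against-the-noise-law argument together with the near-boundary density regularity of $U(\bx)$, which is precisely what Assumptions A1--A4 are designed to provide. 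Once the spectral gap and the three perturbation bounds are in hand, assembling the dissipativity inequality is routine.
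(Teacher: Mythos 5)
Your proposal is correct and follows essentially the same route as the paper's proof of Lemma \ref{convex_appendix}: the same three-way decomposition of the mean-field bias (SGLD discretization $\mathcal{O}(\epsilon)$, stochastic-energy/index bias $\mathcal{O}(\sup_{\bx}\Var(\xi_n(\bx)))$ as in Lemma \ref{bias_in_SA}, and piecewise-constant approximation $\mathcal{O}(1/m)$), the same identification of $\btheta_{\star}\propto\btheta_{\infty}^{1/\zeta}$ from the $I_1$ term, the same mean-value-theorem linearization of $\theta\mapsto(\theta(i)C_{\btheta})^{\zeta}$ near $\btheta_{\star}$, and the same perturbation-theoretic relocation of the equilibrium to $\widehat\btheta_{\star}$ before concluding the dissipativity inequality. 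Your phrasing via a spectral gap of $\nabla h$ on the simplex tangent space and an implicit-function/Banach argument is just a mild repackaging of the paper's coordinate-wise factorization $h_i(\btheta)=\widehat Z_{\zeta,\theta(i)}^{-1}(1-\mathcal{O}(\varepsilon))(\widehat\theta_{\star}(i)-\theta(i))$ and its appeal to perturbation theory.
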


By justifying the drift conditions of the adaptive transition kernel and relevant smoothness properties, we can prove the existence and regularity properties of the solution of the Poisson's equation in Lemma \ref{lyapunov_ori} in the supplementary material. In what follows, we can control the fluctuations in stochastic approximation and eventually yields the $L^2$ convergence. 
\begin{lemma}[$L^2$ convergence rate, informal version of Lemma \ref{latent_convergence_appendix}]
\label{latent_convergence_main}
Given standard Assumptions A1-A5. $\btheta_k$ converges to $\widehat\btheta_{\star}$,
where $\widehat\btheta_{\star}=\btheta_{\star}+\mathcal{O}\left(\sup_{\bx}\Var(\xi_n(\bx))+\epsilon+\frac{1}{m}\right)$, such that
\begin{equation*}
    \E\left[\|\bm{\theta}_{k}-\widehat\btheta_{\star}\|^2\right]= \mathcal{O}\left(\omega_{k}\right).
\end{equation*}
\end{lemma}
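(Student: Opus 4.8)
The plan is to treat this as a standard stochastic approximation (SA) convergence argument, but in the mini-batch setting where the transition kernel is \emph{adaptive} (it depends on the current iterate $\btheta_k$) and where the drift of the mean-field $h(\btheta)$ only satisfies a \emph{local} stability estimate of the form established in Lemma \ref{convex_main}. First I would fix the notation: write $e_k := \btheta_k - \widehat\btheta_\star$, decompose the SA update (\ref{SA_step}) as
\begin{equation*}
\btheta_{k+1} = \btheta_k + \omega_{k+1}\, h(\btheta_k) + \omega_{k+1}\bigl(\widetilde{\bH}(\btheta_k,\bx_{k+1}^{\pop P}) - h(\btheta_k)\bigr),
\end{equation*}
and split the noise term $\widetilde{\bH}(\btheta_k,\bx_{k+1}^{\pop P}) - h(\btheta_k)$ into a martingale-difference part plus a bias coming from the fact that the chain has not equilibrated to $\varpi_{\btheta_k}$. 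The standard device here is the Poisson equation: let $g_{\btheta}$ solve $(I - T_\btheta)g_\btheta = \widetilde{\bH}(\btheta,\cdot) - h(\btheta)$ for the relevant transition kernel $T_\btheta$; the existence plus the Lipschitz-in-$\btheta$ regularity of $g_\btheta$ is exactly what Lemma \ref{lyapunov_ori} (cited in the excerpt) provides. Telescoping with $g_{\btheta_k}$ converts the bias into terms controlled by $\omega_k - \omega_{k+1}$, by $\|\btheta_{k+1}-\btheta_k\| = \mathcal{O}(\omega_{k+1})$, and by a boundary martingale term.

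The core recursion I would then derive is, after taking $\E[\|e_{k+1}\|^2 \mid \mathcal{F}_k]$ and expanding the square,
\begin{equation*}
\E\bigl[\|e_{k+1}\|^2\bigr] \le (1 - 2\phi\,\omega_{k+1} + C\omega_{k+1}^2)\,\E\bigl[\|e_k\|^2\bigr] + C'\omega_{k+1}^2 + (\text{Poisson / bias terms}),
\end{equation*}
where the crucial $-2\phi\,\omega_{k+1}\E\|e_k\|^2$ comes from the inner product $\langle h(\btheta_k), e_k\rangle \le -\phi\|e_k\|^2$ of Lemma \ref{convex_main}, the $C\omega_{k+1}^2\E\|e_k\|^2$ from the Lipschitz bound on $h$, and the $C'\omega_{k+1}^2$ from the second moment of the (variance-reduced) martingale increment — here the factor $\tfrac1P$ in $\widetilde{\bH}$ shrinks that constant, which is what ultimately makes ICSGLD more efficient, though for \emph{this} lemma we only need boundedness. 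Handling the Poisson/bias terms requires summation by parts and the assumption (part of A5, presumably) that $\omega_k$ is a slowly-varying step size, e.g. $\omega_k \asymp k^{-\alpha}$ with $\tfrac12 < \alpha \le 1$, so that $\sum_k |\omega_k - \omega_{k+1}|$-type quantities are dominated by $\omega_k$ itself; one also needs the iterates to remain in the local region where Lemma \ref{convex_main} applies, which I would either assume via the ``properly initialized $\btheta$'' hypothesis or enforce by a truncation/projection argument on $\bTheta$ (the space $\bTheta$ is already compact, which helps).

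Finally, to pass from the one-step recursion to $\E[\|e_k\|^2] = \mathcal{O}(\omega_k)$ I would invoke a standard SA lemma (of the type in Benveniste--Métivier--Priouret or Fort's work on adaptive MCMC): given $a_{k+1} \le (1-2\phi\omega_{k+1}+o(\omega_{k+1}))a_k + \mathcal{O}(\omega_{k+1}^2)$ with $\sum\omega_k = \infty$ and $\omega_k$ regular, one gets $a_k = \mathcal{O}(\omega_k)$. I expect the main obstacle to be \emph{not} the algebra of the recursion but the control of the Poisson-equation bias under the adaptive, non-reversible, mini-batch kernel: one must verify the drift/minorization (Lyapunov) conditions for $T_\btheta$ uniformly in $\btheta$, show $g_\btheta$ and $T_\btheta g_\btheta$ are Lipschitz in $\btheta$ in the right norm, and make sure all of this survives the stochastic-gradient noise $\xi_n(\bx)$ — which is precisely why the limit is the perturbed point $\widehat\btheta_\star = \btheta_\star + \mathcal{O}(\sup_\bx \Var(\xi_n(\bx)) + \epsilon + \tfrac1m)$ rather than $\btheta_\star$ itself. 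That verification is where I would spend most of the effort, and it is deferred to Lemma \ref{lyapunov_ori} in the supplement.
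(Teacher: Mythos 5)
Your proposal is correct and follows essentially the same route as the paper: the paper also centers the analysis at the biased equilibrium $\widehat\btheta_{\star}$, feeds the local stability inequality of Lemma \ref{convex_appendix}, the Poisson-equation regularity of Lemma \ref{lyapunov_ori}, and the step-size conditions of Assumption \ref{ass_step_size} into the standard stochastic-approximation machinery, except that it simply invokes Theorem 24 of \citet{Albert90} as a black box rather than re-deriving the one-step recursion and its summation as you sketch. The ingredients you identify (martingale/bias splitting via the Poisson equation, the $-\phi\|\btheta_k-\widehat\btheta_{\star}\|^2$ drift, and confinement to the neighborhood where the local stability holds via proper initialization and compactness of $\bTheta$) are exactly the hypotheses the paper verifies before citing that theorem.
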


The result differs from Theorem 1 of \cite{CSGLD} in that the biased fixed point $\widehat\btheta_{\star}$ instead of $\btheta_{\star}$ is treated as the equilibrium of the continuous system, which provides us a user-friendly proof. Similar techniques have been adopted by \citet{Alain17, Xu18}. Although the global stability \citep{CSGLD} may be sacrificed when $\zeta\neq 1$ based on Eq.(\ref{new_randomF}), $\btheta_{\star}\propto \btheta_{\infty}^{\frac{1}{\zeta}}$ is much easier to estimate numerically for any $i$ that yields $0<\btheta_{\infty}(i)\ll 1$ based on a large $\zeta>1$.

\subsection{Asymptotic normality}

 To study the asymptotic behavior of $\omega_k^{-\frac{1}{2}}(\btheta_k-\widehat\btheta_{\star})$, where $\widehat\btheta_{\star}$ is the equilibrium point s.t. $\widehat\btheta_{\star}=\btheta_{\star}+\mathcal{O}\left(\Var(\xi_n(\bx))+\epsilon+\frac{1}{m}\right)$, we consider  a fixed step size $\omega$ in the SA step for ease of explanation. Let $\bar\btheta_t$ denote the solution of the mean-field system in continuous time ($\bar\btheta_0=\btheta_0$), and rewrite the single-chain SA step (\ref{SA_step}) as follows
\begin{equation*}
\begin{split}
\label{ga_main}
    \btheta_{k+1}-\bar\btheta_{(k+1)\omega}&=\btheta_k-\bar\btheta_{k\omega}+\omega\left(H(\btheta_{k}, \bx_{k+1})-H(\bar\btheta_{k\omega}, \bx_{k+1})\right)\\
    &\quad+\omega\left(H(\bar\btheta_{k\omega}, \bx_{k+1})-h(\bar\btheta_{k\omega})\right)-\left(\bar\btheta_{(k+1)\omega}-\bar\btheta_{k\omega}-\omega  h(\bar\btheta_{k\omega})\right).
\end{split}
\end{equation*}

Further, we set $\widetilde\btheta_{k\omega}:= \omega^{-\frac{1}{2}}(\btheta_{k}-\bar\btheta_{k\omega})$. Then the stochastic approximation differs from the mean field system in that
\begin{equation*}
\footnotesize
\begin{split}
    \widetilde\btheta_{(k+1)\omega}&= \underbrace{\omega^{\frac{1}{2}}\sum_{i=0}^k \left(H(\btheta_{i}, \bx_{i+1})-H(\bar\btheta_{i\omega}, \bx_{i+1})\right)}_{\text{I: perturbations}}+\omega^{\frac{1}{2}}\sum_{i=0}^k \underbrace{\left(H(\bar\btheta_{i\omega}, \bx_{i+1})-h(\bar\btheta_{i\omega})\right)}_{\text{II: martingale} \ \mathcal{M}_i}-\omega^{\frac{1}{2}}\cdot\text{remainder}\\
    &\approx \omega^{\frac{1}{2}}\sum_{i=0}^k h_{\btheta}(\btheta_{i\omega})  \underbrace{(\btheta_i-\bar\btheta_{i\omega})}_{\approx \omega^{\frac{1}{2}} \widetilde \btheta_{i\omega}}+\omega^{\frac{1}{2}}\sum_{i=0}^k \mathcal{M}_i\approx \int_{0}^{(k+1)\omega}h_{\btheta}(\bar\btheta_{s})\widetilde\btheta_{s}ds+\int_0^{(k+1)\omega} \bR^{\frac{1}{2}}(\bar\btheta_s)d\bW_s,
\end{split}
\end{equation*}
where $h_{\btheta}(\btheta):=\frac{d}{d\btheta} h(\btheta)$ is a matrix, $\bW\in\mathbb{R}^m$ is a standard Brownian motion, the last term follows from a certain central limit theorem \citep{Albert90} and $\bR$ denotes the covariance matrix of the random-field function s.t. $\bR(\btheta):=\sum_{k=-\infty}^{\infty} \cov_{\btheta}(H(\btheta, \bx_k), H(\btheta, \bx_0))$. 

We expect the weak convergence of $\bU_k$ to the stationary distribution of a diffusion
\begin{equation}
\label{slde_main}
    d\bU_t=h_{\btheta}(\btheta_t) \bU_t dt + \bR^{1/2}(\btheta_t)d\bW_t,
\end{equation}
where $\bU_t=\omega_t^{-1/2}(\btheta_t-\widehat\btheta_{\star})$.  Given that $\btheta_t$ converges to $\widehat\btheta_{\star}$ sufficiently fast and the local linearity of $h_{\btheta}$, the diffusion (\ref{slde_main}) resembles the Ornstein–Uhlenbeck process and yields the following solution
\begin{equation*}
\small
    \bU_t\approx e^{-th_{\btheta}(\widehat\btheta_{\star})}\bU_0+\int_0^t e^{-(t-s)h_{\btheta}(\widehat\btheta_{\star})}\circ \bR(\widehat\btheta_{\star}) d\bW_s.
\end{equation*}
Then we have the following theorem, whose  formal proof is given in section \ref{proof_theorem_1}.
\begin{theorem}[Asymptotic Normality]
\label{Asymptotic}
Assume Assumptions A1-A5 (given in the supplementary material) hold. We have the following weak convergence
\begin{equation*}
\begin{split}
    \omega_k^{-1/2}(\btheta_k-\widehat\btheta_{\star})\Rightarrow\mathcal{N}(0, \bSigma), \text{ where  } \bSigma=\int_0^{\infty} e^{t h_{\btheta_{\star}}}\circ \bR\circ  e^{th^{\top}_{\btheta_{\star}}}dt, h_{\btheta_{\star}}=h_{\btheta}(\widehat\btheta_{\star}).
\end{split}
\end{equation*}
\end{theorem}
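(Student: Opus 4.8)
The plan is to establish the weak convergence of $\omega_k^{-1/2}(\btheta_k - \widehat\btheta_\star)$ by passing to a suitable diffusion limit, following the classical recipe for central limit theorems in stochastic approximation (as in Benveniste--M\'etivier--Priouret or Kushner--Yin). First I would decompose the rescaled error $\widetilde\btheta_{k\omega} = \omega^{-1/2}(\btheta_k - \bar\btheta_{k\omega})$ into the three pieces already displayed in the excerpt: a drift-linearization term, a martingale term, and a higher-order remainder. The drift term is handled by the local linearity of $h_{\btheta}$ established via Lemma~\ref{convex_main} (the mean value theorem argument giving $h_i(\btheta) \propto \theta_\star(i) - \theta(i) + \text{perturbations}$), which lets us replace $H(\btheta_i, \bx_{i+1}) - H(\bar\btheta_{i\omega}, \bx_{i+1})$ by $h_{\btheta}(\bar\btheta_{i\omega})(\btheta_i - \bar\btheta_{i\omega})$ up to controllable error. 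The martingale term $\mathcal{M}_i = H(\bar\btheta_{i\omega}, \bx_{i+1}) - h(\bar\btheta_{i\omega})$ requires a martingale functional central limit theorem; here I would invoke the regularity of the solution of Poisson's equation (Lemma~\ref{lyapunov_ori}) to decompose $\mathcal{M}_i$ into a genuine martingale-difference part plus a telescoping co-boundary that vanishes in the limit, and then verify the Lindeberg condition and convergence of the predictable quadratic variation to $\int_0^t \bR(\bar\btheta_s)\,ds$, where $\bR(\btheta) = \sum_{k=-\infty}^{\infty} \cov_{\btheta}(H(\btheta,\bx_k), H(\btheta,\bx_0))$ is the asymptotic covariance.

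Second, I would show that the remainder term — coming from the discretization error $\bar\btheta_{(k+1)\omega} - \bar\btheta_{k\omega} - \omega h(\bar\btheta_{k\omega})$ and the second-order Taylor remainder in the drift linearization — is $o_P(1)$ after multiplication by $\omega^{1/2}$; this uses the $L^2$ bound $\E\|\btheta_k - \widehat\btheta_\star\|^2 = \mathcal{O}(\omega_k)$ from Lemma~\ref{latent_convergence_main}, together with smoothness of $h$, so that the accumulated remainder over $O(1/\omega)$ steps is $O(\omega^{1/2})$ and hence negligible. Combining the three estimates, the tightness of $\{\widetilde\btheta_{\lfloor t/\omega \rfloor \omega}\}$ in the Skorokhod space follows from the martingale bounds, and any subsequential limit $\bU_t$ solves the linear SDE $d\bU_t = h_{\btheta}(\bar\btheta_t)\bU_t\,dt + \bR^{1/2}(\bar\btheta_t)\,d\bW_t$ stated in (\ref{slde_main}).

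Third, I would take $t \to \infty$: since $\bar\btheta_t \to \widehat\btheta_\star$ and $h_{\btheta}$ is locally linear and (by the dissipativity $\langle h(\btheta), \btheta - \widehat\btheta_\star\rangle \le -\phi\|\btheta - \widehat\btheta_\star\|^2$) has spectrum with negative real part at $\widehat\btheta_\star$, the SDE is asymptotically an Ornstein--Uhlenbeck process with drift matrix $h_{\btheta_\star} = h_{\btheta}(\widehat\btheta_\star)$ and diffusion $\bR(\widehat\btheta_\star)$. Its unique stationary law is the centered Gaussian with covariance $\bSigma = \int_0^\infty e^{t h_{\btheta_\star}} \circ \bR \circ e^{t h_{\btheta_\star}^\top}\,dt$, the solution of the Lyapunov equation $h_{\btheta_\star}\bSigma + \bSigma h_{\btheta_\star}^\top + \bR = 0$. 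A standard ergodic-averaging / exchange-of-limits argument (letting the time horizon and $\omega \to 0$ jointly along $t = k\omega$) then upgrades the diffusion-limit plus stationarity to the stated weak convergence $\omega_k^{-1/2}(\btheta_k - \widehat\btheta_\star) \Rightarrow \mathcal{N}(0, \bSigma)$.

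The main obstacle I anticipate is controlling the martingale term rigorously in the \emph{mini-batch} setting: the noise $\xi_n(\bx)$ in the stochastic energy estimator enters both through the index $J_{\widetilde U}(\bx)$ (a discontinuous functional) and through the drift of the $\bx$-chain, so the transition kernel is adaptive and only \emph{locally} stable rather than globally contractive. Establishing the drift/minorization conditions for this adaptive kernel uniformly over the local neighborhood of $\widehat\btheta_\star$, and thereby the solvability and Lipschitz regularity of the associated Poisson equation with constants that do not blow up, is the delicate technical core; the subsequent CLT and Lyapunov-equation bookkeeping are comparatively routine once those a priori estimates and the $L^2$ rate from Lemma~\ref{latent_convergence_main} are in hand.
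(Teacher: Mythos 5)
Your proposal is sound and reaches the right result, but it takes a more hands-on route than the paper. You propose to prove the diffusion limit directly: decompose the rescaled error into drift-linearization, martingale, and remainder terms, run a martingale functional CLT (Poisson-equation co-boundary trick, Lindeberg condition, convergence of the quadratic variation), prove tightness in Skorokhod space, identify the limiting linear SDE, and then pass to the Ornstein--Uhlenbeck stationary law solving the Lyapunov equation for $\bSigma$. The paper instead uses this picture only as a heuristic (its Section 4.2 display) and, in the appendix, reduces the whole argument to verifying the sufficient conditions (C1)--(C3) of a cited stochastic-approximation CLT (Pelletier, 1998, stated as Lemma \ref{sufficiency}): it rewrites the recursion via the solution of Poisson's equation (Lemma \ref{lyapunov_ori}), shifts to the auxiliary process $\ddot\btheta_k=\btheta_k+\omega_{k+1}\Pi_{\btheta_k}\mu_{\btheta_k}(\bx_k)$ to absorb the telescoping co-boundary, checks the martingale-moment and perturbation conditions, establishes $\E[\bm{e}_{k+1}\bm{e}_{k+1}^{\top}|\mathcal{F}_k]\to\bR$ via Lemma \ref{covariance_estimator} (ergodic averages for the adaptive SGLD kernel, with the adaptive bias controlled by the $L^2$ rate of Lemma \ref{latent_convergence_appendix}), and finishes with Slutsky. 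The trade-off: the paper's route avoids proving tightness and limit identification by outsourcing them to the cited theorem, while yours is self-contained but must carry out that weak-convergence machinery explicitly; your quadratic-variation step is exactly where the paper's Lemma \ref{covariance_estimator} is needed, and your flagged ``delicate core'' (drift/minorization and Poisson regularity for the adaptive, mini-batch kernel) is indeed where the paper spends its effort (Lemma \ref{lyapunov_ori}). One adjustment you should make: the theorem is stated for decreasing step sizes $\omega_k=\mathcal{O}(k^{-\alpha})$, $\alpha\in(0.5,1]$, not a constant $\omega$, so your ``$t=k\omega$, $\omega\to 0$'' exchange of limits must be replaced by the interpolated time scale $\sum_i\omega_i$ (as in Pelletier/Benveniste et al.), which also produces the extra drift correction $\widehat\xi\bI$ with $\widehat\xi=\lim_k(\omega_k^{0.5}-\omega_{k+1}^{0.5})/\omega_k^{1.5}$; this vanishes for $\alpha<1$, recovering $h_{\btheta_{\star}}=h_{\btheta}(\widehat\btheta_{\star})$ as in the statement, but it is a step your sketch currently glosses over.
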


\subsection{Interacting parallel chains are more efficient}

For clarity, we first denote an estimate of $\btheta$ based on ICSGLD with $P$ interacting parallel chains by $\btheta_k^{P}$ and denote the estimate based on a single-long-chain CSGLD by $\btheta_{kP}$.

Note that Theorem \ref{Asymptotic} holds for any step size $\omega_k=\mathcal{O}(k^{-\alpha})$, where $\alpha\in (0.5, 1]$. If we simply run a single-chain CSGLD algorithm with $P$ times of iterations, by  Theorem \ref{Asymptotic},  
\begin{equation*}
\begin{split}
    \omega_{kP}^{-1/2}(\btheta_{kP}-\widehat\btheta_{\star})\Rightarrow\mathcal{N}(0, \bSigma).
\end{split}
\end{equation*}
As to ICSGLD, since the covariance $\bSigma$ relies on $\bR$, which depends on the covariance of the martingale $\{\mathcal{M}_i\}_{i\geq 1}$, the conditional independence of $\bx^{(1)},\bx^{(2)},\cdots, \bx^{(P)}$ naturally results in an efficient variance reduction such that 

\begin{corollary}[Asymptotic Normality for ICSGLD]
 Assume the same assumptions. For ICSGLD with $P$ interacting chains, we have the following weak convergence
\begin{equation*}
\begin{split}
    \omega_k^{-1/2}(\btheta_k^P-\widehat\btheta_{\star})\Rightarrow\mathcal{N}(0, \bSigma/P).
\end{split}
\end{equation*}
 \end{corollary}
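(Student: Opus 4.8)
The plan is to rerun the asymptotic-normality argument behind Theorem~\ref{Asymptotic} essentially verbatim, isolating the single place where running $P$ interacting chains changes the bookkeeping: the conditional quadratic variation of the martingale part, which acquires a factor $1/P$. First I would center the ICSGLD recursion at the biased fixed point, $\widetilde\btheta_{k\omega}^P:=\omega^{-1/2}(\btheta_k^P-\bar\btheta_{k\omega})$ with $\bar\btheta_t$ the solution of the mean-field ODE, and reproduce the decomposition preceding Theorem~\ref{Asymptotic}, now with $\widetilde\bH(\btheta,\bx^{\pop P})=\frac1P\sum_{p=1}^{P}\widetilde H(\btheta,\bx^{(p)})$:
\begin{equation*}
\widetilde\btheta_{(k+1)\omega}^P=\underbrace{\omega^{1/2}\sum_{i=0}^{k}\bigl(\widetilde\bH(\btheta_i^P,\bx_{i+1}^{\pop P})-\widetilde\bH(\bar\btheta_{i\omega},\bx_{i+1}^{\pop P})\bigr)}_{\text{I}}+\underbrace{\omega^{1/2}\sum_{i=0}^{k}\bigl(\widetilde\bH(\bar\btheta_{i\omega},\bx_{i+1}^{\pop P})-h(\bar\btheta_{i\omega})\bigr)}_{\text{II}}-\omega^{1/2}\,\mathrm{remainder}.
\end{equation*}
Term~I is handled exactly as in the single-chain case: a mean value theorem turns it into $\omega^{1/2}\sum_i h_{\btheta}(\bar\btheta_{i\omega})(\btheta_i^P-\bar\btheta_{i\omega})$ up to an asymptotically negligible error (its leading coefficient is still $h_{\btheta}(\btheta)$ because each chain shares the stationary measure $\varpi_{\btheta}$), so it feeds the \emph{same} linear drift $h_{\btheta}(\widehat\btheta_{\star})$ into the limiting diffusion; the remainder is bounded by the same smoothness estimates together with the $L^2$ rate $\E\|\btheta_k^P-\widehat\btheta_{\star}\|^2=\mathcal{O}(\omega_k)$ of Lemma~\ref{latent_convergence_main}, which holds verbatim for the interacting scheme.

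The crux is Term~II. I would use the solution $\mu_{\btheta}$ of Poisson's equation $\mu_{\btheta}-\Pi_{\btheta}\mu_{\btheta}=\widetilde H(\btheta,\cdot)-h(\btheta)$ for the single-chain adaptive kernel $\Pi_{\btheta}$ (existence and regularity from Lemma~\ref{lyapunov_ori}); the same $\mu_{\btheta}$ serves every chain since all chains share $\btheta$ and the same transition mechanism. The usual Poisson splitting turns Term~II into $\omega^{1/2}\sum_{i=0}^{k}\mathcal{M}_i^P$ plus telescoping and higher-order pieces, where, writing $\bm{v}_i^{(p)}:=\mu_{\bar\btheta_{i\omega}}(\bx_{i+1}^{(p)})-(\Pi_{\bar\btheta_{i\omega}}\mu_{\bar\btheta_{i\omega}})(\bx_i^{(p)})$,
\begin{equation*}
\mathcal{M}_i^P=\frac1P\sum_{p=1}^{P}\bm{v}_i^{(p)}
\end{equation*}
is a martingale-difference sequence for the joint filtration $\mathcal{F}_i=\sigma(\btheta_0,\bx_{0:i}^{\pop P})$. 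Invoking the conditional independence of $\bx_{i+1}^{(1)},\dots,\bx_{i+1}^{(P)}$ given $\mathcal{F}_i$ (each chain advances by its own SGLD step with independent Gaussian noise, driven by the common $\btheta_i$), the $\bm{v}_i^{(p)}$ are conditionally i.i.d., hence
\begin{equation*}
\E\bigl[\mathcal{M}_i^P(\mathcal{M}_i^P)^{\top}\mid\mathcal{F}_i\bigr]=\frac1{P^2}\sum_{p=1}^{P}\E\bigl[\bm{v}_i^{(p)}(\bm{v}_i^{(p)})^{\top}\mid\mathcal{F}_i\bigr]=\frac1P\,\E\bigl[\bm{v}_i^{(1)}(\bm{v}_i^{(1)})^{\top}\mid\mathcal{F}_i\bigr],
\end{equation*}
which converges, as $\bar\btheta_{i\omega}\to\widehat\btheta_{\star}$, to $\tfrac1P\bR(\widehat\btheta_{\star})$ with $\bR(\btheta)=\sum_{k=-\infty}^{\infty}\cov_{\btheta}(\widetilde H(\btheta,\bx_k),\widetilde H(\btheta,\bx_0))$ the \emph{same} effective covariance appearing in Theorem~\ref{Asymptotic} (the identity $\E_{\btheta}[(\mu_{\btheta}-\Pi_{\btheta}\mu_{\btheta})^{\otimes2}]=\bR(\btheta)$ holds per chain). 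Cross-chain and cross-time covariances vanish because the chains are mutually independent, so the quadratic variation of $\omega^{1/2}\sum_i\mathcal{M}_i^P$ is exactly $1/P$ times its single-chain counterpart and no extra terms arise.

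Feeding the unchanged drift $h_{\btheta}(\widehat\btheta_{\star})$ and the scaled noise covariance $\bR(\widehat\btheta_{\star})/P$ into the same martingale/functional CLT used for Theorem~\ref{Asymptotic} (e.g.\ the invariance principle of \citet{Albert90}; the conditional Lindeberg condition follows from the same uniform moment bounds, with a harmless extra $1/P$), $\bU_k^P=\omega_k^{-1/2}(\btheta_k^P-\widehat\btheta_{\star})$ converges weakly to the stationary law of $d\bU_t=h_{\btheta}(\widehat\btheta_{\star})\bU_t\,dt+(\bR(\widehat\btheta_{\star})/P)^{1/2}d\bW_t$, which is $\mathcal{N}(0,\bSigma/P)$ because $\int_0^{\infty}e^{t h_{\btheta_{\star}}}\circ(\bR/P)\circ e^{t h_{\btheta_{\star}}^{\top}}dt=\bSigma/P$ by linearity of the Lyapunov integral in $\bR$.

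I expect the main obstacle to be not the $1/P$ variance computation, which is the clean step above, but making the reduction to Theorem~\ref{Asymptotic} rigorous at the ensemble level: one must verify that Term~I and the Poisson remainder are asymptotically negligible uniformly, which rests on drift conditions for the \emph{joint} adaptive kernel on $\MX^{P}$ and on Lemma~\ref{latent_convergence_main} for ICSGLD (both already in place), and that the hypotheses of the martingale CLT (convergence of the quadratic variation, conditional Lindeberg) survive the averaging when phrased with respect to the joint filtration rather than a per-chain one. Since $P$ is fixed, all of these descend from the single-chain estimates, but they need to be restated at the level of the product chain.
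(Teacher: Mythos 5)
Your proposal is correct and follows essentially the same route as the paper: the paper's justification of this corollary is precisely the observation that the interacting random-field function is a Monte Carlo average of conditionally independent per-chain terms, so the limiting martingale covariance $\bR$ in the proof of Theorem~\ref{Asymptotic} is replaced by $\bR/P$ while the drift $h_{\btheta}(\widehat\btheta_{\star})$ is unchanged, and $\bSigma$ scales to $\bSigma/P$ by linearity of the Lyapunov integral. The only nuance is that the $\bm{v}_i^{(p)}$ are conditionally independent but not identically distributed given the joint filtration (the chains occupy different states), yet each per-chain conditional covariance converges to the same $\bR(\widehat\btheta_{\star})$ by the covariance-estimator lemma, which is all your argument actually needs.
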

That is, under a similar computational budget, we have $\frac{\|\Var(\btheta_{kP}-\widehat\btheta_{\star})\|_{\text{F}}}{\|\Var(\btheta_k^P-\widehat{\btheta}_*)\|_{\text{F}}}= \frac{w_{kP}}{w_k/P}\approx P^{1-\alpha}$.

 \begin{corollary}[Efficiency]
Given a decreasing step size $\omega_k=\mathcal{O}(k^{-\alpha})$, where $0.5 < \alpha<1$, \emph{ICSGLD is asymptotically more efficient than the single-chain CSGLD with an equivalent training cost.} 
\end{corollary}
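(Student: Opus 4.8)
The plan is to obtain the corollary as an immediate consequence of Theorem~\ref{Asymptotic}, of the companion Corollary (Asymptotic Normality for ICSGLD), and of a short cost-accounting argument. First I would pin down what ``equivalent training cost'' means: one iteration of ICSGLD with $P$ chains evaluates $P$ stochastic gradients/energies on mini-batches of size $n$ (one per chain), and the only extra overhead is broadcasting the length-$m$ vector $\btheta_k$ together with $O(P)$ partition indices, which is negligible because $m\ll d$. Hence running ICSGLD for $k$ iterations matches, up to lower-order communication terms, the cost of running single-chain CSGLD for $kP$ iterations, so the fair comparison is $\btheta_k^{P}$ against $\btheta_{kP}$, both of which are centred at the same biased equilibrium $\widehat\btheta_{\star}=\btheta_{\star}+\mathcal{O}(\sup_{\bx}\Var(\xi_n(\bx))+\epsilon+\tfrac1m)$, a point that depends on neither $P$ nor $k$.

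Second I would read off the two limiting covariances. Theorem~\ref{Asymptotic} with step size $\omega_{kP}=\mathcal{O}((kP)^{-\alpha})$ gives $\omega_{kP}^{-1/2}(\btheta_{kP}-\widehat\btheta_{\star})\Rightarrow\mathcal{N}(0,\bSigma)$, hence $\Var(\btheta_{kP}-\widehat\btheta_{\star})=(1+o(1))\,\omega_{kP}\bSigma$. For the interacting estimator I would justify the $\bSigma/P$ scaling: the mean-field map $h(\btheta)$ is unchanged since the Monte Carlo average in~(\ref{monte_carlo_avg}) has the same expectation, while the martingale increment in the decomposition of $\widetilde\btheta_{k\omega}$ becomes $\bar{\mathcal{M}}_i=\frac1P\sum_{p=1}^{P}\big(H(\bar\btheta_{i\omega},\bx_{i+1}^{(p)})-h(\bar\btheta_{i\omega})\big)$; by the conditional independence of $\bx^{(1)},\dots,\bx^{(P)}$ given the shared $\btheta$, the cross-chain terms cancel and the asymptotic covariance of the random field contracts from $\bR(\btheta)$ to $\bR(\btheta)/P$. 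Feeding this into the same Ornstein--Uhlenbeck limit~(\ref{slde_main}), whose drift $h_{\btheta}$ is untouched, replaces $\bSigma$ by $\bSigma/P$, so $\Var(\btheta_k^{P}-\widehat\btheta_{\star})=(1+o(1))\,(\omega_k/P)\bSigma$.

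Third, the efficiency claim becomes a one-line ratio: with $\omega_k=\mathcal{O}(k^{-\alpha})$,
\[
\frac{\|\Var(\btheta_{kP}-\widehat\btheta_{\star})\|_{\mathrm F}}{\|\Var(\btheta_k^{P}-\widehat\btheta_{\star})\|_{\mathrm F}}=\frac{\omega_{kP}}{\omega_k/P}=P\cdot\frac{(kP)^{-\alpha}}{k^{-\alpha}}=P^{\,1-\alpha},
\]
and since $0.5<\alpha<1$ we have $1-\alpha\in(0,\tfrac12)$, so $P^{1-\alpha}>1$ for every $P\geq2$; equivalently, at a fixed budget $T$ the ICSGLD error covariance is $\mathcal{O}(P^{\alpha-1}T^{-\alpha})$ versus $\mathcal{O}(T^{-\alpha})$ for a single chain. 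This is exactly the asserted asymptotic efficiency, and the computation also shows why $\alpha=1$ must be excluded (then $P^{1-\alpha}=1$ and the gain disappears) and why $\alpha>0.5$ is required at all (it is the standard regime in which the martingale CLT behind Theorem~\ref{Asymptotic} applies).

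The step I expect to be the real obstacle is the second one: verifying that the asymptotic-normality machinery behind Theorem~\ref{Asymptotic} — the drift/Lyapunov conditions for the adaptive kernel (Lemma~\ref{lyapunov_ori}), the regularity of the Poisson solution, and the functional CLT of \citet{Albert90} — survives verbatim for the averaged field $\widetilde\bH$, so that $\omega^{1/2}\sum_i\bar{\mathcal{M}}_i$ converges to a Brownian motion with covariance exactly $\bR/P$. In particular one must check that the common adaptation of $\btheta_k$ across chains does not spoil the conditional independence used for the cross-term cancellation, and that the ``perturbation'' term~I and the remainder term in the expansion of $\widetilde\btheta_{(k+1)\omega}$ stay $o_{\mathbb P}(1)$ under the $P$-chain coupling. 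Once this is in place, the cost accounting and the final ratio are routine.
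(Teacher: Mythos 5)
Your proposal is correct and follows essentially the same route as the paper: equate the budget of $k$ iterations of $P$-chain ICSGLD with $kP$ iterations of single-chain CSGLD, invoke Theorem~\ref{Asymptotic} (with the conditional-independence argument giving the $\bSigma/P$ covariance for the interacting chains), and conclude from the ratio $\omega_{kP}/(\omega_k/P)\approx P^{1-\alpha}>1$ for $0.5<\alpha<1$. Your added remarks on why $\alpha=1$ is excluded and on verifying the CLT machinery for the averaged field are consistent with, and slightly more explicit than, the paper's treatment.
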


In practice, slowly decreasing step sizes are often preferred in stochastic algorithms for a better non-asymptotic performance \citep{Albert90}.

\section{Experiments}
\vskip -0.05in

\subsection{Landscape exploration on MNIST via the scalable random-field function}

This section shows how the novel random-field function (\ref{new_randomF}) facilitates the exploration of multiple modes on the MNIST dataset\footnote[4]{The random-field function \citep{CSGLD} requires an extra perturbation term as discussed in section D4 in the supplementary material \citep{CSGLD}; therefore it is not practically appealing in big data.}, while the standard methods, such as stochastic gradient descent (SGD) and SGLD, only \emph{get stuck in few local modes}. To simplify the experiments, we choose a large batch size of 2500 and only pick the first five classes, namely digits from 0 to 4. The \emph{learning rate is fixed} to 1e-6 and the temperature is set to $0.1$ \footnote[2]{Data augmentation implicitly leads to a more concentrated posterior \citep{Florian2020, Aitchison2021}.}. We see from Figure \ref{Uncertainty_estimation_mnist}{\textcolor{red}{(a)}} that both SGD and SGLD lead to fast decreasing losses. By contrast, ICSGLD yields fluctuating losses that traverse freely between high energy and low energy regions. As the particles stick in local regions, the penalty of re-visiting these zones keeps increasing until \emph{a negative learning rate is injected} to encourage explorations.

\begin{figure}[htbp]
\vspace{-0.05in}
\small
 \begin{tabular}{cccc}
(a) Training Loss & (b)  SGD & (c) SGLD & (d) ICSGLD \\ 
\includegraphics[height=1.2in,width=1.2in]{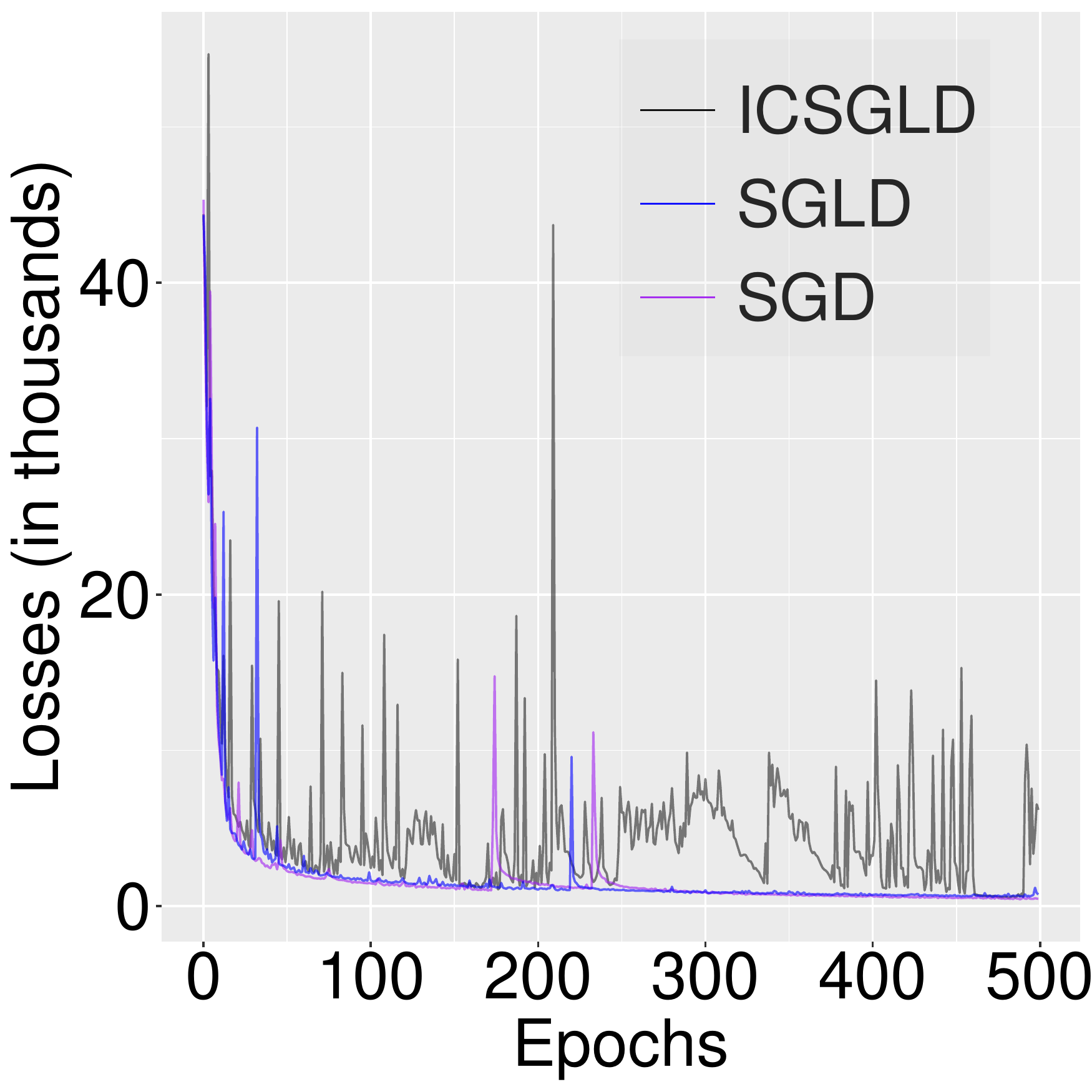} &
\includegraphics[height=1.25in,width=1.25in]{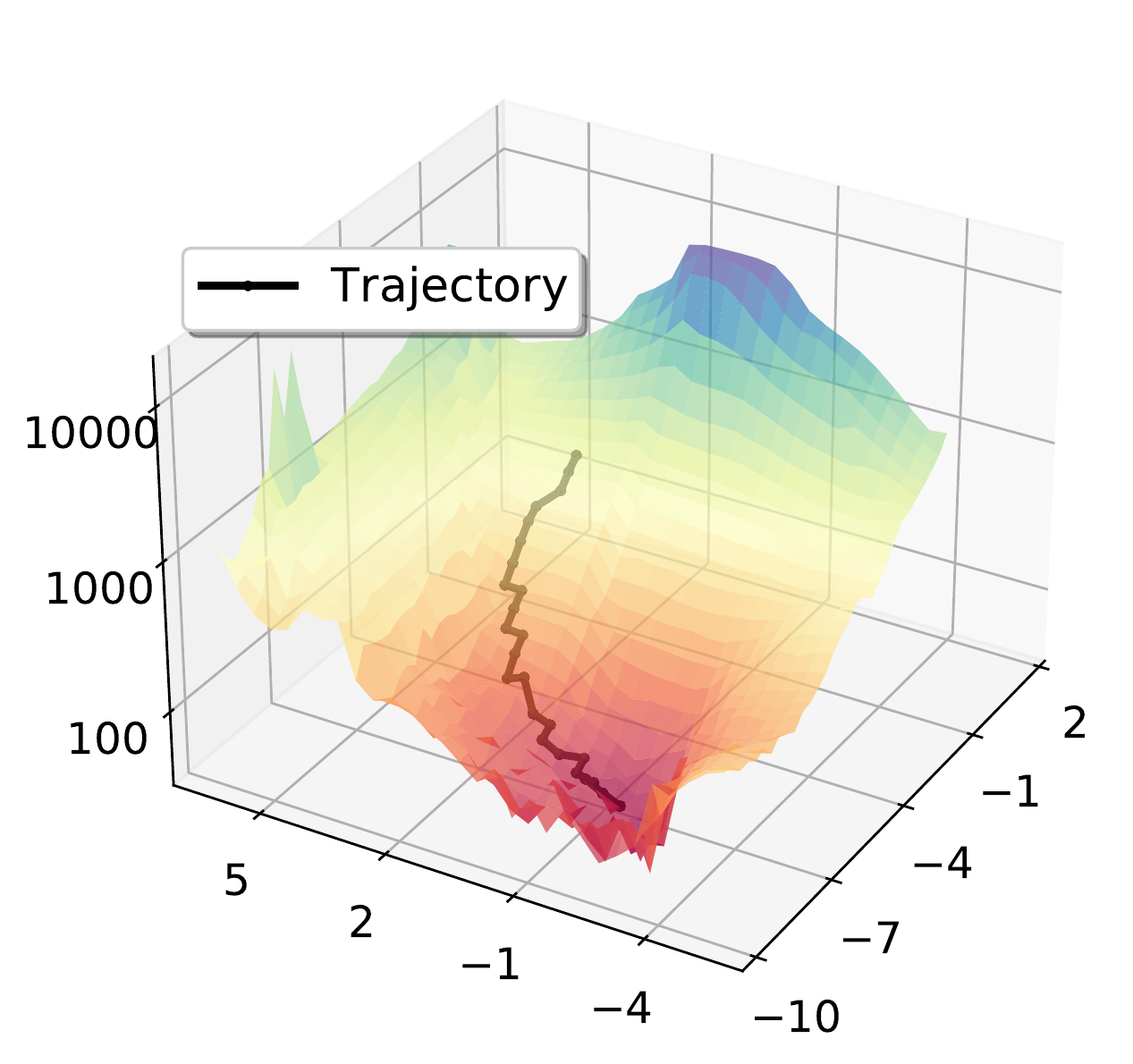} &
\includegraphics[height=1.25in,width=1.25in]{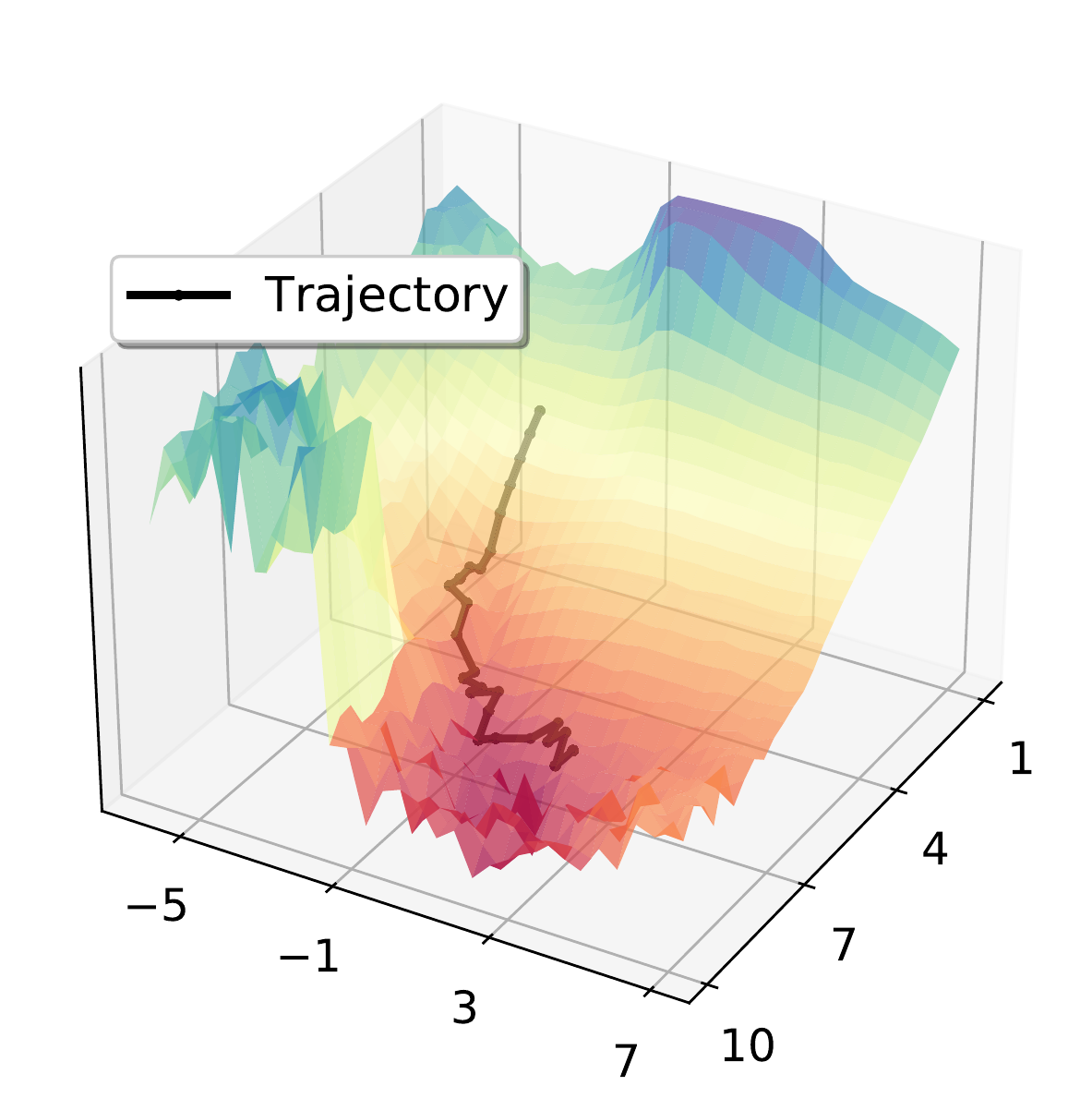} &
\includegraphics[height=1.25in,width=1.25in]{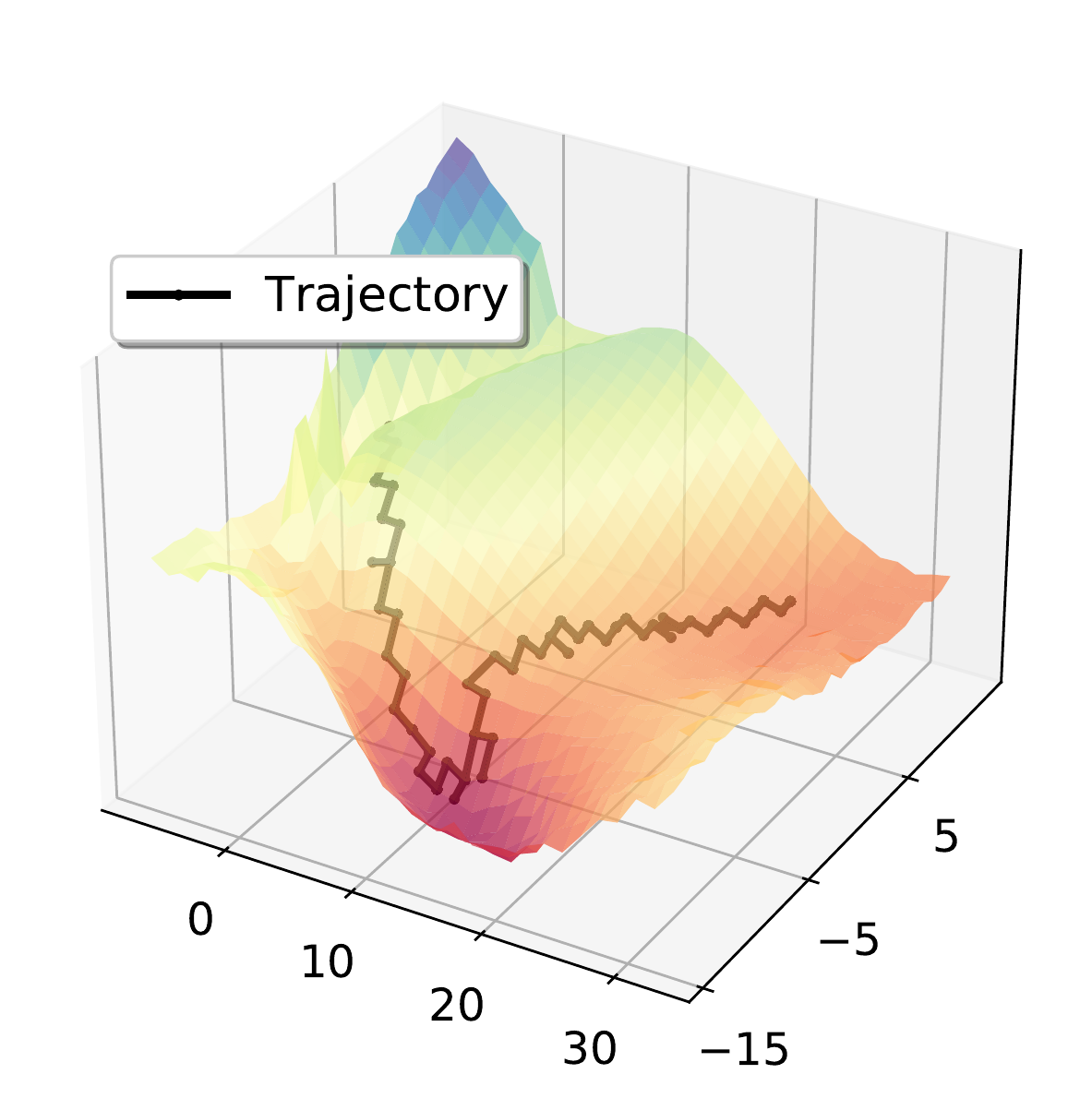}
\end{tabular}
\vspace{-0.1in}
  \caption{Visualization of mode exploration on a MNIST example based on different algorithms.} 
\label{Uncertainty_estimation_mnist}
\vspace{-0.05in}
\end{figure}

We conducted a singular value decomposition (SVD) based on the first two coordinates to visualize the trajectories: We first choose a domain that includes all the coordinates, then we recover the parameter based on the grid point and truncated values in other dimensions, and finally we fine-tune the parameters and present the approximate losses of the trajectories in Figure \ref{Uncertainty_estimation_mnist}{\textcolor{red}{(b-d)}}. We see SGD trajectories get stuck in a local region; SGLD \emph{exploits a larger region} but is still quite limited in the exploration; ICSGLD, instead, first converges to a local region and then \emph{escapes it once it over-visits this region}. This shows the strength of ICSGLD in the simulations of complex multi-modal distributions. More experimental details are presented in section \ref{mnist_appendix} of the supplementary material.

\subsection{Simulations of multi-modal distributions}

This section shows the acceleration effect of ICSGLD via a group of simulation experiments for a multi-modal distribution. The baselines 
include popular Monte Carlo methods such as 
 CSGLD, SGLD, cyclical SGLD (cycSGLD), replica exchange SGLD (reSGLD), and the particle-based SVGD.

The target multi-modal density is presented in Figure \ref{subfig:true}. Figure \ref{figure:simulation}{\textcolor{red}{(b-g)}} displays the empirical performance of all the testing methods: the vanilla SGLD with 5 parallel chains ($\times$P5) undoubtedly performs the worst in this example and fails to quantify the weights of each mode correctly; the single-chain cycSGLD with 5 times of iterations ($\times$T5) improves the performance but is still not accurate enough; reSGLD ($\times$P5) and SVGD ($\times$P5) have good performances, while the latter is quite costly in computations; ICSGLD ($\times$P5) does not only traverse freely over the rugged energy landscape, but also yields the most accurate approximation to the ground truth distribution. By contrast, CSGLD ($\times$T5) performs worse than ICSGLD and overestimates the weights on the left side. For the detailed setups, the study of convergence speed, and runtime analysis, we refer interested readers to section \ref{simulation_appendix} in the supplementary material. 
  \begin{figure*}[htbp]
  \vspace{-0.25in}
    \centering
    \subfigure[Truth]{
    \begin{minipage}[t]{0.13\linewidth}
    \centering
    \includegraphics[width=0.82in]{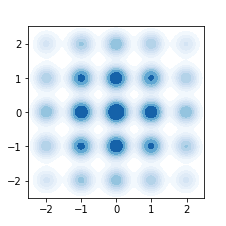}
    \label{subfig:true}
    \end{minipage}%
    }%
     \subfigure[SGLD]{
    \begin{minipage}[t]{0.13\linewidth}
    \centering
    \includegraphics[width=0.82in]{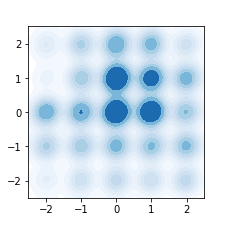}
    \label{subfig:pSGLD}
    \end{minipage}%
    }%
    \subfigure[cycSGLD]{
    \begin{minipage}[t]{0.13\linewidth}
    \centering
    \includegraphics[width=0.82in]{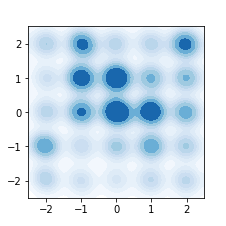}
    \label{subfig:cyclical SGLD}
    \end{minipage}%
    }%
    \subfigure[SVGD]{
    \begin{minipage}[t]{0.13\linewidth}
    \centering
    \includegraphics[width=0.83in]{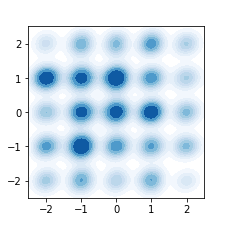}
    \label{subfig:pSVGD}
    \end{minipage}
    }%
    \subfigure[reSGLD]{
    \begin{minipage}[t]{0.13\linewidth}
    \centering
    \includegraphics[width=0.83in]{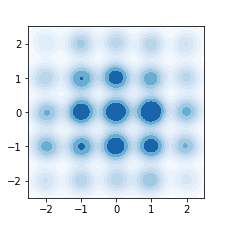}
    \label{subfig:reSGLD}
    \end{minipage}
    }%
    \subfigure[CSGLD]{
    \begin{minipage}[t]{0.13\linewidth}
    \centering
    \includegraphics[width=0.83in]{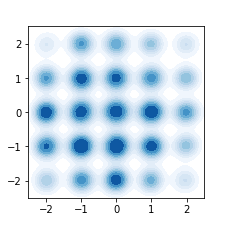}
    \label{subfig:CSGLD after}
    \end{minipage}
    }%
    \subfigure[ICSGLD]{
    \begin{minipage}[t]{0.13\linewidth}
    \centering
    \includegraphics[width=0.83in]{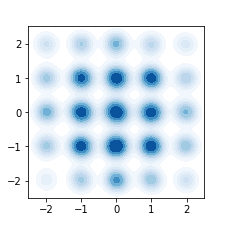}
    \label{subfig:iCSGLD after}
    \end{minipage}
    }%
  \vskip -0.15in
  \caption {Empirical behavior on a simulation dataset. Figure \ref{subfig:cyclical SGLD} and \ref{subfig:CSGLD after} show the simulation based on a single chain with 5 times of iterations ($\times$T5) and the others run 5 parallel chains ($\times$P5). }
  \label{figure:simulation}
  \vspace{-0.25in}
\end{figure*}

\subsection{Deep contextual bandits on mushroom tasks}

This section evaluates ICSGLD on the contextual bandit problem based on the UCI Mushroom data set as in \cite{bandits_showdown}. The mushrooms are assumed to arrive sequentially and the agent needs to take an action at each time step based on past feedbacks.
Our goal is to minimize the cumulative regret that measures the difference between the cumulative reward obtained by the proposed policy and optimal policy. We evaluate Thompson Sampling (TS) based on a variety of approximate inference methods for posterior sampling. We choose one $\epsilon$-greedy policy (EpsGreedy) based on the RMSProp optimizer with a decaying learning rate  \citep{bandits_showdown} as a baseline. Two variational methods, namely stochastic gradient descent with a constant learning rate (ConstSGD) \citep{Mandt} and Monte Carlo Dropout (Dropout) \citep{Gal16b} are compared to approximate the posterior distribution. For the sampling algorithms, we include preconditioned SGLD (pSGLD) \citep{Li16}, preconditioned CSGLD (pCSGLD) \citep{CSGLD}, and preconditioned ICSGLD (pICSGLD). Note that all the algorithms run 4 parallel chains with average outputs ($\times$P4) except that pCSGLD runs a single-chain with 4 times of computational budget ($\times$T4). 
For more details, we refer readers to section \ref{bandit_mushroom} in the supplementary material.

Figure \ref{mushroom} shows that EpsGreedy $\times$P4 tends to explore too much for a long horizon as expected; ConstSGD$\times$P4 and Dropout$\times$P4 perform poorly in the beginning but eventually outperform EpsGreedy $\times$P4 due to the inclusion of uncertainty for exploration, whereas the uncertainty seems  to be inadequate due to the nature of variational inference. By contrast, pSGLD$\times$P4 significantly \begin{wrapfigure}{r}{0.3\textwidth}
   \begin{center}
   \vskip -0.2in
     \includegraphics[width=0.3\textwidth]{./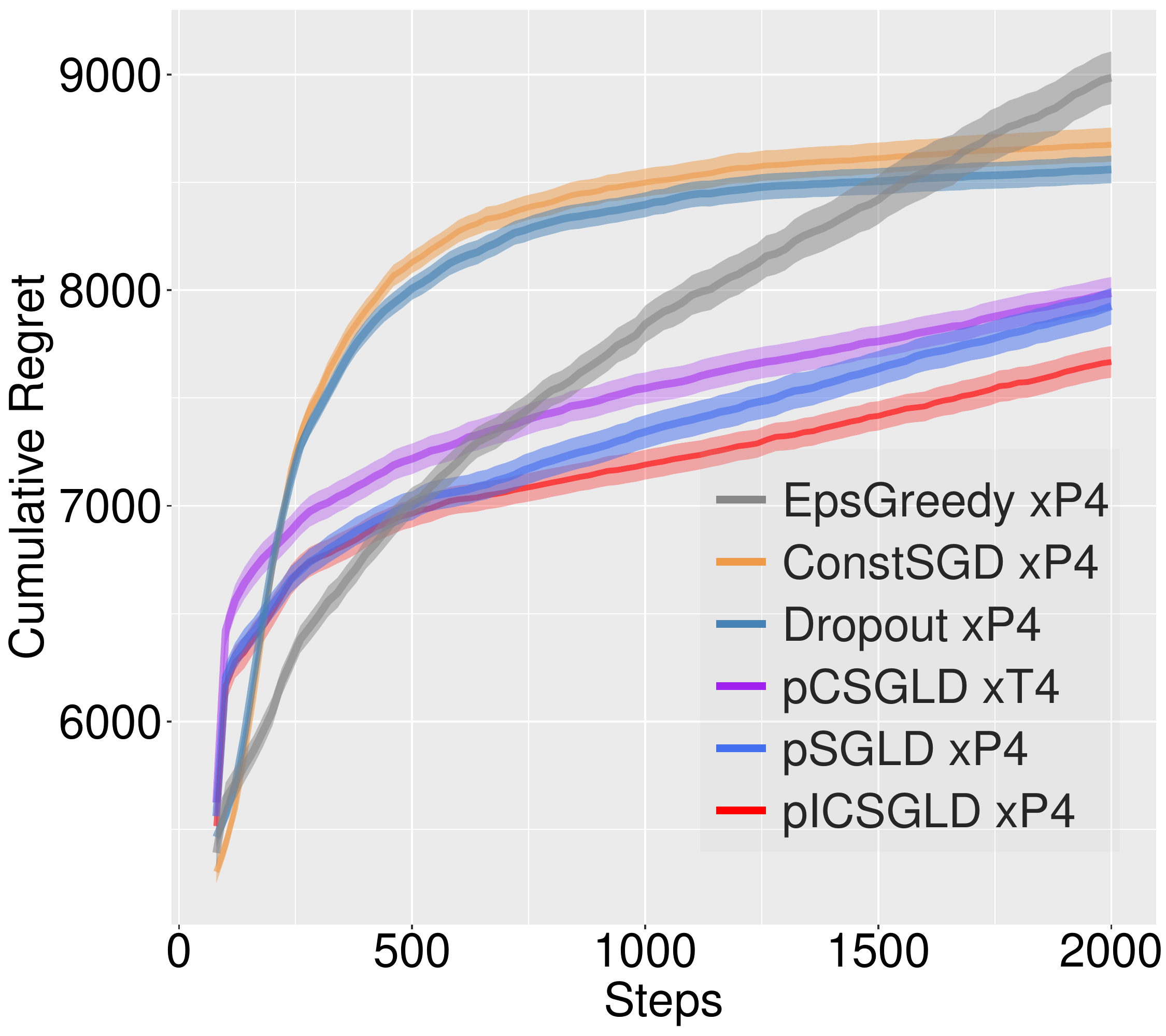}
   \end{center}
   \vskip -0.2in
   \caption{Cumulative regret on the mushroom task. }
   \label{mushroom}
\end{wrapfigure}  outperforms the variational methods by considering preconditioners within an exact sampling framework (SGLD).  As a unique algorithm that runs in a single-chain manner, pCSGLD$\times$T4 leads to the worst performance due to the inefficiency in learning the self-adapting parameters, fortunately, pCSGLD$\times$T4 slightly outperform pSGLD$\times$P4 in the later phase with the help of the well-estimated self-adapting parameters. Nevertheless, pICSGLD$\times$P4 propose to optimize the shared self-adapting parameters at the same time, which in turn greatly contributes to the simulation of the posterior. As a result, pICSGLD$\times$P4 consistently shows the lowest regret excluding the very early period. This shows the great superiority of the interaction mechanism in learning the self-adapting parameters for accelerating the simulations.

\subsection{Uncertainty estimation}

This section evaluates the qualify of our algorithm in uncertainty quantification. For model architectures, we use residual networks (ResNet) \citep{kaiming15} and a wide ResNet (WRN) \citep{wide_residual};  we choose 20, 32, and 56-layer ResNets (denoted by ResNet20, et al.) and a WRN-16-8 network, a 16-layer WRN that is 8 times wider than ResNet16. We train the models on CIFAR100, and report the test accuracy (ACC) and test negative log-likelihood (NLL) based on 5 trials with standard error. For the out-of-distribution prediction performance, we test the well-trained models in Brier scores (Brier) \footnote{The Brier score measures the mean squared error between the predictive and actual probabilities.} on the Street View House Numbers dataset (SVHN). 

Due to the wide adoption of momentum stochastic gradient descent (M-SGD), we use stochastic gradient Hamiltonian Monte Carlo (SGHMC) \citep{Chen14} as the baseline sampling algorithm and denote the interacting contour SGHMC by ICSHMC. In addition, we include several high performing baselines, such as SGHMC with cyclical learning rates (cycSGHMC) \citep{ruqi2020}, SWAG based on cyclic learning rates of 10 cycles (cycSWAG) \citep{swag} and variance-reduced replica exchange SGHMC (reSGHMC) \citep{deng_VR}. For a fair comparison, ICSGLD also conducts variance reduction on the energy function to alleviate the bias. Moreover, a large $\zeta=3\times 10^{6}$ is selected, which only induces mild gradient multipliers ranging from $-1$ to $2$ to penalize over-visited partitions. We don't include SVGD \citep{SVGD} and SPOS \citep{SPOS} for scalability reasons. A batch size of 256 is selected. We run 4 parallel processes ($\times$P4) with 500 epochs for M-SGD, reSGHMC and ICSGHMC and run cycSGHMC and cycSWAG 2000 epochs ($\times$T4) based on a single process with 10 cycles.  Refer to section \ref{UQ_appendix} of the supplementary material for the detailed settings.

\begin{table*}[ht]
\begin{sc}
\vspace{-0.15in}
\caption[Table caption text]{Uncertainty estimations on CIFAR100 and SVHN. } \label{UQ_test}
\vspace{-0.1in}
\small
\begin{center} 
\begin{tabular}{c|ccc|ccc}
\hline
\multirow{2}{*}{Model} & \multicolumn{3}{c|}{R\upshape{es}N\upshape{et}20} & \multicolumn{3}{c}{R\upshape{es}N\upshape{et}32}  \\
\cline{2-7}
 & ACC (\%) & NLL & B\upshape{rier} (\textperthousand) & ACC (\%) & NLL & B\upshape{rier} (\textperthousand) \\
\hline
\hline
\upshape{cyc}SGHMC$\times$T4  & 75.41$\pm$0.10 & 8437$\pm$30 & 2.91$\pm$0.13 & 77.93$\pm$0.17 & 7658$\pm$19 & 3.29$\pm$0.13 \\ 
\upshape{cyc}SWAG$\times$T4   & 75.46$\pm$0.11 & 8419$\pm$26 & 2.78$\pm$0.12  & 77.91$\pm$0.15 & 7656$\pm$22 & 3.19$\pm$0.14 \\ 
\hline
\hline
M-SGD$\times$P4 & 76.01$\pm$0.12  & 8175$\pm$25  &  2.58$\pm$0.08 &  78.41$\pm$0.12 & 7501$\pm$23 & 2.77$\pm$0.15  \\ 
\upshape{re}SGHMC$\times$P4 & 76.15$\pm$0.16  & 8196$\pm$27  & 2.73$\pm$0.10  & 78.57$\pm$0.07 & 7454$\pm$15 & 3.04$\pm$0.09   \\ 
ICSGHMC$\times$P4 & \textbf{76.34$\pm$0.15}  & \textbf{8076$\pm$31}  & \textbf{2.54$\pm$0.14} & \textbf{78.72$\pm$0.16}  & \textbf{7406$\pm$29}  & \textbf{2.76$\pm$0.15} \\ 
\hline
\hline
\multirow{2}{*}{Model} & \multicolumn{3}{c|}{R\upshape{es}N\upshape{et}56} & \multicolumn{3}{c}{WRN-16-8}\\
\cline{2-7}
 & ACC (\%) & NLL & B\upshape{rier} (\textperthousand) & ACC (\%) & NLL & B\upshape{rier} (\textperthousand) \\
\hline
\hline
\upshape{cyc}SGHMC$\times$T4  & 81.23$\pm$0.19 & 6770$\pm$59 & 3.18$\pm$0.08   &  82.98$\pm$0.03 & 6384$\pm$11 & 2.17$\pm$0.05  \\ 
\upshape{cyc}SWAG$\times$T4                & 81.14$\pm$0.11 & 6744$\pm$55 & 3.06$\pm$0.09   &  83.05$\pm$0.04 & 6359$\pm$14 & 2.04$\pm$0.07  \\ 
\hline
\hline
M-SGD$\times$P4 & 81.03$\pm$0.14   & 6847$\pm$22  & \textbf{2.86$\pm$0.08}   & 82.57$\pm$0.07  & 6821$\pm$21 & \textbf{1.77$\pm$0.06} \\ 
\upshape{re}SGHMC$\times$P4 & 81.11$\pm$0.16   & 6915$\pm$40  & 2.92$\pm$0.12   & 82.72$\pm$0.08  & 6452$\pm$19 & 1.92$\pm$0.04  \\ 
ICSGHMC$\times$P4 & \textbf{81.51$\pm$0.18}  & \textbf{6630$\pm$38}   & 2.88$\pm$0.09  & \textbf{83.12$\pm$0.10}   & \textbf{6338$\pm$36} & 1.83$\pm$0.06  \\ 
\hline
\end{tabular}
\end{center}
\end{sc}
\vspace{-0.15in}
\end{table*}

Table \ref{UQ_test} shows that the vanilla ensemble results via M-SGD$\times$P4 surprisingly outperform cycSGHMC$\times$T4 and cycSWAG$\times$T4 on medium models, such as ResNet20 and ResNet32, and show very good performance on the out-of-distribution samples in Brier scores. We suspect that the parallel implementation  ($\times$P4) provides isolated initializations with less correlated samples; by contrast, cycSGHMC$\times$T4 and cycSWAG$\times$T4 explore the energy landscape contiguously, implying a risk to stay near the original region. reSGHMC$\times$P4 shows a remarkable performance overall, but demonstrates a large variance occasionally; this indicates the insufficiency of the swaps when multiple processes are included. When it comes to testing WRN-16-8, cycSWAG$\times$T4 shows a marvelous result and a large improvement compared to the other baselines. We conjecture that cycSWAG is more independent of hyperparameter tuning, thus leading to better performance in larger models. We don't report CSGHMC$\times$P4 since it becomes quite unstable during the training of ResNet56 and WRN-16-8 models and causes mediocre results. As to ICSGHMC$\times$P4, it consistently performs remarkable in both ACC and NLL and performs comparable to M-SGD$\times$P4 in Brier scores. 

Code is available at
\href{https://github.com/WayneDW/Interacting-Contour-Stochastic-Gradient-Langevin-Dynamics}{\footnotesize{github.com/WayneDW/Interacting-Contour-Stochastic-Gradient-Langevin-Dynamics}}.

\section{Conclusion}
\vskip -0.05in
We have proposed the ICSGLD as an efficient algorithm for sampling from distributions with a complex energy landscape, and shown theoretically that ICSGLD is indeed more efficient than the single-chain CSGLD for a slowly decreasing step size. To our best knowledge, this is the first interacting importance sampling algorithm that adapts to big data problems without scalability concerns. ICSGLD has been compared with numerous state-of-the-art baselines for various tasks, whose remarkable results indicate its promising future in big data applications.

\section*{Acknowledgment} 

Liang's research was supported in part by the grants DMS-2015498, R01-GM117597 and R01-GM126089. Lin acknowledges the support from NSF (DMS-1555072, DMS-2053746, and DMS-2134209), BNL Subcontract 382247, and DE-SC0021142.

\bibliography{mybib2}
\bibliographystyle{iclr2022_conference}

\newpage
\appendix

We summarize the supplementary material as follows: Section \ref{review} provides the preliminary knowledge for stochastic approximation; Section \ref{convergence} shows a local stability condition that adapts to high losses; Section \ref{Gaussian_approx} proves the main asymptotic normality for the stochastic approximation process, which naturally yields the conclusion that interacting contour stochastic gradient Langevin dynamics (ICSGLD) is more efficient than the analogous single chain based on slowly decreasing step sizes; Section \ref{details_exp} details the experimental settings.

\section{Preliminaries}
\label{review}

\subsection{Stochastic approximation}

Given a random-field function $\widetilde H(\bm{\btheta}, \bm{\bx})$, the stochastic approximation algorithm \citep{Albert90} proposes to solve the mean-field equation $h(\btheta)=0$ in the analysis of adaptive algorithms
\begin{equation*}
\begin{split}
\label{sa00}
h(\btheta)&=\int_{\MX} \widetilde H(\bm{\theta}, \bm{\bx}) \varpi_{\bm{\theta}}(d\bm{\bx})=0,
\end{split}
\end{equation*}
where $\bx\in \MX \subset \mathbb{R}^d$, $\btheta\in\bTheta \subset \mathbb{R}^{m}$, $\varpi_{\btheta}(\bx)$ is a distribution that depends on the self-adapting parameter $\btheta$. Given the transition kernel $\Pi_{\bm{\theta}}(\bm{x}, A)$ for any Borel subset $A\subset \MX$, the algorithm can be written as follows
\begin{itemize}
\item[(1)] Simulate $\bx_{k+1}\sim\Pi_{\bm{\theta_{k}}}(\bx_{k}, \cdot)$, which yields the invariant distribution $ \varpi_{\bm{\theta_k}}(\cdot)$, 

\item[(2)] Optimize $\bm{\theta}_{k+1}=\bm{\theta}_{k}+\omega_{k+1} \widetilde H(\bm{\theta}_{k}, \bx_{k+1}).$
\end{itemize}
Compared with the standard Robbins–Monro algorithm \citep{RobbinsM1951}, the algorithm proposes to simulate $\bx$ from a transition kernel $\Pi_{\bm{\theta}}(\cdot, \cdot)$ instead of the distribution $\varpi_{\bm{\theta}}(\cdot)$ directly. In other words, , $\widetilde H(\btheta_k, \bx_{k+1})-h(\btheta_k)$ is not a Martingale but rather a Markov state-dependent noise.

\subsection{Poisson's equation}

In the stochastic approximation algorithm, the sequence of $\{(\bx_k, \btheta_k)\}_{k=1}^{\infty}$ on the product space $\MX\times \bTheta$ is generated, which is an inhomogeneous Markov chain and requires the tool of the Poisson's equation to study the convergence
\begin{equation*}
    \mu_{\btheta}(\bm{x})-\mathrm{\Pi}_{\bm{\theta}}\mu_{\bm{\theta}}(\bm{x})=\widetilde H(\bm{\theta}, \bm{x})-h(\bm{\theta}),
\end{equation*}
where $\mu_{\btheta}(\cdot)$ is a function on $\MX$. 
The solution $\mu_{\btheta}(\bm{x})$ to the Poisson's equation exists and is formulated in the form
\begin{equation*}
    \mu_{\btheta}(\bx):=\sum_{k\geq 0} \Pi_{\btheta}^k (\widetilde H(\btheta, \bx)-h(\btheta)),
\end{equation*}
when the above series converges. To ensure such a convergence, \citet{Albert90} made the following regularity conditions on the solution $\mu_{\btheta}(\cdot)$ of the Poisson's equation:

{\it There exist a Lyapunov function $V: \MX \to [1,\infty)$ and a positive constant $C>0$ such that  $\forall\bm{\theta}, \bm{\theta}'\in \bm{\bTheta}$}, we have
\begin{equation}
\begin{split}
\label{Lyapunov_condition}
\|\mathrm{\Pi}_{\bm{\theta}}\mu_{\btheta}(\bx)\|&\leq C V(\bx),\quad
\|\mathrm{\Pi}_{\bm{\theta}}\mu_{\bm{\theta}}(\bx)
-\mathrm{\Pi}_{\bm{\theta'}}\mu_{\bm{\theta'}}(\bx)\|\leq C\|\bm{\theta}
-\bm{\theta}'\| V(\bx),  \quad 
\E[V(\bx)]\leq \infty,\\
\end{split}
\end{equation}
where a common choice for the Lyapunov function is to set $V(\bx)=1+\|\bx\|^2$ \citep{Teh16, VollmerZW2016}.

\subsection{Gaussian diffusions}

Consider a stochastic linear differential equation
\begin{equation}
\label{slde}
    d\bU_t=h_{\btheta}(\btheta_t) \bU_t dt + \bR^{1/2}(\btheta_t)d\bW_t,
\end{equation}
where $\bU$ is a $m$-dimensional random vector, $h_{\btheta}:=\frac{d}{d\btheta} h(\btheta)$, $\bR(\btheta):=\sum_{k=-\infty}^{\infty} \cov_{\btheta}(H(\btheta, \bx_k), H(\btheta, \bx_0))$ is a positive definite matrix that depends on $\btheta(\cdot)$, $\bW\in\mathbb{R}^m$ is a standard Brownian motion. Given a large enough $t$ such that $\btheta_t$ converges to a fixed point $\widehat\btheta_{\star}$ sufficiently fast, we may write the diffusion associated with Eq.(\ref{slde}) as follow
\begin{equation}
\label{solution_slde}
    \bU_t\approx e^{-th_{\btheta}(\widehat\btheta_{\star})}\bU_0+\int_0^t e^{-(t-s)h_{\btheta}(\widehat\btheta_{\star})}\circ \bR(\widehat\btheta_{\star}) d\bW_s,
\end{equation}

Suppose that the matrix $h_{\btheta}(\widehat\btheta_{\star})$ is negative definite, then $\bU_t$ converges in distribution to a Gaussian variable 
\begin{equation*}
\begin{split}
    \E[\bU_t]&=e^{-t h_{\btheta}(\widehat\btheta_{\star})}\bU_0\\
\var(\bU_t)&=\int_0^{t} e^{t h_{\btheta}(\widehat\btheta_{\star})}\circ \bR \circ e^{t h_{\btheta}(\widehat\btheta_{\star})} du.
\end{split}
\end{equation*}

The main goal of this supplementary file is to study the Gaussian approximation of the process $\omega_k^{-1/2}(\btheta_k-\widehat\btheta_{\star})$ to the solution Eq.(\ref{solution_slde}) for a proper step size $\omega_k$. Thereafter, the advantage of interacting mechanisms can be naturally derived.

\section{Stability and convergence analysis}
\label{convergence}

As required by the algorithm, we update $P$ contour stochastic gradient Langevin dynamics (CSGLD) simultaneously. For the notations, we denote the particle of the p-th chain at iteration $k$ by $\bx_{k}^{(p)}\in \MX\subset \mathbb{R}^d$ and the joint state of the $P$ parallel particles at iteration $k$ by $\bx_{k}^{\pop P}:=\left(\bx_{k}^{(1)}, \bx_{k}^{(2)}, \cdots, \bx_{k}^{(P)}\right)^\top\in \MX^{\pop P}\subset \mathbb{R}^{dP}$. We also denote the learning rate and step size at iteration $k$ by $\epsilon_k$ and $\omega_k$, respectively. We denote by  $\mathcal{N}({0, \bm{I}_{dP}})$ a standard $dP$-dimensional Gaussian vector and denote by $\zeta$ a positive hyperparameter.

\subsection{ICSGLD algorithm} \label{Alg:app}

First, we introduce the interacting contour stochastic gradient Langevin dynamics (ICSGLD) with $P$ parallel chains:
\begin{itemize}
\item[(1)] Simulate $\bx_{k+1}^{\pop P}=\bx_k^{\pop P}- \epsilon_k\nabla_{\bx} \widetilde \bL(\bx_k^{\pop P}, \btheta_k)+\mathcal{N}({0, 2\epsilon_k \tau\bm{I}_{dP}}), \ \ \ \ \ \ \ \ \ \ \ \ \ \ \ \ \ \ \ \ \ \ \ \ \ \ \ \ \ \ \ \ \ \ \ \ \ \ \ \ \  (\text{S}_1)$

\item[(2)] Optimize $\bm{\theta}_{k+1}=\bm{\theta}_{k}+\omega_{k+1} \widetilde \bH(\bm{\theta}_{k}, \bx_{k+1}^{\pop P}),
\ \ \ \ \ \ \ \ \ \ \ \ \ \ \ \ \ \ \ \ \ \ \ \ \  \ \ \ \ \ \ \ \ \ \ \ \ \ \ \ \ \ \ \ \ \ \ \ \ \ \ \ \ \ \ \ \ \ \ \ \ \ \ \ \ \ \ \ \ \ \ \ \ \ \ \ \ (\text{S}_2)$
\end{itemize}
where  $\nabla_{\bx} \widetilde \bL(\bx^{\pop P}, \btheta):=\left(\nabla_{\bx} \widetilde L(\bx^{(1)}, \btheta), \nabla_{\bx} \widetilde L(\bx^{(2)}, \btheta), \cdots, \nabla_{\bx} \widetilde L(\bx^{(P)}, \btheta)\right)^\top$, $\nabla_{\bx} \widetilde L(\bx, \btheta)$ is the 
stochastic adaptive gradient given by 
\begin{equation}
\label{adaptive_grad}
    \nabla_{\bx} \widetilde{L}(\bx,\btheta)= \frac{N}{n} \underbrace{\left[1+ 
   \frac{\zeta\tau}{\Delta u}  \left(\log \theta({J}_{\widetilde U}(\bx))-\log\theta((J_{\widetilde U}(\bx)-1)\vee 1) \right) \right]}_{\text{gradient multiplier}} 
    \nabla_{\bx} \widetilde U(\bx).
\end{equation}

In particular, the interacting random-field function is written as \begin{equation}
\label{interactions}
    \widetilde \bH(\bm{\theta}_{k}, \bx_{k+1}^{\pop P})=\frac{1}{P}\sum_{p=1}^P \widetilde H(\btheta_k,\bx_{k+1}^{(p)}),
\end{equation} 
where each random-field function $\widetilde H(\btheta,\bx)=(\widetilde H_1(\btheta,\bx), \ldots,\widetilde H_m(\btheta,\bx))$ follows
\begin{equation}
\label{random_field_H}
     \widetilde H_i(\btheta,\bx)={\theta}( J_{\widetilde U}(\bx))\left(1_{i= J_{\widetilde U}(\bx)}-{\theta}(i)\right), \quad i=1,2,\ldots,m.
\end{equation}
Here $J_{\widetilde U}(\bx)$ denotes the index $i\in\{1, 2,3,\cdots, m\}$ such that $u_{i-1}< \frac{N}{n} \widetilde U(\bx)\leq u_i$ for a set of energy partitions $\{u_i\}_{i=0}^{m}$ and $\widetilde U(\bx)=\sum_{i\in B} U_i(\bx)$ where $U_i$ denotes the negative log of a posterior based on a single data point $i$ and $B$ denotes a mini-batch of data of size $n$. Note that the stochastic energy estimator $\widetilde U(\bx)$ results in a biased estimation for the partition index $J_{\widetilde U}(\bx)$ due to a non-linear transformation. To avoid such a bias asymptotically with respect to the learning rate $\epsilon_k$, we may consider a variance-reduced energy estimator $\widetilde U_{\text{VR}}(\bx)$ following \citet{deng_VR}
\begin{equation}
\label{VR_estimator}
    \frac{N}{n} \widetilde U_{\text{VR}}(\bx)=\frac{N}{n}\sum_{i\in B_k}\left( U_i(\bx) - U_i\left(\bx_{q\lfloor\frac{k}{q}\rfloor}\right) \right)+\sum_{i=1}^N U_i\left(\bx_{q\lfloor\frac{k}{q}\rfloor}\right),
\end{equation}
where the control variate $\bx_{q\lfloor \frac{k}{q}\rfloor}$ is updated every $q$ iterations.

Compared with the na\"ive parallelism of CSGLD, a key feature of the ICSGLD algorithm lies in the joint estimation of the interacting random-field function $\widetilde \bH(\bm{\theta}, \bx^{\pop P})$ in Eq.(\ref{interactions}) for the same mean-field function $h(\btheta)$.

\subsubsection{Discussions on the hyperparameters}
\label{hyper_setup}
The most important hyperparameter is $\zeta$. A fine-tuned $\zeta$ usually leads to a small or even slightly negative learning rate in low energy regions to avoid local-trap problems. Theoretically, $\zeta$ affects the $L^2$ convergence rate hidden in the big-O notation in Lemma \ref{convex_appendix}.

The other hyperparameters can be easily tuned. For example, the ResNet models yields the full loss ranging from 10,000 to 60,000 after warm-ups, we thus partition the sample space according to the energy into 200 subregions equally without tuning; since the optimization of SA is nearly convex, tuning $\{\omega_k\}$ is much easier than tuning $\{\epsilon_k\}$ for non-convex learning.

\subsubsection{Discussions on distributed computing and communication cost}
\label{communication_cost}
In shared-memory settings, the implementation is trivial and the details are omitted.

In distributed-memory settings: $\btheta_{k+1}$ is updated by the central node as follows: 
\begin{itemize}
    \item The $p$-th worker conducts the sampling step $(\text{S}_1)$ and sends the indices $J_{\widetilde U(\bx_{k+1}^{(p)})}$'s to the central node;
    \item The central node aggregates the indices from all worker and updates $\btheta_k$ based on $(\text{S}_2)$;
    \item The central node sends $\btheta_{k+1}$ back to each worker.
\end{itemize}

We emphasize that we don't communicate the model parameters $\bx\in\mathbb{R}^d$, but rather share the self-adapting parameter $\btheta\in\mathbb{R}^m$, where $m\ll d$.  For example, WRN-16-8 has 11 M parameters (40 MB), while $\btheta$ can be set to dimension $200$ of size 4 KB; hence, the communication cost is not a big issue. Moreover, the theoretical advantage still holds if the communication frequency is slightly reduced.

\subsubsection{Scalability to big data}
\label{scalability}

Recall that the adaptive sampler follows that
\begin{equation*} 
  \footnotesize
  \small{\bx_{k+1}=\bx_k - \epsilon_{k+1} \frac{N}{n} \underbrace{\left[1+ 
   \zeta\tau\frac{\log {\theta}_{k}(J_{\widetilde U}(\bx_k)) - \log{\theta}_{k}((J_{\widetilde U}(\bx_k)-1)\vee 1)}{\Delta u}  \right]}_{\text{gradient multiplier}}  
    \nabla_{\bx} \widetilde U(\bx_k) +\sqrt{2 \tau \epsilon_{k+1}} w_{k+1}}, 
\end{equation*}

The key to the success of (I)CSGLD is to generate sufficiently strong bouncy moves (\emph{negative} gradient multiplier) to escape local traps. To this end, $\zeta$ can be tuned to generate proper bouncy moves. 

Take the CIFAR100 experiments for example: 
\begin{itemize}
    \item the self-adjusting mechanism fails if the gradient multiplier uniformly ``equals'' to 1 and a too small value of $\zeta=1$ could lead to this issue;
    \item the self-adjusting mechanism works only if we choose a large enough $\zeta$ such as 3e6 to  generate (desired) negative gradient multiplier in over-visited regions.
\end{itemize}
However, when we set $\zeta=$3e6, the original stochastic approximation (SA) update proposed in \citep{CSGLD} follows that
$${\theta}_{k+1}(i)={\theta}_{k}(i)+\omega_{k+1}\underbrace{{\theta}_{k}^{\textcolor{red}{\zeta}}(J_{\widetilde U}(\bx_{k+1}))}_{\textbf{essentially 0 for } \zeta\gg 1}\left(1_{i=J_{\widetilde U}(\bx_{k+1})}-{\theta}_{k}(i)\right).$$
Since $\theta(i)<1$ for any $i\in\{1,\cdots, m\}$, $\theta(i)^{\zeta}$ {is essentially 0} for such a large $\zeta$, which means that \textbf{the original SA fails to optimize when $\zeta$ is large}. Therefore, the limited choices of $\zeta$ inevitably limits the scalability to big data problems. Our newly proposed SA scheme $${\theta}_{k+1}(i)={\theta}_{k}(i)+\omega_{k+1}\underbrace{{\theta}_{k}(J_{\widetilde U}(\bx_{k+1}))}_{\text{independent of } \zeta}\left(1_{i=J_{\widetilde U}(\bx_{k+1})}-{\theta}_{k}(i)\right)$$  is more independent of $\zeta$ and proposes to converge to a much smoother equilibrium $\theta_{\infty}^{1/\zeta}$ instead of $\theta_{\infty}$, where $\theta_{\infty}(i)=\int_{\chi_{i}} \pi(x) d x\propto \int_{\chi_{i}} e^{-\frac{U(x)}{\tau}} d x$ is the energy PDF. As such, despite the linear stability is sacrificed, the resulting algorithm is more scalable. For example, estimating $e^{-10,000\times\frac{1}{\zeta}}$ is numerically much easier than $e^{-10,000}$ for a large $\zeta$ such as $10,000$, where $10,000$ can be induced by the high losses in training deep neural networks in big data.

\subsection{Assumptions} 
\label{App:convergence}

A long-standing problem for stochastic approximation is the difficulty in establishing the stability property and a practical remedy for this problem is to study $\bTheta$ on a fixed compact set.
\begin{assump}[Compactness] \label{ass2a} 
The space $\bTheta$ is compact and for any  $i\in \{1,2,\ldots,m\}$ we have $\inf_{\bTheta} \theta(i) >0$. In addition, there exists a positive constant $Q>0$ that satisfies $\forall\btheta\in \bTheta$ and $\bx \in \MX$, 
\begin{equation}
\label{compactness}
     \|\btheta\|\leq Q, \quad  
     \|\widetilde H(\btheta, \bx)\|\leq Q.
\end{equation}
\end{assump}

For weaker assumptions, we refer readers to Theorem 3.2 \citep{Fort15}, where a recurrence property can be proved for the Metropolis-based Wang-Landau algorithm, which eventually established that the estimates return to a desired compact set often enough.

Next, we lay out the smoothness assumption, which is standard in the convergence analysis of SGLD, see e.g. \citet{mattingly10}, \citet{Maxim17} and \citet{Xu18}. 
\begin{assump}[Smoothness]
\label{ass2}
$U(\bm{\xeta})$ is $M$-smooth when there exists a positive constant $M$ that satisfies $\forall\bx, \bx'\in \MX$,
\begin{equation}
\label{ass_2_1_eq}
\begin{split}
\|\nabla_{\bx} U(\bx)-\nabla_{\bx} U(\bm{\bx}')\| & \leq M\|\bx-\bx'\|. \\
\end{split}
\end{equation}
\end{assump}

In addition, we assume the dissipativity condition to ensure that the geometric ergodicity of the dynamical system holds. This assumption is also crucial for verifying the solution properties of the solution of Poisson's equation. Similar assumptions have been made in \citet{mattingly10, Maxim17} and \citet{Xu18}.
\begin{assump}[Dissipativity]
\label{ass3}
 There exist constants $\tilde{m}>0$ and $\tilde{b}\geq 0$ that satisfies $\forall\bx \in \MX$ and any $\btheta \in \bTheta$, 
\label{ass_dissipative}
\begin{equation}
\label{eq:01}
\langle \nabla_{\bx} L(\bx, \btheta), \bx\rangle\leq \tilde{b}-\tilde{m}\|\bx\|^2.
\end{equation}
\end{assump}

To further establish a bounded second moment on $\bx\in \MX$ with respect to a proper Lyapunov function $V(\bx)$, we impose the following conditions 
on the gradient noise:
\begin{assump}[Gradient noise] 
\label{ass4}
The stochastic gradient based on mini-batch settings is an unbiased estimator such that 
\begin{equation*}
\E[\nabla_{\bx}\widetilde U(\bx_{k})-\nabla_{\bx} U(\bx_{k})]=0;
\end{equation*}
furthermore, for some positive constants $M$ and $B$, we have 
\begin{equation*} 
\E [ \|\nabla_{\bx}\widetilde U(\bx_{k})-\nabla_{\bx} U(\bx_{k})\|^2 ] \leq M^2 \|\bx\|^2+B^2,
\end{equation*}
where $\E[\cdot]$ acts on the distribution of the noise in the stochastic gradient $\nabla_{\bx}\widetilde U(\bx_{k})$.
\end{assump}

\subsection{Local stability via the scalable random-field function}
Now, we are ready to present our first result. Lemma \ref{convex_appendix} establishes a local stability condition for the non-linear mean-field system of ICSGLD, which implies a potential convergence of $\btheta_k$ to a unique fixed point that adapts to a wide energy range under mild assumptions. 

\begin{lemma}[Local stability, restatement of Lemma \ref{convex_main}] \label{convex_appendix} 
Assume Assumptions  \ref{ass2a}-\ref{ass4} hold. Given any small enough learning rate $\epsilon$, a large enough $m$ and batch size $n$, and any $\btheta\in \widetilde\bTheta$, where $\widetilde\bTheta$ is a small neighborhood of $\btheta_{\star}$ that contains $\widehat\btheta_{\star}$, we have $\langle h(\btheta), \btheta - \widehat\btheta_{\star}\rangle \leq  -\phi\|\btheta - \widehat\btheta_{\star}\|^2$, where $\widehat\btheta_{\star}=\btheta_{\star}+\mathcal{O}(\varepsilon)$, $\varepsilon=\mathcal{O}\left(\sup_{\bx}\Var(\xi_n(\bx))+\epsilon+\frac{1}{m}\right)$ and  $\btheta_{\star}=\left(\frac{\left(\int_{\MX_1} \pi(\bx)d\bx\right)^{\frac{1}{\zeta}}}{\sum_{k=1}^m \left(\int_{\MX_k} \pi(\bx)d\bx\right)^{\frac{1}{\zeta}}}, 
\ldots,
\frac{\left(\int_{\MX_m} \pi(\bx)d\bx\right)^{\frac{1}{\zeta}}}{\sum_{k=1}^m \left(\int_{\MX_k} \pi(\bx)d\bx\right)^{\frac{1}{\zeta}}}\right)$, $\phi=\inf_{\btheta}\min_i \widehat Z_{\zeta,\theta(i)}^{-1}\big(1-\mathcal{O}(\varepsilon)\big)>0$, $\widehat Z_{\zeta,\theta(i)}$ is defined below Eq.(\ref{h_i_theta_v2}), and $\xi_n(\bx)$ denotes the noise in the energy estimator $\widetilde U(\bx)$ of batch size $n$ and $\Var(\cdot)$ denotes the variance.
\end{lemma}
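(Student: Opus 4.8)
The plan is to put $h(\btheta)$ in closed form, locate its biased root $\widehat\btheta_{\star}$ near $\btheta_{\star}$, and then establish the one-point monotonicity estimate by linearising $h$ on a small neighbourhood and invoking perturbation theory for the relevant eigenvalues. \textbf{Closed form of the mean-field map.} Using (\ref{random_field_H}) and the identity $\sum_{i=1}^m \widetilde H_i(\btheta,\bx)\equiv 0$ (valid since $\sum_i\theta(i)=1$ on $\bTheta$), partitioning $\MX$ into $\{\MX_k\}$ reduces $h$ to
\[
 h_i(\btheta)=\theta(i)\,\varpi_{\btheta}(\MX_i)-\theta(i)\sum_{k=1}^m\theta(k)\,\varpi_{\btheta}(\MX_k),
\]
so everything hinges on the masses $\varpi_{\btheta}(\MX_k)$. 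Replacing $\varpi_{\btheta}$ by the target $\varpi_{\Psi_{\btheta}}$ of (\ref{flat_density}) and $\Psi_{\btheta}(u)$ by its piecewise-constant surrogate $\theta(k)$ on $(u_{k-1},u_k]$ (accurate to $\mathcal{O}(1/m)$ for a properly initialised $\btheta$) gives $\varpi_{\Psi_{\btheta}}(\MX_k)\propto (\int_{\MX_k}\pi)/\theta^{\zeta}(k)$, and collecting the normalisers $\widetilde Z_{\zeta,\btheta}$ recovers the announced form (\ref{h_i_theta_main}), namely $h_i(\btheta)=C_i(\btheta)\big(\theta_{\star}^{\zeta}(i)-(\theta(i)C_{\btheta})^{\zeta}\big)+\text{perturbations}$, with $C_i(\btheta)>0$ bounded above and away from $0$ on $\bTheta$ by Assumption~\ref{ass2a}, and $C_{\btheta}=1$ at $\btheta=\btheta_{\star}$ so that the idealised map vanishes exactly at $\btheta_{\star}$.

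\textbf{Perturbation budget and biased root.} Passing from the idealised map to $h(\btheta)=\int_{\MX}\widetilde H(\btheta,\bx)\varpi_{\btheta}(d\bx)$ incurs three errors, all uniform in $\btheta$ since $\bTheta$ is compact and the gradient multiplier in (\ref{adaptive_grad}) is bounded: (i) the weak/discretisation bias of the mini-batch SGLD kernel, $\mathcal{O}(\epsilon)$, from Assumptions~\ref{ass2}--\ref{ass4} together with the Lyapunov bound $\E[V(\bx)]<\infty$; (ii) the bias from composing the \emph{discontinuous} index map $J$ with the noisy energy $\widetilde U(\bx)=U(\bx)+\xi_n(\bx)$, which by a near-boundary decomposition (Chebyshev on $\xi_n$ away from bin edges, $\mathcal{O}(\Delta u)$ $\pi$-mass near edges) is $\mathcal{O}(\sup_{\bx}\Var(\xi_n(\bx))+1/m)$; and (iii) the $\mathcal{O}(1/m)$ from the surrogate of $\Psi_{\btheta}$. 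Hence $h_i(\btheta)=C_i(\btheta)(\theta_{\star}^{\zeta}(i)-(\theta(i)C_{\btheta})^{\zeta})+\mathcal{O}(\varepsilon)$ with $\varepsilon=\mathcal{O}(\sup_{\bx}\Var(\xi_n(\bx))+\epsilon+1/m)$. The Jacobian of the idealised map at $\btheta_{\star}$, restricted to the simplex tangent space $\{\vv:\sum_i v_i=0\}$, equals $-\mathrm{diag}(\zeta C_i(\btheta_{\star})\theta_{\star}^{\zeta-1}(i))$ plus a rank-one term produced by $C_{\btheta}$, which is negative definite there and hence invertible; the implicit function theorem (equivalently a Brouwer argument on the simplex) then yields a unique root $\widehat\btheta_{\star}$ of the perturbed $h$ with $\widehat\btheta_{\star}=\btheta_{\star}+\mathcal{O}(\varepsilon)$. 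I take $\widetilde\bTheta$ to be an $\mathcal{O}(\varepsilon)$ ball around $\btheta_{\star}$ containing $\widehat\btheta_{\star}$, small enough that positivity of the $C_i$ and $C_{\btheta}=1+\mathcal{O}(\varepsilon)$ hold throughout, so that on $\widetilde\bTheta$ the mean value theorem turns the map into its locally linear form $h_i(\btheta)\propto\theta_{\star}(i)-\theta(i)+\text{perturbations}$.

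\textbf{One-point monotonicity.} Since $h(\widehat\btheta_{\star})=0$, write $h(\btheta)=\big(\int_0^1 J_h(\widehat\btheta_{\star}+t(\btheta-\widehat\btheta_{\star}))\,dt\big)(\btheta-\widehat\btheta_{\star})=:\bar J_h(\btheta-\widehat\btheta_{\star})$, so that $\langle h(\btheta),\btheta-\widehat\btheta_{\star}\rangle=(\btheta-\widehat\btheta_{\star})^{\top}\bar J_h(\btheta-\widehat\btheta_{\star})$ with $\btheta-\widehat\btheta_{\star}$ lying in the simplex tangent space. On $\widetilde\bTheta$ one has $\bar J_h=-\mathrm{diag}(\widehat Z_{\zeta,\theta(i)}^{-1})+E$, the diagonal entries bounded below and $\|E\|=\mathcal{O}(\varepsilon)$: the rank-one part from $C_{\btheta}$ contributes $\mathcal{O}(\varepsilon)$ on the tangent space because $C_{\btheta}=1+\mathcal{O}(\varepsilon)$ there, and the variation of the $C_i$ over $\widetilde\bTheta$ is $\mathcal{O}(\varepsilon)$ as well. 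The perturbation (Weyl-type) bounds of \citet{Eric} then give $\langle h(\btheta),\btheta-\widehat\btheta_{\star}\rangle\le-\phi\|\btheta-\widehat\btheta_{\star}\|^2$ with $\phi=\inf_{\btheta}\min_i\widehat Z_{\zeta,\theta(i)}^{-1}(1-\mathcal{O}(\varepsilon))>0$, which is the claim.

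\textbf{Main obstacle.} The delicate step is the perturbation budget: quantifying, uniformly over the compact $\bTheta$, both the weak error of the mini-batch SGLD kernel and — more importantly — the bias created by inserting the noisy energy estimator into the \emph{discontinuous} bin map $J_{\widetilde U}$, which smoothness alone does not control and which is precisely what forces the $\sup_{\bx}\Var(\xi_n(\bx))$ term into $\varepsilon$; it requires the near-boundary split together with the second-moment bound of Assumption~\ref{ass4}. A secondary difficulty, and the reason the estimate is only \emph{local}, is that once $\zeta\neq 1$ the global normaliser $\widetilde Z_{\zeta,\btheta}$ destroys the negative definiteness of $\bar J_h$ away from $\btheta_{\star}$, so the neighbourhood $\widetilde\bTheta$ genuinely has to have radius $\mathcal{O}(\varepsilon)$.
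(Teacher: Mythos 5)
Your proposal follows essentially the same route as the paper's proof: you compute the idealized mean-field map exactly against the piecewise-constant surrogate (the paper's term $\text{I}_1$ in Eq.~(\ref{iiii})), attribute the perturbation budget to the SGLD discretization ($\mathcal{O}(\epsilon)$), the piecewise approximation ($\mathcal{O}(1/m)$) and the noisy index map ($\mathcal{O}(\sup_{\bx}\Var(\xi_n(\bx)))$), locate the biased root $\widehat\btheta_{\star}=\btheta_{\star}+\mathcal{O}(\varepsilon)$ by a perturbation/implicit-function argument, and finish by local linearization, which is precisely the skeleton of Eqs.~(\ref{h_i_theta})--(\ref{h_i_theta_v2}). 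Two sub-steps deviate. For the energy-noise bias you use a near-bin-boundary split plus Chebyshev, where the paper's Lemma~\ref{bias_in_SA} uses a Taylor expansion of the step transformation; your device is an acceptable alternative at a comparable level of rigor (it quietly needs the energy law under $\pi$ to put only $\mathcal{O}(\Delta u)$ mass near bin edges, which is in the same spirit as the paper's bounded-$\Phi'$ claim). The one place your argument as written is shaky is the Jacobian decomposition $\bar J_h=-\mathrm{diag}(\widehat Z_{\zeta,\theta(i)}^{-1})+E$ with $\|E\|=\mathcal{O}(\varepsilon)$: the justification ``the rank-one part from $C_{\btheta}$ is $\mathcal{O}(\varepsilon)$ because $C_{\btheta}=1+\mathcal{O}(\varepsilon)$'' is not valid, since closeness of $C_{\btheta}$ to $1$ does not control $\nabla_{\btheta}C_{\btheta}$, which is of order $\zeta-1$ in general. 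The conclusion is nevertheless true on the tangent space for a different reason: at $\btheta_{\star}$ one has $\partial_{\theta(j)}C_{\btheta}\big|_{\btheta_{\star}}=-\tfrac{\zeta-1}{\zeta}$ \emph{independently of} $j$ (because $\int_{\MX_j}\pi(\bx)d\bx/\theta_{\star}^{\zeta}(j)=\widetilde Z_{\zeta,\btheta_{\star}}$), so the rank-one block is proportional to $\va\vone^{\top}$ and annihilates vectors with $\sum_j v_j=0$, with only an $\mathcal{O}(\varepsilon)$ remainder on your $\mathcal{O}(\varepsilon)$-ball thanks to $\inf_{\bTheta}\theta(i)>0$; you should state this (or an equivalent) explicitly. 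Alternatively you can sidestep the issue the paper's way: the coordinate-wise mean value theorem in Eq.~(\ref{mean-value-decompose}) absorbs the entire $C_{\btheta}$ effect into the positive constant $\widehat Z_{\zeta,\theta(i)}^{-1}$ through the factor $(1+L_{\widetilde\btheta})$, which keeps the $C_{\btheta}$-coupling as an $\mathcal{O}(1)$ multiplier rather than an $\mathcal{O}(\varepsilon)$ perturbation and therefore does not force the radius of $\widetilde\bTheta$ down to $\mathcal{O}(\varepsilon)$ — it only requires the neighborhood to be small enough that the per-coordinate ratio $(\theta_{\star}^{\zeta}(i)-(\theta(i)C_{\btheta})^{\zeta})/(\theta_{\star}(i)-\theta(i))$ stays positive and bounded.
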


\begin{proof} The random-field function $\widetilde H_i(\btheta,\bx)={\theta}(J_{\widetilde U}(\bx))\left(1_{i= J_{\widetilde U}(\bx)}-{\theta}(i)\right)$ based on the stochastic energy estimator $\widetilde U(\bx)$ yields a biased estimator of $ H_i(\btheta,\bx)={\theta}( J(\bx))\left(1_{i= J(\bx)}-{\theta}(i)\right)$ for any $i \in \{1,2,\ldots,m\}$ based on the exact energy partition function $J(\cdot)$. By Lemma.\ref{bias_in_SA}, we know that the bias caused by the stochastic energy is of order $\mathcal{O}(\Var(\xi_n(\bx)))$.

Now we compute the mean-field function $h(\btheta)$ based on the measure $\varpi_{\btheta}(\bx)$ simulated from SGLD:
\begin{equation}
\small
\label{iiii}
\begin{split} 
        h_i(\btheta)&=\int_{\MX} \widetilde H_i(\btheta,\bx) 
         \varpi_{\btheta}(\bx) d\bx
         =\int_{\MX} H_i(\btheta,\bx) 
         \varpi_{\btheta}(\bx) d\bx+\mathcal{O}\left(\Var(\xi_n(\bx))\right)\\
         &=\ \int_{\MX} H_i(\btheta,\bx) \left( \underbrace{\varpi_{\widetilde{\Psi}_\btheta}(\bx)}_{\text{I}_1} \underbrace{-\varpi_{\widetilde{\Psi}_\btheta}(\bx)+\varpi_{\Psi_{\btheta}}(\bx)}_{\text{I}_2: \text{piece-wise approximation}}\underbrace{-\varpi_{\Psi_{\btheta}}(\bx)+\varpi_{\btheta}(\bx)}_{\text{I}_3: \text{numerical discretization}}\right) d\bx+\mathcal{O}\left(\Var(\xi_n(\bx))\right),\\
\end{split}
\end{equation}
where $\varpi_{\btheta}$ is the invariant measure simulated via SGLD that
approximates $\varpi_{\Psi_{\btheta}}(\bx)$. $\varpi_{\Psi_{\btheta}}(\bx)$ and $\varpi_{\widetilde{\Psi}_{\btheta}}(\bx)$ are two invariant measures that follow $\varpi_{\Psi_{\btheta}}(\bx)\propto\frac{\pi(\bx)}{\Psi^{\zeta}_{\btheta}(U(\bx))}$ and
 $\varpi_{\widetilde{\Psi}_{\btheta}}(\bx) \propto \frac{\pi(\bx)}{\widetilde{\Psi}^{\zeta}_{\btheta}(U(\bx))}$; $\Psi_{\btheta}(u)$ and $\widetilde{\Psi}_{\btheta}(u)$ are piecewise continuous and constant functions, respectively
\begin{equation}
\begin{split}
\label{new_design_appendix}
\Psi_{\btheta}(u)&= \sum_{k=1}^m \left(\theta(k-1)e^{(\log\theta(k)-\log\theta(k-1)) \frac{u-u_{k-1}}{\Delta u}}\right) 1_{u_{k-1} < u \leq u_k};\ \ \ \widetilde{\Psi}_{\btheta}(u)=\sum_{k=1}^m \theta(k) 1_{u_{k-1} < u \leq u_{k}}.\\
\end{split}
\end{equation}

(i) For the first term $\text{I}_1$, we have
\begin{equation}
\begin{split}
\label{i_1}
    \int_{\MX} H_i(\btheta,\bx) 
     \varpi_{\widetilde{\Psi}_\btheta}(\bx) d\bx&=\frac{1}{\widetilde Z_{\zeta+1,\btheta}} \int_{\MX} {\theta}(J(\bx))\left(1_{i= J(\bx)}-{\theta}(i)\right) \frac{\pi(\bx)}{\theta^{\zeta}(J(\bx))} d\bx\\
     &=\frac{1}{\widetilde Z_{\zeta+1,\btheta}} \sum_{k=1}^m\int_{\MX_k} \left(1_{i= k}-{\theta}(i)\right) \frac{\pi(\bx)}{\theta^{\zeta-1}(k)} d\bx\\
    &=\frac{1}{\widetilde Z_{\zeta+1,\btheta}}\left[\sum_{k=1}^m \int_{\MX_k} 
     \frac{\pi(\bx)}{\theta^{\zeta-1}(k)} 1_{k=i} d\bx -\theta(i)\sum_{k=1}^m\int_{\MX_k} \frac{\pi(\bx)}{\theta^{\zeta-1}(k)}d\bx \right] \\
    &=\frac{1}{\widetilde Z_{\zeta+1,\btheta}}\left[
     \frac{\int_{\MX_i} \pi(\bx)d\bx}{\theta^{\zeta-1}(i)} -\theta(i) \widetilde Z_{\zeta,\btheta} \right], \\
\end{split}
\end{equation}
where $\widetilde Z_{\zeta+1,\btheta}=\sum_{k=1}^m  \frac{\int_{\MX_k} \pi(\bx)d\bx}{\theta^{\zeta}(k)}$ denotes the normalizing constant of $\varpi_{\widetilde{\Psi}_\btheta}(\bx)$.

The solution $\btheta_{\star}$ that solves $\frac{\int_{\MX_k} \pi(\bx)d\bx}{\theta^{\zeta-1}(k)} -\theta(k) \widetilde Z_{\zeta,\btheta}=0$ for any $k\in\{1,2,\cdots, m\}$ satisfies $\theta_{\star}(k)=\left(\frac{\int_{\MX_k} \pi(\bx)d\bx}{\widetilde Z_{\zeta, \btheta_{\star}}}\right)^{\frac{1}{\zeta}}$.
Combining the definition of $\widetilde Z_{\zeta, \btheta_{\star}}=\sum_{k=1}^m  \frac{\int_{\MX_k} \pi(\bx)d\bx}{\theta_{\star}^{\zeta-1}(k)}$, we have
\begin{equation*}
    \begin{split}
    \widetilde Z_{\zeta, \btheta_{\star}}&
    =\sum_{k=1}^m \frac{\int_{\MX_k} \pi(\bx)d\bx}{\theta_{\star}^{\zeta-1}(k)}
    =\sum_{k=1}^m \frac{\int_{\MX_k} \pi(\bx)d\bx}{\left(\frac{\int_{\MX_k} \pi(\bx)d\bx}{\widetilde Z_{\zeta, \btheta_{\star}}}\right)^{\frac{\zeta-1}{\zeta}}}\\
    &
    =\widetilde Z_{\zeta, \btheta_{\star}}^{\frac{\zeta-1}{\zeta}} \sum_{k=1}^m \frac{\int_{\MX_k} \pi(\bx)d\bx}{\left(\int_{\MX_k} \pi(\bx)d\bx\right)^{\frac{\zeta-1}{\zeta}}}
    =\widetilde Z_{\zeta, \btheta_{\star}}^{\frac{\zeta-1}{\zeta}} \sum_{k=1}^m \left(\int_{\MX_k} \pi(\bx)d\bx\right)^{\frac{1}{\zeta}},\\
    \end{split}
\end{equation*}
which leads to $\widetilde Z_{\zeta, \btheta_{\star}}=\left(\sum_{k=1}^m \left(\int_{\MX_k} \pi(\bx)d\bx\right)^{\frac{1}{\zeta}}\right)^{\zeta}$. In other words, the mean-field system without perturbations yields a unique solution $\theta_{\star}(i)=\frac{\left(\int_{\MX_i} \pi(\bx)d\bx\right)^{\frac{1}{\zeta}}}{\sum_{k=1}^m \left(\int_{\MX_k} \pi(\bx)d\bx\right)^{\frac{1}{\zeta}}}$ for any $i\in\{1,2,\cdots, m\}$.

(ii) For the second term $\text{I}_2$, we have 
\begin{equation} \label{biasI2}
\int_{\MX} H_i(\btheta,\bx) (-\varpi_{\widetilde{\Psi}_{\btheta}}(\bx)+\varpi_{\Psi_{\btheta}}(\bx)) d\bx= \mathcal{O}\left(\frac{1}{m}\right),
\end{equation}
where the result follows from the boundedness of $H(\btheta,\bx)$ in (\ref{ass2a}) and Lemma B4 \citep{CSGLD}.

(iii) For the last term $\text{I}_3$, following Theorem 6 of \citet{Sato2014ApproximationAO}, we have for any fixed $\btheta$,
\begin{equation}\label{iiii_2}
    \int_{\MX} H_i(\btheta,\bx) \left(-\varpi_{\Psi_{\btheta}}(\bx)+\varpi_{\btheta}(\bx)\right) d\bx=\mathcal{O}(\epsilon).
\end{equation}

Plugging Eq.(\ref{i_1}), Eq.(\ref{biasI2}) and  Eq.(\ref{iiii_2}) into Eq.(\ref{iiii}), we have
\begin{equation}
\begin{split}
\label{h_i_theta}
     h_i(\btheta)&={\widetilde Z_{\zeta+1,\btheta}}^{-1} \left[\varepsilon\tilde\beta_i(\btheta)+ \frac{\int_{\MX_i} \pi(\bx)d\bx}{\theta^{\zeta-1}(i)} -\theta(i) \widetilde Z_{\zeta,\btheta}\right]\\
     &={\widetilde Z_{\zeta+1,\btheta}}^{-1} \frac{\widetilde Z_{\zeta, \btheta_{\star}}}{\theta^{\zeta-1}(i)}\left[\varepsilon\tilde\beta_i(\btheta)\frac{\theta^{\zeta-1}(i)}{{\widetilde Z_{\zeta, \btheta_{\star}}}}+ \frac{\int_{\MX_i} \pi(\bx)d\bx}{\widetilde Z_{\zeta, \btheta_{\star}}}-\theta^{\zeta}(i) \frac{\widetilde Z_{\zeta,\btheta}}{\widetilde Z_{\zeta, \btheta_{\star}}} \right]\\
     &={\widetilde Z_{\zeta+1,\btheta}}^{-1} \frac{\widetilde Z_{\zeta, \btheta_{\star}}}{\theta^{\zeta-1}(i)}\left[\varepsilon\tilde\beta_i(\btheta)\frac{\theta^{\zeta-1}(i)}{{\widetilde Z_{\zeta, \btheta_{\star}}}} +\theta_{\star}^{\zeta}(i)-\left(\theta(i) C_{\btheta}\right)^{\zeta} \right],
\end{split}
\end{equation}
where $\tilde\beta_i(\btheta)$ is a bounded term such that ${\widetilde Z_{\zeta+1,\btheta}}^{-1}\varepsilon\tilde\beta_i(\btheta)=\mathcal{O}\left(\Var(\xi_n(\bx))+\epsilon+\frac{1}{m}\right)$,  $\varepsilon=\mathcal{O}\left(\sup_{\bx}\Var(\xi_n(\bx))+\epsilon+\frac{1}{m}\right)$ and $C_{\btheta}=\left(\frac{\widetilde Z_{\zeta,\btheta}}{\widetilde Z_{\zeta, \btheta_{\star}}}\right)^{\frac{1}{\zeta}}$. By the definition of $\widetilde Z_{\zeta,\btheta}=\sum_{k=1}^m  \frac{\int_{\MX_k} \pi(\bx)d\bx}{\theta^{\zeta-1}(k)}$, when $\zeta=1$, $C_{\btheta}\equiv 1$ for any $\btheta\in \bTheta$, which suggests that the stability condition doesn't rely on the initialization of $\btheta$; however, when $\zeta\neq 1$,  $C_{\btheta}\neq 1$ when $\btheta\neq \btheta_{\star}$, we see that $h_i(\btheta)\propto \theta_{\star}(i)^{\zeta}-\left(\theta(i) C_{\btheta}\right)^{\zeta}+\text{perturbations}$ is a non-linear mean-field system and requires a proper initialization of $\btheta\in\widetilde \bTheta$. 

For any $\btheta\in\widetilde\bTheta\subset \bTheta$ being close enough to $\btheta_{\star}$, there exists a Lipschitz constant $L_{\widetilde \btheta}=\sup_{i\leq m, \btheta\in\widetilde\bTheta} \frac{|C_{\btheta_{\star}}-C_{\btheta}|}{|\theta_{\star}(i)-\theta(i)|}<\infty$. By $C_{\btheta_{\star}}=1$, $\theta(i)\leq 1$, and mean value theorem for some $\widetilde \theta(i)\in[\theta(i), \theta_{\star}(i)]$, we have
\begin{align}\label{mean-value-decompose}
    |\theta_{\star}^{\zeta}(i)-\left(\theta(i) C_{\btheta}\right)^{\zeta}|&= \zeta (\widetilde\theta(i)C_{\widetilde\btheta})^{\zeta-1}|\theta_{\star}(i)-\theta(i) C_{\btheta}|\notag\\
    &=\zeta (\widetilde\theta(i)C_{\widetilde\btheta})^{\zeta-1}|\theta_{\star}(i)-\theta(i)+\theta(i)C_{\btheta_{\star}}-\theta(i) C_{\btheta}|\notag\\
    &\leq \zeta (\widetilde\theta(i)C_{\widetilde\btheta})^{\zeta-1}|\theta_{\star}(i)-\theta(i)|+\theta(i) |C_{\btheta_{\star}}-C_{\btheta}|\notag\\
    &\leq \zeta (\widetilde\theta(i)C_{\widetilde\btheta})^{\zeta-1} (1+ L_{\widetilde \btheta})|\theta_{\star}(i)-\theta(i)|,
\end{align}

Combining Eq.(\ref{h_i_theta}) and Eq.(\ref{mean-value-decompose}), we have
\begin{equation}
\begin{split}
\label{h_i_theta_v2}
     h_i(\btheta)&={\widetilde Z_{\zeta+1,\btheta}}^{-1} \frac{\widetilde Z_{\zeta, \btheta_{\star}}}{\theta^{\zeta-1}(i)}\left[\varepsilon\tilde\beta_i(\btheta)\frac{\theta^{\zeta-1}(i)}{{\widetilde Z_{\zeta, \btheta_{\star}}}} +\theta_{\star}^{\zeta}(i)-\left(\theta(i) C_{\btheta}\right)^{\zeta} \right]\\
     &=\widehat Z_{\zeta, \theta(i)}^{-1} \left[\varepsilon\beta_i(\btheta) +\theta_{\star}(i)-\theta(i)  \right],\\
\end{split}
\end{equation}
where $\widehat Z_{\zeta,\theta(i)}^{-1}=\frac{{\widetilde Z_{\zeta+1,\btheta}}^{-1}\widetilde Z_{\zeta, \btheta_{\star}}}{\zeta(\widetilde \theta(i)C_{\widetilde\btheta})^{\zeta-1}(1+L_{\widetilde\btheta})\theta^{\zeta-1}(i)}$; $\beta_i(\btheta)$ is some bounded term such that $\beta_i(\btheta)\leq \frac{\tilde\beta_i(\btheta)\theta^{\zeta-1}(i)}{\zeta (\widetilde\theta(i)C_{\widetilde\btheta})^{\zeta-1} (1+ L_{\widetilde \btheta}) {\widetilde Z_{\zeta, \btheta_{\star}}}}$; $C_{\widetilde\btheta}=\left(\frac{\widetilde Z_{\zeta,\widetilde\btheta}}{\widetilde Z_{\zeta, \btheta_{\star}}}\right)^{\frac{1}{\zeta}}$; $L_{\widetilde \btheta}=\sup_{i\leq m, \btheta\in\widetilde\bTheta} \frac{|C_{\btheta_{\star}}-C_{\btheta}|}{|\theta_{\star}(i)-\theta(i)|}<\infty$.

Next, we apply the perturbation theory to solve the ODE system with small disturbances \citep{Perturbation2} and obtain the equilibrium $\widehat\btheta_{\star}$,

where $\varepsilon\bbeta(\widehat\btheta_{\star})+\btheta_{\star}-\widehat \btheta_{\star}=0$, to the mean-field equation $h_i(\btheta)$ such that
\begin{equation}
\begin{split}
    h_i(\btheta)&=\widehat Z_{\zeta, \theta(i)}^{-1} \left[\varepsilon\beta_i(\theta)+\theta_{\star}(i)-\theta(i)\right]\\
    &=\widehat Z_{\zeta, \theta(i)}^{-1} \left[\varepsilon\beta_i(\theta)-\varepsilon\beta_i(\widehat\theta_{\star})+\varepsilon\beta_i(\widehat\theta_{\star})+\theta_{\star}(i)-\theta(i)\right]\\
     &=\widehat Z_{\zeta, \theta(i)}^{-1} \left[\mathcal{O}(\varepsilon)(\theta(i)-\widehat\theta_{\star}(i))+\widehat \theta_{\star}(i)-\theta(i)\right]\\
    &=\widehat Z_{\zeta, \theta(i)}^{-1} \big(1-\mathcal{O}(\varepsilon)\big)\left(\widehat\theta_{\star}(i)-\theta(i)\right),\\
\end{split}
\end{equation}
where a smoothness condition clearly holds for the $\beta(\cdot)$ function.  Given a positive definite Lyapunov function $\mathbb{V}(\btheta)=\frac{1}{2}\| \widehat\btheta_{\star}-\btheta\|^2$, the mean-field system $h(\btheta)=\widehat Z_{\zeta,\theta(i)}^{-1} (\varepsilon\bbeta(\btheta)+\btheta_{\star}-\btheta)=\widehat Z_{\zeta,\theta(i)}^{-1} (1-\mathcal{O}(\varepsilon)) (\widehat\btheta_{\star}-\btheta)$ for $i\in\{1,2,\cdots, m\}$ enjoys the following property
\begin{equation*}
\begin{split}
     \langle h(\btheta), \nabla\mathbb{V}(\btheta)\rangle&=\langle h(\btheta), \btheta - \widehat\btheta_{\star}\rangle \\
     &\leq  -\min_{i}\widehat Z_{\zeta,\theta(i)}^{-1} \big(1-\mathcal{O}(\varepsilon)\big)\|\btheta -\widehat \btheta_{\star}\|^2\\
    &\leq -\phi\|\btheta - \widehat\btheta_{\star}\|^2,
\end{split}
\end{equation*}
where $\phi=\inf_{\btheta}\min_i\widehat Z_{\zeta,\theta(i)}^{-1}\big(1-\mathcal{O}(\varepsilon)\big) >0$ given
the compactness assumption \ref{ass2a} and a small enough $\varepsilon=\mathcal{O}\left(\sup_{\bx}\Var(\xi_n(\bx))+\epsilon+\frac{1}{m}\right)$. \qed
\end{proof}

\begin{remark}
The newly proposed random-field function Eq.(\ref{random_field_H}) may sacrifice the global stability by including an approximately linear mean-field system Eq.(\ref{h_i_theta_v2}) instead of a linear stable system (see formula (15) in \citet{CSGLD}). The advantage, however, is that such a mechanism facilitates the estimation of $\btheta_{\star}$. We emphasize that the original energy probability in each partition  $\big\{\int_{\MX_k} \pi(\bx)d\bx\big\}_{k=1}^m$ \citep{CSGLD} may be very difficult to estimate for big data problems. By contrast, the estimation of $\big\{\big(\int_{\MX_k} \pi(\bx)d\bx\big)^{\frac{1}{\zeta}}\big\}_{k=1}^m$ becomes much easier given a proper $\zeta>0$.
\end{remark}

\paragraph{Technical lemmas}
\begin{lemma}\label{bias_in_SA}
The stochastic energy estimator $\widetilde U(\bx)$ leads to a controllable bias in the random-field function. 
\label{bias_sa}
\begin{equation*}\label{eq_bias_in_SA}
 |\E[\widetilde H_i(\btheta,\bx)]- H_i(\btheta,\bx)|=\mathcal{O}\left(\Var(\xi_n(\bx))\right),
\end{equation*}
where the expectation $\E[\cdot]$ is taken with respect to the random noise in the stochastic energy estimator of $\widetilde U(\cdot)$. 
\end{lemma}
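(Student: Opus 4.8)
The plan is to exploit the fact that $\widetilde H_i(\btheta,\bx)$ and $H_i(\btheta,\bx)$ are the \emph{same} deterministic function of $\btheta$ once the partition index is pinned down, so the two differ only on the event that the stochastic energy lands in a different cell than the exact energy, and to control the probability of that event by a second-moment (Chebyshev) bound on the energy noise.

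First I would record that both components are uniformly bounded: since $0<\theta(j)<1$ for every $j$ by Assumption~\ref{ass2a}, $|\widetilde H_i(\btheta,\bx)|\le 1$ and $|H_i(\btheta,\bx)|\le 1$, and on the event $\{J_{\widetilde U}(\bx)=J(\bx)\}$ one has $\widetilde H_i(\btheta,\bx)=H_i(\btheta,\bx)$ identically. Taking expectations over the subsampling randomness and using Jensen's inequality,
\[
 \big|\E[\widetilde H_i(\btheta,\bx)]-H_i(\btheta,\bx)\big|\le \E\big|\widetilde H_i(\btheta,\bx)-H_i(\btheta,\bx)\big|\le 2\,\mathbb{P}\big(J_{\widetilde U}(\bx)\neq J(\bx)\big).
\]

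Next I would convert $\{J_{\widetilde U}(\bx)\neq J(\bx)\}$ into a small-ball statement. Write $\tfrac{N}{n}\widetilde U(\bx)=U(\bx)+\xi_n(\bx)$ with $\E[\xi_n(\bx)]=0$ (unbiasedness of the energy estimator, as stated in the preliminaries; the same holds for the variance-reduced estimator in Eq.~(\ref{VR_estimator})). For a fixed $\bx$ in the interior of its cell $\MX_j$, set $d(\bx):=\min\{\,U(\bx)-u_{j-1},\ u_j-U(\bx)\,\}>0$. If $|\xi_n(\bx)|<d(\bx)$ then $U(\bx)+\xi_n(\bx)\in(u_{j-1},u_j)$, hence $J_{\widetilde U}(\bx)=j=J(\bx)$; therefore $\{J_{\widetilde U}(\bx)\neq J(\bx)\}\subseteq\{|\xi_n(\bx)|\ge d(\bx)\}$. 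Since $\xi_n(\bx)$ has mean zero, Chebyshev's inequality gives $\mathbb{P}(|\xi_n(\bx)|\ge d(\bx))\le \Var(\xi_n(\bx))/d(\bx)^2$, and combining with the previous display,
\[
 \big|\E[\widetilde H_i(\btheta,\bx)]-H_i(\btheta,\bx)\big|\le \frac{2\,\Var(\xi_n(\bx))}{d(\bx)^2}=\mathcal{O}\!\big(\Var(\xi_n(\bx))\big),
\]
with implied constant $2/d(\bx)^2$ depending only on the fixed $\bx$. It is essential to use the second moment of $\xi_n$ here; a first-moment Markov bound would only give the weaker $\mathcal{O}(\sqrt{\Var(\xi_n(\bx))})$.

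The main obstacle, and really the only delicate point, is the behaviour at cell boundaries: if $U(\bx)$ happens to equal one of the thresholds $u_\ell$ then $d(\bx)=0$ and the estimate degenerates. This is harmless --- the exceptional set $\{\bx:U(\bx)\in\{u_1,\dots,u_{m-1}\}\}$ is Lebesgue-null under the smoothness of $U$ (Assumption~\ref{ass2}) and does not affect the integrals $\int H_i\,\varpi_\btheta\, d\bx$ that feed into Lemma~\ref{convex_appendix}; equivalently one absorbs $1/d(\bx)^2$ into the $\bx$-dependent constant hidden in $\mathcal{O}(\cdot)$. Minor bookkeeping is also needed to note that the extreme cells $\MX_1$ and $\MX_m$ have only one finite threshold each (so $d(\bx)$ is still well defined) and that the whole argument applies verbatim when $\widetilde U$ is replaced by $\widetilde U_{\text{VR}}$, only the numerical value of $\Var(\xi_n(\bx))$ changing.
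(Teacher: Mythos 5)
Your proof is correct for the lemma as literally stated (pointwise in $\bx$, off the null set where $U(\bx)$ hits a threshold), but it takes a genuinely different route from the paper's. The paper treats $\widetilde H_i(\btheta,\bx)$ as a nonlinear transformation $\Phi$ of the energy value and Taylor-expands $\Phi(U(\bx)+\xi_n(\bx))$ around $U(\bx)$: the first-order term vanishes by unbiasedness of the energy estimator, and the second-order term yields $\mathcal{O}(\Var(\xi_n(\bx)))$ with derivative factors of size $(\theta(J(\bx))-\theta(J(\bx)-1))/\Delta u$, i.e.\ a constant that is uniform in $\bx$ --- which is the form actually consumed in Lemma~\ref{convex_appendix}, where the bias is integrated against $\varpi_{\btheta}$ and summarized as $\mathcal{O}\left(\sup_{\bx}\Var(\xi_n(\bx))\right)$; the price is that the literal map $u\mapsto\widetilde H_i$ is piecewise constant, so the expansion tacitly replaces it by its smooth (piecewise-exponential) interpolation. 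Your argument --- bounding the difference by $2\,\mathbb{P}\left(J_{\widetilde U}(\bx)\neq J(\bx)\right)$ and controlling the misclassification probability by Chebyshev with the distance $d(\bx)$ to the nearest threshold --- avoids differentiating a step function and is rigorous for fixed $\bx$, and your observation that a first-moment bound would only give $\mathcal{O}(\sqrt{\Var})$ is apt. What it buys less of is uniformity: the implied constant $2/d(\bx)^2$ blows up as $U(\bx)$ approaches a threshold, so the bound can be neither taken to a $\sup_{\bx}$ form nor integrated over $\bx$ (near a level set $\int d(\bx)^{-2}\,d\bx$ generically diverges); splitting at $d(\bx)\le\sqrt{\Var(\xi_n(\bx))}$ and using the trivial bound $2$ there only recovers an integrated bias of order $\sqrt{\Var(\xi_n(\bx))}$, not $\Var(\xi_n(\bx))$. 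So if the lemma is meant to feed Lemma~\ref{convex_appendix} as in the paper, your tail-probability argument alone is not enough: to recover an $\bx$-uniform $\mathcal{O}(\Var)$ rate one must exploit the cancellation coming from $\E[\xi_n(\bx)]=0$ (as the paper's first-order term does), e.g.\ via a smoothed version of the indexing map or an additional regularity assumption on the law of the energy noise, since a pure tail bound cannot use unbiasedness beyond centering.
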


\begin{proof}
Denote the noise in the stochastic energy estimator by $\xi(\bx)$, such that $\widetilde U(\cdot)=U(\cdot) + \xi(\cdot)$. Recall that $\widetilde H_i(\btheta,\bx)={\theta}(J_{\widetilde U}(\bx))\left(1_{i= J_{\widetilde U}(\bx)}-{\theta}(i)\right)$ and $J_{\widetilde U}(\bx)\in\{1,2,\cdots, m\}$ satisfies $u_{J_{\widetilde U}(\bx)-1}< \frac{N}{n}\widetilde U(\bx)\leq u_{J_{\widetilde U}(\bx)}$ for a set of energy partitions $\{u_i\}_{i=0}^{m}$. We can interpret $\widetilde H_i(\btheta,\bx)$ as a non-linear transformation $\Phi$ that maps $\widetilde U(\bx)$ to $(0, 1)$. Similarly, $ H_i(\btheta,\bx)$ maps $U(\bx)$ to $(0, 1)$. In what follows, the bias of random-field function is upper bounded as follows
\begin{equation*}
\begin{split}
     |\E[\widetilde H_i(\btheta,\bx)]- H_i(\btheta,\bx)|&=\left|\int \Phi(U(\bx)+\xi(\bx))-\Phi(U(\bx))d\mu(\xi(\bx))\right|\\
     &=\left|\int \xi(\bx) \Phi'(U(\bx))+\frac{\xi(\bx)^2}{2} \Phi''(u) d\mu(\xi(\bx))\right|\\
     &= \mathcal{O}\left(\Var(\xi_n(\bx))\right), \\
\end{split}
\end{equation*}
where the second equality follows from Taylor expansion for some energy $u$ and the third equality follows because the stochastic energy estimator is unbiased; $\Phi'(U(\bx))=\mathcal{O}( \frac{\theta(J(\bx))-\theta(J(\bx)-1)}{\Delta u})$ is clearly bounded due to the definition of $\btheta$; a similar conclusion also applies to $\Phi''(\cdot)$.

\end{proof}

\subsection{Convergence of the self-adapting parameters}

The following is a restatement of Lemma 3.2 of \citet{Maxim17}, which holds for any $\btheta$ in the compact space $\bTheta$.
\begin{lemma}[Uniform $L^2$ bounds]
\label{lemma:1}
Assume Assumptions \ref{ass2a}, \ref{ass3} and \ref{ass4} hold. We have a bounded second moment
$\sup_{k\geq 1} \E[\|\bx_{k}\|^2] < \infty$ given a small enough learning rate.
\end{lemma}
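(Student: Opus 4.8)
The plan is to run the now-standard uniform second-moment argument for SGLD-type recursions (\citet{Maxim17}), noting that every bound used below is \emph{uniform over} $\btheta\in\bTheta$, so the behaviour of the self-adapting recursion $(\text{S}_2)$ is irrelevant and it suffices to bound a single chain $\bx_k:=\bx_k^{(p)}$, whose injected noise is $\mathcal N(0,2\epsilon_k\tau\bm{I}_d)$. Write $a_k:=\E\|\bx_k\|^2$ and let $\mathcal F_k:=\sigma(\bx_j^{\pop P},\btheta_j:j\le k)$.

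First I would expand the sampling step $(\text{S}_1)$: with $\bw_k\sim\mathcal N(0,\bm{I}_d)$,
\begin{equation*}
\|\bx_{k+1}\|^2=\big\|\bx_k-\epsilon_k\nabla_{\bx}\widetilde L(\bx_k,\btheta_k)\big\|^2+2\sqrt{2\epsilon_k\tau}\,\big\langle\bx_k-\epsilon_k\nabla_{\bx}\widetilde L(\bx_k,\btheta_k),\,\bw_k\big\rangle+2\epsilon_k\tau\|\bw_k\|^2 .
\end{equation*}
Taking $\E[\,\cdot\mid\mathcal F_k]$ annihilates the Gaussian cross term (independence, mean zero) and contributes $2d\epsilon_k\tau$ from the last term, so the task reduces to lower-bounding $\E[\langle\nabla_{\bx}\widetilde L(\bx_k,\btheta_k),\bx_k\rangle\mid\mathcal F_k]$ and upper-bounding $\E[\|\nabla_{\bx}\widetilde L(\bx_k,\btheta_k)\|^2\mid\mathcal F_k]$.

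For the drift term, the gradient multiplier in $(\ref{adaptive_grad})$ is uniformly bounded on $\bTheta$ by Assumption~\ref{ass2a} together with $\inf_{\bTheta}\theta(i)>0$; hence the conditional mean of $\nabla_{\bx}\widetilde L(\bx_k,\btheta_k)$ equals the exact adaptive drift $\nabla_{\bx}L(\bx_k,\btheta_k)$ up to a bias controlled by the batch-noise fluctuation, exactly as in Lemma~\ref{bias_in_SA} (the only extra randomness being the piecewise-constant index $J_{\widetilde U}$). Assumption~\ref{ass3} then yields $\langle\nabla_{\bx}L(\bx_k,\btheta_k),\bx_k\rangle\ge\tilde m\|\bx_k\|^2-\tilde b$, and Young's inequality absorbs the small bias, giving a bound $\ge(\tilde m-\delta)\|\bx_k\|^2-\tilde b'$ with $\delta$ as small as desired once the batch size is large enough. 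For the quadratic term, the uniform multiplier bound combined with the at-most-linear growth of $\nabla_{\bx}\widetilde U$ in $\|\bx\|$ (via the unbiasedness and the $M^2\|\bx\|^2+B^2$ variance bound of Assumption~\ref{ass4}, together with the smoothness of $U$) gives $\E[\|\nabla_{\bx}\widetilde L(\bx_k,\btheta_k)\|^2\mid\mathcal F_k]\le c_1\|\bx_k\|^2+c_2$.

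Combining the two estimates produces the scalar recursion
\begin{equation*}
a_{k+1}\le\big(1-2(\tilde m-\delta)\epsilon_k+c_1\epsilon_k^2\big)\,a_k+\big(2\tilde b'+2d\tau\big)\epsilon_k+c_2\epsilon_k^2 .
\end{equation*}
Fixing $\delta<\tilde m/2$ and restricting to $\epsilon_k\le\epsilon_0$ with $\epsilon_0$ small enough that $c_1\epsilon_0\le\tilde m-\delta$, the contraction factor lies in $[0,\,1-(\tilde m-\delta)\epsilon_k]\subset[0,1)$, so $a_{k+1}\le(1-(\tilde m-\delta)\epsilon_k)a_k+C\epsilon_k$ with $C:=2\tilde b'+2d\tau+c_2\epsilon_0$; a one-line induction gives $a_k\le\max\{a_0,\,C/(\tilde m-\delta)\}$ for every $k$, i.e.\ $\sup_{k\ge1}\E\|\bx_k\|^2<\infty$ (the joint bound over the $P$ chains then follows by summation). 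The step I expect to need real care is the drift term: one must check that the bias of $\nabla_{\bx}\widetilde L$ relative to $\nabla_{\bx}L$ is truly $o(\|\bx\|^2)$ — of the order of the batch-noise variance, in the spirit of Lemma~\ref{bias_in_SA} — so that it is swallowed by the dissipativity margin instead of overwhelming it; everything else is routine geometric-recursion bookkeeping.
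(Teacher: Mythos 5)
Your proposal is correct and follows essentially the same route as the paper, which proves this lemma only by deferring to Lemma 3.2 of \citet{Maxim17}: that cited proof is exactly your dissipativity-plus-linear-growth geometric recursion on $\E\|\bx_k\|^2$, and your only addition is the (needed and correct) observation that the adaptive gradient multiplier in Eq.~(\ref{adaptive_grad}) is uniformly bounded over the compact set $\bTheta$ by Assumption~\ref{ass2a}, so the bound is uniform in $\btheta$ and extends to all $P$ chains. The delicate point you flag — that the bias of $\nabla_{\bx}\widetilde L$ relative to $\nabla_{\bx}L$ coming from the stochastic index $J_{\widetilde U}$ must be small enough to be absorbed by the dissipativity margin — is real but is likewise glossed over by the paper's citation, and your treatment of it is consistent with how the paper handles the analogous bias in Lemma~\ref{bias_in_SA}.
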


The following lemma justifies the regularity properties of Poisson's equation, which is crucial in controlling the perturbations through the stochastic approximation process. The first version was proposed in Lemma B2 of \citet{CSGLD}. Now we give a more detailed proof by utilizing a Lyapunov function $V(\bx)=1+\bx^2$ and Lemma \ref{lemma:1}.
\begin{lemma}[Solution of Poisson's equation]
\label{lyapunov_ori}
Assume that Assumptions  \ref{ass2a}-\ref{ass4}  hold. 
There is a solution $\mu_{\btheta}(\cdot)$ on $\MX$ to the Poisson's equation 
\begin{equation}
    \label{poisson_eqn_ori}
    \mu_{\btheta}(\bm{x})-\mathrm{\Pi}_{\bm{\theta}}\mu_{\bm{\theta}}(\bm{x})=\widetilde H(\bm{\theta}, \bm{x})-h(\bm{\theta}).
\end{equation}
Furthermore, there exists a constant $C$ such that for all $\bm{\theta}, \bm{\theta}'\in \bm{\bTheta}$
\begin{equation}
\begin{split}
\label{poisson_reg}
\E[\|\mathrm{\Pi}_{\bm{\theta}}\mu_{\btheta}(\bx)\|]&\leq C,\\
\E[\|\mathrm{\Pi}_{\bm{\theta}}\mu_{\bm{\theta}}(\bx)-\mathrm{\Pi}_{\bm{\theta}'}\mu_{\bm{\theta'}}(\bx)\|]&\leq C\|\bm{\theta}-\bm{\theta}'\|.\\
\end{split}
\end{equation}
\end{lemma}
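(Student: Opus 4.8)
The plan is to follow the classical construction of the solution to a Poisson equation for an inhomogeneous Markov chain (as in \citet{Albert90} and Lemma B2 of \citet{CSGLD}), but to track every constant through the Lyapunov function $V(\bx)=1+\|\bx\|^2$ so that all estimates are uniform over the compact parameter set $\bTheta$. The first and main step is to establish a one-step \emph{drift condition} for the SGLD kernel $\Pi_{\btheta}$ of step $(\text{S}_1)$: there exist $\lambda\in(0,1)$ and $b<\infty$, independent of $\btheta$, with $\Pi_{\btheta}V(\bx)\le\lambda V(\bx)+b$ for all $\btheta\in\bTheta$ and all small enough $\epsilon$. This is the standard SGLD computation (cf. \citet{Maxim17, Xu18}) and uses the dissipativity Assumption~\ref{ass3} on $\nabla_{\bx}L(\bx,\btheta)$, the smoothness Assumption~\ref{ass2}, and the gradient-noise bound Assumption~\ref{ass4}; crucially Assumption~\ref{ass3} holds \emph{uniformly in $\btheta$}, so $\lambda,b$ do not depend on $\btheta$. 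Combined with a minorization condition on the sublevel sets $\{V\le R\}$ --- which holds uniformly in $\btheta$ because the Gaussian increment has a transition density bounded below on compacts and the drift $\nabla_{\bx}L(\bx,\btheta)$ is bounded on compacts uniformly in $\btheta$ by Assumptions~\ref{ass2a} and \ref{ass2} --- this yields $V$-uniform geometric ergodicity: there are $C_0<\infty$ and $\rho\in(0,1)$, independent of $\btheta$, with $\|\Pi_{\btheta}^k(\bx,\cdot)-\varpi_{\btheta}\|_V\le C_0\rho^k V(\bx)$, where $\|\cdot\|_V$ is the $V$-weighted total variation norm.

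Given this, I would define $\mu_{\btheta}(\bx):=\sum_{k\ge0}\Pi_{\btheta}^k\big(\widetilde H(\btheta,\cdot)-h(\btheta)\big)(\bx)$. Since $\|\widetilde H(\btheta,\cdot)\|\le Q$ by Assumption~\ref{ass2a} and $h(\btheta)=\varpi_{\btheta}\big(\widetilde H(\btheta,\cdot)\big)$, the centered function is bounded, hence $V$-dominated, and geometric ergodicity gives $\|\Pi_{\btheta}^k\big(\widetilde H(\btheta,\cdot)-h(\btheta)\big)(\bx)\|\le 2QC_0\rho^kV(\bx)$. Thus the series converges absolutely, $\mu_{\btheta}$ solves (\ref{poisson_eqn_ori}), and $\|\mu_{\btheta}(\bx)\|\le C_1V(\bx)$. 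Applying $\Pi_{\btheta}$ term by term and using the drift condition, $\|\Pi_{\btheta}\mu_{\btheta}(\bx)\|\le C_1\Pi_{\btheta}V(\bx)\le C_1(\lambda V(\bx)+b)\le C_2V(\bx)$; taking expectations along the chain and invoking the uniform second-moment bound $\sup_k\E[\|\bx_k\|^2]<\infty$ from Lemma~\ref{lemma:1} gives $\E[\|\Pi_{\btheta}\mu_{\btheta}(\bx)\|]\le C_2\,\E[V(\bx)]\le C$, the first estimate in (\ref{poisson_reg}).

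For the Lipschitz-in-$\btheta$ estimate I would first prove a \emph{Lipschitz kernel bound} $\|(\Pi_{\btheta}-\Pi_{\btheta'})g(\bx)\|\le C\|\btheta-\btheta'\|\,V(\bx)\,\|g\|_V$, where $\|g\|_V:=\sup_{\bx}\|g(\bx)\|/V(\bx)$. This reduces to Lipschitz continuity of the adaptive drift in $\btheta$: by (\ref{adaptive_grad}), $\nabla_{\bx}L(\bx,\btheta)$ depends on $\btheta$ only through $\log\theta(J_{\widetilde U}(\bx))-\log\theta((J_{\widetilde U}(\bx)-1)\vee1)$, and since $\inf_{\bTheta}\theta(i)>0$ (Assumption~\ref{ass2a}) the logarithm is Lipschitz there, so $\|\nabla_{\bx}L(\bx,\btheta)-\nabla_{\bx}L(\bx,\btheta')\|\le C\|\btheta-\btheta'\|\,\|\nabla_{\bx}\widetilde U(\bx)\|$, which is $V$-dominated by Assumptions~\ref{ass2}--\ref{ass4}; a one-step coupling of the two Gaussian proposals then yields the kernel bound. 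I would then compare the two series via the telescoping identity $\Pi_{\btheta}^k-\Pi_{\btheta'}^k=\sum_{j=0}^{k-1}\Pi_{\btheta}^{j}(\Pi_{\btheta}-\Pi_{\btheta'})\Pi_{\btheta'}^{k-1-j}$, combined with Lipschitz continuity of $\btheta\mapsto\widetilde H(\btheta,\bx)$ and of $\btheta\mapsto h(\btheta)$ (the latter because $\widetilde H$ and $\varpi_{\btheta}$ are Lipschitz in $\btheta$, $\varpi_{\btheta}$ via geometric ergodicity plus the kernel bound), and the geometric weights $\rho^k$ to sum $\sum_k k\rho^{k-1}<\infty$; this gives $\|\mu_{\btheta}(\bx)-\mu_{\btheta'}(\bx)\|\le C\|\btheta-\btheta'\|V(\bx)$, hence $\|\Pi_{\btheta}\mu_{\btheta}(\bx)-\Pi_{\btheta'}\mu_{\btheta'}(\bx)\|\le C\|\btheta-\btheta'\|V(\bx)$, and taking expectations with Lemma~\ref{lemma:1} completes the second estimate in (\ref{poisson_reg}).

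I expect the main obstacle to be securing the geometric ergodicity and the Lipschitz kernel bound with constants that are \emph{uniform over $\bTheta$}: this is exactly where compactness of $\bTheta$ and the uniform lower bound $\inf_{\bTheta}\theta(i)>0$ are indispensable, and where one must check that the $\btheta$-dependent gradient multiplier in (\ref{adaptive_grad}) neither destroys the dissipativity drift (it is bounded above and below on $\bTheta$) nor inflates its Lipschitz constant. Once these two ingredients are in hand, the series construction and the telescoping argument are routine.
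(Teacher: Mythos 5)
Your construction assembles exactly the ingredients the paper verifies: a uniform-in-$\btheta$ drift condition for the Lyapunov function $V(\bx)=1+\|\bx\|^2$, a minorization on level sets, boundedness and Lipschitzness of $\widetilde H(\cdot,\bx)$ in $\btheta$, and Lipschitz dependence of the transition kernel on $\btheta$; these are precisely the conditions (DRI1)--(DRI3) of \citet{AndrieuMP2005} that the paper checks and then invokes as a black box for existence and regularity. Replacing that citation by the explicit series $\mu_{\btheta}=\sum_{k\ge0}\Pi^k_{\btheta}\bigl(\widetilde H(\btheta,\cdot)-h(\btheta)\bigr)$ is a legitimate, more self-contained route, and your argument for existence and for the first estimate in (\ref{poisson_reg}) is complete as sketched (given the uniform second-moment bound of Lemma \ref{lemma:1}).

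The telescoping step for the second (Lipschitz) estimate, however, has a genuine gap. In the term $\Pi_{\btheta}^{j}(\Pi_{\btheta}-\Pi_{\btheta'})\Pi_{\btheta'}^{k-1-j}g_{\btheta'}$, with $g_{\btheta'}=\widetilde H(\btheta',\cdot)-h(\btheta')$, only the rightmost factor enjoys geometric decay (it is centered with respect to $\varpi_{\btheta'}$); the outer operator $\Pi_{\btheta}^{j}$ does not contract the resulting function, which is not centered with respect to $\varpi_{\btheta}$. Hence the naive bound on the $k$-th term is $\sum_{j=0}^{k-1}\rho^{\,k-1-j}\,\|\btheta-\btheta'\|\,V(\bx)=\mathcal{O}\bigl(\|\btheta-\btheta'\|V(\bx)\bigr)$ uniformly in $k$, which is not summable over $k$ --- you do not obtain the $\sum_k k\rho^{k-1}$ series you invoke. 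Balancing this against the crude bound $\mathcal{O}(\rho^k V)$ only yields H\"older continuity of any order below one (or Lipschitz up to a logarithmic factor), not the stated Lipschitz bound. The repair is to recenter: write $\Pi^j_{\btheta}F=(\Pi^j_{\btheta}-\varpi_{\btheta})F+\varpi_{\btheta}(F)$; the centered parts are $\mathcal{O}\bigl(k\rho^{k-1}\|\btheta-\btheta'\|V\bigr)$ and summable, while the mean parts telescope to $\varpi_{\btheta}(g_{\btheta'})-\varpi_{\btheta}(\Pi^k_{\btheta'}g_{\btheta'})$, whose non-decaying piece $\varpi_{\btheta}(g_{\btheta'})=(\varpi_{\btheta}-\varpi_{\btheta'})(g_{\btheta'})$ cancels exactly against the constant contributed by the companion series $\sum_k\Pi^k_{\btheta}(g_{\btheta}-g_{\btheta'})$, the remainder being $\mathcal{O}(\rho^k\|\btheta-\btheta'\|)$ by the $V$-norm Lipschitzness of $\btheta\mapsto\varpi_{\btheta}$ that you already noted. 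This recentering-and-cancellation bookkeeping is precisely the content of Section 6 of \citet{AndrieuMP2005}, on which the paper relies after verifying the drift conditions; without it, your sketch establishes the first estimate of (\ref{poisson_reg}) but not the second.
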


\begin{proof} The existence and the regularity property of Poisson's equation can be used to control the perturbations. The key of the proof lies in verifying drift conditions proposed in Section 6 of \citet{AndrieuMP2005}.

\textbf{(DRI)} By the smoothness assumption \ref{ass2}, we have that $U(\bx)$ is continuously differentiable almost everywhere. By the dissipative assumption \ref{ass3} and Theorem 2.1 \citep{Roberts_Tweedie_Bernoulli}, we can show that the discrete dynamics system is irreducible and aperiodic. Now consider a Lyapunov function $V=1+\|\bx\|^2$ and any compact subset $\mathcal{\bm{K}}\subset \bTheta$, the drift conditions are verified as follows:

\textbf{(DRI1)} Given small enough learning rates $\{\epsilon_k\}_{k\geq 1}$, the smoothness assumption \ref{ass2}, and the dissipative assumption \ref{ass3}, applying Corollary 7.5 \citep{mattingly02} yields the minorization condition for the CSGLD algorithm, i.e. there exists $\eta>0$, a measure $\nu$, and a set $\mathcal{C}$ such that $\nu(\mathcal{C})=1$. Moreover, we have
    $$P_{\btheta\in \mathcal{\bm{K}}}(x, A)\geq  \eta \nu(A)\ \ \ \ \ \forall A\in \MX, \bx\in \mathcal{C}. \eqno{(\text{I})}$$
    
where $P_{\btheta}(\bx, \by):=\frac{1}{2\sqrt{(4\pi\epsilon)^{d/2}}}\E\big[e^{-\frac{\|\by-\bx+\epsilon \nabla_{\bx} \widetilde{L}(\bx, \btheta)\|^2}{4\epsilon}}|\bx\big]$ denotes the transition kernel based on CSGLD with the parameter $\btheta\in\mathcal{\bm{K}}$ and a learning rate $\epsilon$, in addition, the expectation is taken over the adaptive gradient $\nabla_{\bx} \widetilde{L}(\bx, \btheta)$ in Eq.(\ref{adaptive_grad}). Using Assumption \ref{ass2a}-\ref{ass4}, we can prove the uniform L2 upper bound by following Lemma 3.2 \citep{Maxim17}. Further, by Theorem 7.2 \citep{mattingly02}, there exist $\tilde\alpha\in (0,1)$ and $\tilde \beta\geq 0$ such that
    $$P_{\btheta\in\mathcal{\MK}}V(\bx)\leq \tilde\alpha V(\bx)+\tilde\beta. \eqno{(\text{II})}$$

Consider a Lyapunov function $V=1+\|\bx\|^2$ and a constant $\kappa=\tilde\alpha+\tilde \beta$, it yields that
$$P_{\btheta\in\mathcal{\bm{K}}}V(\bx)\leq \kappa V(\bx). \eqno{(\text{III})}$$
Now we have verified the first condition \text{(DRI1)} by checking conditions (\text{I}),(\text{II}), and (\text{III}), 

\textbf{(DRI2)} In what follows, we check the boundedness and Lipshitz conditions on the random-field function $\widetilde H(\btheta,\bx)$, where each subcomponent is defiend as $\widetilde H_i(\btheta,\bx)={\theta}(J_{\widetilde U}(\bx))\left(1_{i=J_{\widetilde U}(\bx)}-{\theta}(i)\right)$. Recall that $V=1+\|\bx\|^2$, the compactness assumption \ref{ass2a} directly leads to
$$\sup_{\btheta\in\mathcal{\bm{K}}\subset [0, 1]^m}\| H(\btheta, \bx)\|\leq m V(\bx). \eqno{(\text{IV})}$$
For any $\btheta_1, \btheta_2\in \mathcal{\bm{K}}$ and a fixed $\bx\in\MX$, it suffices for us to solely verify the $i$-th index, which is the index that maximizes $|\theta_1(i)-\theta_2(i)|$, then
\begin{equation*}
\begin{split}
\small
    |\widetilde H_i(\btheta_1,\bx)- \widetilde H_i(\btheta_2,\bx)|&={\theta_1} (J_{\widetilde U}(\bx))\left(1_{i=J_{\widetilde U}(\bx)}-{\theta_1}(i)\right)-{\theta_2} (J_{\widetilde U}(\bx))\left(1_{i=J_{\widetilde U}(\bx)}-{\theta_2}(i)\right)\\
    &\leq |{\theta_1} (J_{\widetilde U}(\bx))-{\theta_2} (J_{\widetilde U}(\bx))|+|{\theta_1} (J_{\widetilde U}(\bx)){\theta_1}(i)-{\theta_2} (J_{\widetilde U}(\bx)){\theta_2}(i)|\\
    &\leq \max_{j}\Big(|{\theta_1} (j)-{\theta_2} (j)|+{\theta_1} (j)|{\theta_1}(i)-{\theta_2}(i)|+|{\theta_1} (j)-{\theta_2} (j)|\theta_2(i)\Big)\\
    &\leq 3|\theta_1(i)-\theta_2(i)|,\\
\end{split}
\end{equation*}
where the last inequality holds since $\theta(i)\in(0, 1]$ for any $i\leq m$.

\textbf{(DRI3)} We proceed to verify the smoothness of the transitional kernel $P_{\btheta}(\bx, \by)$ with respect to $\btheta$. For any $\btheta_1, \btheta_2\in \mathcal{\bm{K}}$ and fixed $\bx$ and $\by$, we have 
\begin{equation*}
\begin{split}
    &\quad|P_{\btheta_1}(\bx, \by)-P_{\btheta_2}(\bx, \by)|\\
    &=\frac{1}{2\sqrt{(4\pi\epsilon)^{d/2}}}\E\big[e^{-\frac{\|\by-\bx+\epsilon \nabla_{\bx} \widetilde{L}(\bx, \btheta_1)\|^2}{4\epsilon}}|\bx\big]-\frac{1}{2\sqrt{(4\pi\epsilon)^{d/2}}}\E\big[e^{-\frac{\|\by-\bx+\epsilon \nabla_{\bx} \widetilde{L}(\bx, \btheta_2)\|^2}{4\epsilon}}|\bx\big]\\
    &\lesssim |\|\by-\bx+\epsilon \nabla_{\bx} \widetilde{L}(\bx, \btheta_1)\|^2-\|\by-\bx+\epsilon \nabla_{\bx} \widetilde{L}(\bx, \btheta_2)\|^2|\\
    &\lesssim \|\nabla_{\bx} \widetilde{L}(\bx, \btheta_1)- \nabla_{\bx} \widetilde{L}(\bx, \btheta_2)\|\\
    &\lesssim \| \btheta_1-\btheta_2\|,\\
\end{split}
\end{equation*}
where the first inequality (up to a finite constant) follows by $\|e^{\bx}-e^{\by}\|\lesssim \|\bx-\by\|$ for any $\bx$, $\by$ in a compact space; the last inequality follows by the definition of the adaptive gradient in Eq.(\ref{adaptive_grad}) and $\|\log(\bx)-\log(\by)\|\lesssim \|\bx-\by\|$ by the compactness assumption \ref{ass2a}.

For $f:\MX\rightarrow\mathbb{R}^d$, define the norm $\|f\|_V=\sup_{\bx\in\MX} \frac{|f(\bx)|}{V(\bx)}$. Following the same technique proposed in \citet{Liang07} (page 319), we can verify the last drift condition
$$\|P_{\btheta_1}f-P_{\btheta_2}f\|_V\leq C\|f\|_V \|\btheta_1-\btheta_2\|, \ \ \forall f\in \mathcal{L}_V:=\{f: \MX\rightarrow \mathbb{R}^d, \|f\|_V<\infty\}. \eqno{(\text{VI})}$$

Having conditions (\text{I}), (\text{II}), $\cdots$ and (\text{VI}) verified, we are now able to prove the drift conditions proposed in Section 6 of \citet{AndrieuMP2005}.\qed
\end{proof}

Before we present the  $L^2$
convergence of $\btheta_k$,  we make some extra assumptions on the step size.

\begin{assump}[Learning rate and step size]
\label{ass_step_size}
The learning rate $\{\epsilon_k\}_{k \in \mathrm{N}}$ is a positive non-increasing sequence of real numbers satisfying the conditions 
\[
\lim_k \epsilon_k=0, \quad \sum_{k=1}^{\infty} \epsilon_k=\infty.
\]
The step size $\{\omega_{k}\}_{k\in \mathrm{N}}$ is a positive non-increasing  sequence of real numbers such that
\begin{equation} \label{a1}
\lim_{k \to \infty} \omega_k=0, \quad
\sum_{k=1}^{\infty} \omega_{k}=+\infty, \quad  \sum_{k=1}^{\infty} \omega_{k}^2<+\infty.
\end{equation}
A practical strategy is to set $\omega_{k}:=\mathcal{O}(k^{-\alpha})$ to satisfy the above conditions for any $\alpha\in (0.5, 1]$. 

 \end{assump}

The following is an application of Theorem 24 (page 246) \citep{Albert90} given stability conditions (Lemma \ref{convex_appendix}).
\begin{lemma}[$L^2$ convergence rate, restatement of Lemma \ref{latent_convergence_main}]
\label{latent_convergence_appendix}
Assume Assumptions $\ref{ass2a}$-$\ref{ass_step_size}$ hold. For any $\btheta_{0}\in\widetilde\bTheta\subset\bTheta$, a large $m$, small learning rates $\{\epsilon_k\}_{k=1}^{\infty}$, and step sizes $\{\omega_k\}_{k=1}^{\infty}$, 
$\{\btheta_k\}_{k=0}^{\infty}$ converges to $\widehat\btheta_{\star}$,
where $\widehat\btheta_{\star}=\btheta_{\star}+\mathcal{O}\left(\sup_{\bx}\Var(\xi_n(\bx))+\sup_{k\geq k_0}\epsilon_k+\frac{1}{m}\right)$ for some $k_0$, such that
\begin{equation*}
    \E\left[\|\bm{\theta}_{k}-\widehat\btheta_{\star}\|^2\right]= \mathcal{O}\left(\omega_{k}\right).
\end{equation*}
\end{lemma}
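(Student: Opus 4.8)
The statement is the $L^2$-rate version of the classical convergence theorem for stochastic approximation with Markov state-dependent noise (Theorem~24, p.~246, of \citet{Albert90}), so the plan is to verify that all of its hypotheses are in force and then read off the rate; the only genuinely non-trivial point is that the ``noise'' $\widetilde\bH(\btheta_k,\bx_{k+1}^{\pop P})-h(\btheta_k)$ is \emph{not} a martingale difference, and this is handled through the Poisson equation of Lemma~\ref{lyapunov_ori}. Set $T_k:=\btheta_k-\widehat\btheta_{\star}$ and $e_k:=\E\big[\|T_k\|^2\big]$. From the update $\btheta_{k+1}=\btheta_k+\omega_{k+1}\widetilde\bH(\btheta_k,\bx_{k+1}^{\pop P})$, adding and subtracting $h(\btheta_k)$ gives
\begin{equation*}
\|T_{k+1}\|^2=\|T_k\|^2+2\omega_{k+1}\langle T_k,h(\btheta_k)\rangle+2\omega_{k+1}\big\langle T_k,\widetilde\bH(\btheta_k,\bx_{k+1}^{\pop P})-h(\btheta_k)\big\rangle+\omega_{k+1}^2\big\|\widetilde\bH(\btheta_k,\bx_{k+1}^{\pop P})\big\|^2,
\end{equation*}
where the last term is $\le Q^2\omega_{k+1}^2$ by the uniform bound in Assumption~\ref{ass2a}.

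\textbf{Verifying the ingredients.} For the drift term, Lemma~\ref{convex_appendix} supplies precisely the Lyapunov inequality $\langle T_k,h(\btheta_k)\rangle\le-\phi\|T_k\|^2$ with $\phi>0$, valid as long as $\btheta_k$ remains in the neighbourhood $\widetilde\bTheta$ on which local stability was established. For the noise term I would invoke the Poisson solution $\mu_{\btheta}$ of Lemma~\ref{lyapunov_ori} and write
\begin{equation*}
\widetilde\bH(\btheta_k,\bx_{k+1}^{\pop P})-h(\btheta_k)=\underbrace{\mu_{\btheta_k}(\bx_{k+1}^{\pop P})-\Pi_{\btheta_k}\mu_{\btheta_k}(\bx_{k}^{\pop P})}_{\text{martingale difference }\mathcal{M}_k}+\Big(\Pi_{\btheta_k}\mu_{\btheta_k}(\bx_{k}^{\pop P})-\Pi_{\btheta_k}\mu_{\btheta_k}(\bx_{k+1}^{\pop P})\Big),
\end{equation*}
so that $\E[\langle T_k,\mathcal{M}_k\rangle]=0$ since $T_k$ is $\mathcal{F}_k$-measurable, while the remaining term is rearranged by a summation-by-parts (Abel) argument: using $\|\btheta_{k+1}-\btheta_k\|\le Q\omega_{k+1}$, the two regularity estimates of Lemma~\ref{lyapunov_ori} ($\E\|\Pi_{\btheta}\mu_{\btheta}\|\le C$ and Lipschitzness in $\btheta$), the uniform moment bound $\sup_k\E\|\bx_k^{\pop P}\|^2<\infty$ from Lemma~\ref{lemma:1}, and $\sum_k\omega_k^2<\infty$ together with $\sum_k|\omega_{k+1}-\omega_k|<\infty$ from Assumption~\ref{ass_step_size}, its accumulated contribution to $e_k$ is $\mathcal{O}(\omega_k)$. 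Equivalently, one absorbs the non-martingale noise into a perturbed Lyapunov function $\|T_k+\omega_{k+1}\Pi_{\btheta_k}\mu_{\btheta_k}(\bx_{k+1}^{\pop P})\|^2$, whose increments are martingale-plus-$\mathcal{O}(\omega^2)$.

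\textbf{Closing the recursion.} Combining the drift bound, the martingale cancellation and the $\mathcal{O}(\omega_{k+1}^2)$ remainders yields, for $k$ large enough that $2\phi\omega_{k+1}\le 1$ (and after passing to the perturbed variable above so the cross terms are absorbed), a recursion of the shape
\begin{equation*}
e_{k+1}\le(1-2\phi\,\omega_{k+1})\,e_k+C\,\omega_{k+1}^2 .
\end{equation*}
The standard deterministic comparison lemma for such recursions, which is exactly the engine behind Theorem~24 of \citet{Albert90} (see also \citet{Fortetal2011}), then gives $e_k=\mathcal{O}(\omega_k)$ whenever $\omega_k=\mathcal{O}(k^{-\alpha})$ with $\alpha\in(0.5,1]$; the exponent $\alpha>1/2$ is what guarantees $\omega_{k+1}^2=o(\omega_k\,\omega_{k+1})$, so the quadratic perturbations do not overwhelm the linear contraction. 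The identification $\widehat\btheta_{\star}=\btheta_{\star}+\mathcal{O}\big(\sup_{\bx}\Var(\xi_n(\bx))+\sup_{k\ge k_0}\epsilon_k+\tfrac1m\big)$ is inherited verbatim from Lemma~\ref{convex_appendix}.

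\textbf{Main obstacle.} Two points need care. First, the stability estimate of Lemma~\ref{convex_appendix} is only \emph{local}, so one must ensure $\{\btheta_k\}$ does not escape $\widetilde\bTheta$; this can be enforced by a truncation/reprojection device, or, following the recurrence philosophy of Theorem~3.2 of \citet{Fort15} cited after Assumption~\ref{ass2a}, by arguing that $\btheta_k$ returns to $\widetilde\bTheta$ often enough given small step sizes and the contraction toward $\widehat\btheta_{\star}$. Second, and more delicate, is making the Poisson-equation bookkeeping tight: converting the Markov state-dependent noise into a martingale part plus remainders, and checking via Lemmas~\ref{lyapunov_ori} and~\ref{lemma:1} that the Abel-summation remainders are genuinely $\mathcal{O}(\omega_k)$ rather than a weaker rate — this is where the interplay between the time-varying kernel $\Pi_{\btheta_k}$, the step-size regularity, and the uniform moment bounds must be exploited carefully.
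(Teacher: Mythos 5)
Your proposal is correct and follows essentially the same route as the paper: the paper's proof is exactly an application of Theorem 24 (p.~246) of \citet{Albert90} with the biased point $\widehat\btheta_{\star}$ treated as the equilibrium, the hypotheses being supplied by the local stability of Lemma~\ref{convex_appendix}, the Poisson-equation regularity of Lemma~\ref{lyapunov_ori}, the moment bound of Lemma~\ref{lemma:1}, and Assumption~\ref{ass_step_size}. You merely unpack the internals of that cited theorem (the martingale-plus-remainder decomposition via the Poisson equation and the deterministic comparison recursion), which is the same machinery the paper invokes as a black box.
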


The theoretical novelty is that we treat the biased $\widehat\btheta_{\star}$ as the equilibrium of the continuous system instead of analyzing how far we are away from $\btheta_{\star}$ in all aspects as in Theorem 1 \citep{CSGLD}. This enables us to directly apply Theorem 24 (page 246). Nevertheless, it can be interpreted as a special case of Theorem 1 \citep{CSGLD} except that there are no perturbation terms and the equilibrium is $\widehat\btheta_{\star}$ instead of $\btheta_{\star}$.

\section{Gaussian approximation}
\label{Gaussian_approx}

\subsection{Preliminary: sufficient conditions for weak convergence}

To formally prove the asymptotic normality of the stochastic approximation process $\omega_k^{-1/2}(\btheta_k-\widehat\btheta_{\star})$, we first lay out a preliminary result (Theorem 1 of \citet{Pelletier98}) that provides sufficient conditions to guarantee the weak convergence.

\begin{lemma}[Sufficient Conditions]
\label{sufficiency}
Consider a stochastic algorithm as follows
\begin{equation*}
    \btheta_{k+1}=\bm{\theta}_{k}+\omega_{k+1}h(\bm{\theta}_{k}) +\omega_{k+1} \bm{\widetilde\nu}_{k+1}+\omega_{k+1}\bm{e}_{k+1},
\end{equation*}
where $\bm{\widetilde\nu}_{k+1}$ denotes a perturbation and $\bm{e}_{k+1}$ is a random noise. Given three conditions (\textbf{C1}), (\textbf{C2}), and (\textbf{C3}) defined below, we have the desired weak convergence result
\begin{equation}
    \omega^{-\frac{1}{2}}(\btheta_k -\widehat\btheta_{\star})\Rightarrow\mathcal{N}(0, \bSigma),
\end{equation}
where $\bSigma=\int_0^{\infty} e^{t h_{\btheta_{\star}}}\circ \bR\circ  e^{th^{\top}_{\btheta_{\star}}}dt$, $\bR$ denotes the limiting covariance of the martingale $\lim_{k\rightarrow\infty}\E[\bm{e_{k+1}}\bm{e_{k+1}}^{\top}|\mathcal{F}_k]$ and $\mathcal{F}_k$ is the $\sigma$-algebra of the events up to iteration $k$, $h_{\btheta_{\star}}=h_{\btheta}(\widehat\btheta_{\star})+\widehat\xi\bI$, $\widehat\xi=\lim_{k\rightarrow \infty}\frac{\omega_k^{0.5}-\omega_{k+1}^{0.5}}{\omega_k^{1.5}}$. \footnote[2]{For example, $\widehat\xi=0$ if $\omega_k=\mathcal{O}(k^{-\alpha})$, where $\alpha\in(0.5, 1]$ and $\widehat\xi=\frac{k_0}{2}$ if  $\omega_k=\frac{k_0}{k}$.}

\textbf{(C1)} There exists an equilibrium point $\widehat\btheta_{\star}$ and a stable matrix $h_{\btheta_{\star}}:=h_{\btheta}(\widehat\btheta_{\star})\in\mathbb{R}^{m\times m}$ such that for any $\btheta\in \{\btheta: \|\btheta-\widehat\btheta_{\star}\|\leq \widetilde M\}$ for some $\widetilde M>0$, the mean-field function $h:\mathbb{R}^m\rightarrow \mathbb{R}^m$ satisfies
\begin{equation*}
\begin{split}
    h(\widehat\btheta_{\star})&=0\\
    \|h(\btheta)-h_{\btheta_{\star}} (\btheta-\widehat\btheta_{\star})\|&\lesssim \|\btheta-\widehat\btheta_{\star}\|^2,
\end{split}
\end{equation*}

\textbf{(C2)} The step size $\omega_k$ decays with an order $\alpha\in (0, 1]$ such that $\omega_k=\mathcal{O}(k^{-\alpha})$.

\textbf{(C3)} Assumptions on the disturbances . There exists constants $\widetilde M>0$ and $\widetilde \alpha>2$ such that
$$ \E\left[\bm{e}_{k+1}|\mathcal{F}_k\right]\bm{1}_{\{\|\btheta-\widehat\btheta_{\star}\|\leq \widetilde M\}}=0,  \eqno{(\text{I}_1)}$$
$$  \sup_k\E\left[\|\bm{e}_{k+1}\|^{\widetilde \alpha}|\mathcal{F}_k\right]\bm{1}_{\{\|\btheta-\widehat\btheta_{\star}\|\leq \widetilde M\}}< \infty,  \eqno{(\text{I}_2)}$$
$$ \E\left[\omega_k^{-1}\|\bm{\widetilde \nu}_{k+1}\|^2\right]\bm{1}_{\{\|\btheta-\widehat\btheta_{\star}\|\leq \widetilde M\}}\rightarrow 0,  \eqno{(\text{II})}$$
$$ \E\left[\bm{e}_{k+1} \bm{e}_{k+1}^{\top}|\mathcal{F}_k\right]\bm{1}_{\{\|\btheta-\widehat\btheta_{\star}\|\leq \widetilde M\}}\rightarrow \bR. \eqno{(\text{III})}$$

\end{lemma}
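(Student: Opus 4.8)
The plan is to obtain Lemma~\ref{sufficiency} by invoking Theorem~1 of \citet{Pelletier98} after matching notation; what follows is a sketch of why conditions (\textbf{C1})--(\textbf{C3}) are exactly the ingredients that make the argument run. The central object is the rescaled error $\bm{z}_k:=\btheta_k-\widehat\btheta_{\star}$ together with its normalization $\bU_k:=\omega_k^{-1/2}\bm{z}_k$, and one wants $\bU_k$ to converge weakly to the stationary law of the linear diffusion~(\ref{slde}). First I would linearize: on the localization event $\{\|\btheta_k-\widehat\btheta_{\star}\|\leq\widetilde M\}$, using (\textbf{C1}) in the form $h(\btheta_k)=h_{\btheta}(\widehat\btheta_{\star})\bm{z}_k+\mathcal{O}(\|\bm{z}_k\|^2)$, the recursion in the lemma becomes
\begin{equation*}
\bm{z}_{k+1}=\left(\bI+\omega_{k+1}h_{\btheta}(\widehat\btheta_{\star})\right)\bm{z}_k+\omega_{k+1}\bm{\widetilde\nu}_{k+1}+\omega_{k+1}\bm{e}_{k+1}+\omega_{k+1}\mathcal{O}(\|\bm{z}_k\|^2).
\end{equation*}
Since $\|\bm{z}_k\|\to 0$ (the rate $\E[\|\bm{z}_k\|^2]=\mathcal{O}(\omega_k)$ being already supplied by Lemma~\ref{latent_convergence_appendix}, which itself rests on the stability of Lemma~\ref{convex_appendix}), the quadratic remainder is of strictly lower order than the linear term and drops out after rescaling; so does $\omega_{k+1}^{1/2}\bm{\widetilde\nu}_{k+1}$, by the perturbation-smallness condition (\textbf{II}) of (\textbf{C3}).

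Next I would rescale and keep track of the extra drift. Writing $\bU_{k+1}=\omega_{k+1}^{-1/2}\bm{z}_{k+1}$ and using $\omega_{k+1}^{-1/2}/\omega_k^{-1/2}=1+(\omega_k^{1/2}-\omega_{k+1}^{1/2})/\omega_{k+1}^{1/2}$, the propagator $\bI+\omega_{k+1}h_{\btheta}(\widehat\btheta_{\star})$ acquires an additional $\omega_{k+1}\widehat\xi\,\bI$, with $\widehat\xi=\lim_k(\omega_k^{1/2}-\omega_{k+1}^{1/2})/\omega_k^{3/2}$; this produces exactly the matrix $h_{\btheta_{\star}}=h_{\btheta}(\widehat\btheta_{\star})+\widehat\xi\,\bI$ of the statement. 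Under (\textbf{C2}) one has $\widehat\xi\geq 0$ (and $\widehat\xi=0$ in the typical case $\omega_k=\mathcal{O}(k^{-\alpha})$ with $\alpha<1$), so together with the standing stability of $h_{\btheta}(\widehat\btheta_{\star})$ the matrix $h_{\btheta_{\star}}$ remains stable and $\bU_k$ obeys a stable linear recursion driven by the martingale increments $\omega_{k+1}^{1/2}\bm{e}_{k+1}$.

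Then comes the central limit step. The conditional centering and the uniform $\widetilde\alpha$-th conditional moment bound on $\bm{e}_{k+1}$ (with $\widetilde\alpha>2$) from (\textbf{C3}), together with the covariance convergence $\E[\bm{e}_{k+1}\bm{e}_{k+1}^{\top}\mid\mathcal{F}_k]\to\bR$, put the triangular array $\{\omega_{k+1}^{1/2}\bm{e}_{k+1}\}$ into the scope of the functional martingale CLT used in \citet{Pelletier98}. Combining the geometric contraction induced by the stable drift $h_{\btheta_{\star}}$ with this asymptotically Gaussian forcing and passing to the stationary regime yields $\bU_k\Rightarrow\mathcal{N}(0,\bSigma)$, where $\bSigma=\int_0^{\infty}e^{t h_{\btheta_{\star}}}\bR\,e^{t h_{\btheta_{\star}}^{\top}}\,dt$ is the unique solution of the Lyapunov equation $h_{\btheta_{\star}}\bSigma+\bSigma h_{\btheta_{\star}}^{\top}+\bR=0$.

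The main obstacle — and the reason I would cite \citet{Pelletier98} rather than reprove the lemma in full — is this last step: one must establish enough recurrence of the iterates to $\{\|\btheta-\widehat\btheta_{\star}\|\leq\widetilde M\}$ that the \emph{localized} hypotheses of (\textbf{C3}) actually control the whole trajectory, and one must handle the time-inhomogeneity of the rescaled array in the invariance principle. When this lemma is later applied to prove Theorem~\ref{Asymptotic}, the remaining work is precisely the verification of (\textbf{C3}): one decomposes the Markov state-dependent noise $\widetilde H(\btheta_k,\bx_{k+1})-h(\btheta_k)$ through the Poisson solution $\mu_{\btheta}$ of Lemma~\ref{lyapunov_ori} into a genuine martingale-difference part (playing the role of $\bm{e}_{k+1}$, whose limiting covariance yields $\bR$) plus telescoping and higher-order terms (playing the role of $\bm{\widetilde\nu}_{k+1}$), the classical device of \citet{Albert90}; the moment and smallness bounds then follow from Assumption~\ref{ass2a} and Lemma~\ref{lemma:1}.
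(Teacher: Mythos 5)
Your approach matches the paper exactly: the paper gives no proof of this lemma, presenting it purely as a restatement of Theorem 1 of \citet{Pelletier98} whose hypotheses (C1)--(C3) are then verified later when proving Theorem \ref{Asymptotic}, which is precisely what you propose (and your sketch of the linearization, the extra $\widehat\xi\bI$ drift from rescaling, the martingale CLT, and the Lyapunov-equation form of $\bSigma$ is a faithful account of why Pelletier's conditions are the right ones). Your description of how the lemma is subsequently applied --- Poisson-equation decomposition of the state-dependent noise into a martingale part $\bm{e}_{k+1}$ and perturbations $\bm{\widetilde\nu}_{k+1}$, then checking (C3) --- also agrees with the paper's proof of Theorem \ref{Asymptotic}.
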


\begin{remark}
By the definition of the mean-field function $h(\btheta)$ in Eq.(\ref{h_i_theta}), it is easy to verify the condition C1. Moreover, Assumption \ref{ass_step_size} also fulfills the  condition C2. Then, the proof hinges on the verification of the condition C3. 
\end{remark}

\subsection{Preliminary: convergence of the covariance estimators}

In particular, to verify the condition $\E\left[\bm{e}_{k+1} \bm{e}_{k+1}^{\top}|\mathcal{F}_k\right]\bm{1}_{\{\|\btheta-\widehat\btheta_{\star}\|\leq \widetilde M\}}\rightarrow \bR$, , we study the convergence of the 
empirical sample mean $\E[f(\bx_k)]$ for a test function $f$ to the posterior expectation $\bar{f}=\int_{\MX}f(\bx)\varpi_{\widehat\btheta_{\star}}(\bx)(d\bx)$. Poisson's equation is often used to characterize the fluctuation 
between $f(\bx)$ and $\bar f$: 
\begin{equation}
    \mathcal{L}g(\bx)=f(\bx)-\bar f,
\end{equation}
where $\mathcal{L}$ refers to an infinitesimal generator and $g(\bx)$ denotes the solution of the Poisson's equation. Similar to the proof of Lemma \ref{lyapunov_ori}, the existence of the solution of the Poisson's equation has been established in \citep{mattingly02, VollmerZW2016}. Moreover, the perturbations of $\E[f(\bx_k)]-\bar f$ are properly bounded given regularity properties for $g(\bx)$, where the 0-th, 1st, and 2nd order of the regularity properties has been established in \citet{Mackey18}.

The following result helps us to identify the convergence of the covariance estimators, which is adapted from Theorem 5 \citep{Chen15} with decreasing learning rates $\{\epsilon_k\}_{k\geq 1}$. The gradient biases from Theorem 2 \citep{Chen15} are also included to handle the adaptive biases.

\begin{lemma}[Convergence of the Covariance Estimators]
\label{covariance_estimator}
Suppose Assumptions \ref{ass2a}-\ref{ass_step_size} hold. For any $\btheta_{0}\in\widetilde\bTheta\subset\bTheta$, a large $m$, small learning rates $\{\epsilon_k\}_{k=1}^{\infty}$, step sizes $\{\omega_k\}_{k=1}^{\infty}$ and any bounded function $f$, we have 
\begin{equation*}
\begin{split}
    \left|\E\left[f(\bx_k)\right]-\int_{\MX}f(\bx)\varpi_{\widehat\btheta_{\star}}(\bx)d\bx\right|&\rightarrow 0, \\
\end{split}
\end{equation*}
where $\varpi_{\widehat\btheta_{\star}}(\bx)$ is the invariant measure simulated via SGLD that approximates $\varpi_{\widetilde{\Psi}_{\btheta_{\star}}}(\bx) \propto  
\frac{\pi(\bx)}{\theta_{\star}^{\zeta}(J(\bx))}$.
\end{lemma}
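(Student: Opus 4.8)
The strategy is to adapt the decreasing-step-size weak-convergence analysis of stochastic gradient MCMC (\citet{Chen15}, Theorems~2 and~5) to our time-inhomogeneous chain, whose drift varies both through the discontinuous partition index $J_{\widetilde U}$ and through the self-adapting parameter $\btheta_k$. Let $\mathcal{L}_{\widehat\btheta_{\star}}$ denote the infinitesimal generator of the Langevin diffusion whose invariant law is $\varpi_{\widehat\btheta_{\star}}$, and let $g$ solve the Poisson equation $\mathcal{L}_{\widehat\btheta_{\star}}g(\bx)=f(\bx)-\bar f$ with $\bar f=\int_{\MX}f(\bx)\varpi_{\widehat\btheta_{\star}}(\bx)d\bx$. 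Existence of $g$ together with boundedness of $g$ and its first- and second-order derivatives in terms of the Lyapunov function $V(\bx)=1+\|\bx\|^2$ follows exactly as in Lemma~\ref{lyapunov_ori} (the drift and minorization conditions from Assumptions~\ref{ass2a}--\ref{ass4}) combined with the regularity estimates of \citet{mattingly02}, \citet{VollmerZW2016} and \citet{Mackey18}; the uniform moment bound $\sup_k\E[V(\bx_k)]<\infty$ comes from Lemma~\ref{lemma:1}.

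First I would Taylor-expand $g$ along one update $\bx_{k+1}=\bx_k-\epsilon_k\nabla_{\bx}\widetilde{L}(\bx_k,\btheta_k)+\mathcal{N}(0,2\tau\epsilon_k\bm{I}_d)$ to get
\begin{equation*}
\E\!\left[g(\bx_{k+1})\mid\mathcal{F}_k\right]=g(\bx_k)+\epsilon_k\,\widetilde{\mathcal{L}}_{\btheta_k}g(\bx_k)+\mathcal{O}\!\left(\epsilon_k^2\,V(\bx_k)\right),
\end{equation*}
where $\widetilde{\mathcal{L}}_{\btheta_k}$ is the generator attached to the mini-batch-averaged adaptive drift. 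Next I would split $\widetilde{\mathcal{L}}_{\btheta_k}g=\mathcal{L}_{\widehat\btheta_{\star}}g+(\widetilde{\mathcal{L}}_{\btheta_k}-\mathcal{L}_{\btheta_k})g+(\mathcal{L}_{\btheta_k}-\mathcal{L}_{\widehat\btheta_{\star}})g$: the first summand equals $f-\bar f$; the second is the gradient bias produced by feeding the stochastic energy $\widetilde U$ into the discontinuous index $J_{\widetilde U}$, which by the argument of Lemma~\ref{bias_in_SA} (and the variance-reduced estimator in~(\ref{VR_estimator})) is $\mathcal{O}\!\left(\Var(\xi_n(\bx_k))\right)$ times the bounded regularity of $g$, playing the role of Theorem~2 of \citet{Chen15}; the third is bounded by $C\|\btheta_k-\widehat\btheta_{\star}\|\,V(\bx_k)$ since the gradient multiplier in~(\ref{adaptive_grad}) is Lipschitz in $\btheta$ under the compactness Assumption~\ref{ass2a}, exactly as in the kernel-smoothness estimate of Lemma~\ref{lyapunov_ori}. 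Rearranging, taking full expectations, telescoping $\sum_k(\E[g(\bx_{k+1})]-\E[g(\bx_k)])$, and writing $S_K=\sum_{k\le K}\epsilon_k$ yields
\begin{equation*}
\left|\frac{1}{S_K}\sum_{k=1}^{K}\epsilon_k\big(\E[f(\bx_k)]-\bar f\big)\right|\le\frac{2\|g\|_V\sup_k\E[V(\bx_k)]}{S_K}+\mathcal{O}\!\left(\sup_{\bx}\Var(\xi_n(\bx))\right)+\frac{1}{S_K}\sum_{k=1}^{K}\epsilon_k\,\mathcal{O}\!\left(\E\|\btheta_k-\widehat\btheta_{\star}\|+\epsilon_k\right).
\end{equation*}

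To conclude I would use that $S_K\to\infty$ by Assumption~\ref{ass_step_size}, that $\E\|\btheta_k-\widehat\btheta_{\star}\|\le(\E\|\btheta_k-\widehat\btheta_{\star}\|^2)^{1/2}=\mathcal{O}(\omega_k^{1/2})\to0$ by Lemma~\ref{latent_convergence_appendix}, and that $\epsilon_k\to0$; the Toeplitz/Kronecker lemma then forces both Cesàro averages on the right to vanish, while the residual $\mathcal{O}(\Var(\xi_n))$ term is exactly what is absorbed into the phrasing that ``$\varpi_{\widehat\btheta_{\star}}$ is the invariant measure simulated via SGLD that approximates $\varpi_{\widetilde{\Psi}_{\btheta_{\star}}}$'' (and is driven to $0$ along the iterations once the variance-reduced energy estimator~(\ref{VR_estimator}) is in force). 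To upgrade this weighted-average statement to the single-time marginal $|\E[f(\bx_k)]-\bar f|\to0$, I would invoke the geometric ergodicity of the frozen kernel $\Pi_{\widehat\btheta_{\star}}$ (from the dissipativity Assumption~\ref{ass3} via Lemma~\ref{lyapunov_ori}): coupling the adaptive chain with the chain frozen at $\widehat\btheta_{\star}$ and using geometric forgetting, the effective weight of step $j$ on the law at step $k$ decays like $e^{-\rho(S_k-S_j)}$, so the per-step error bounds above (discretization $\mathcal{O}(\epsilon_j)$, adaptation $\mathcal{O}(\E\|\btheta_j-\widehat\btheta_{\star}\|)$, and the $\mathcal{O}(\Var(\xi_n))$ bias) propagate to give $\limsup_k|\E[f(\bx_k)]-\bar f|=0$.

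The main obstacle is controlling, on a noncompact state space, the cumulative effect of the drift perturbation coming from $\btheta_k\neq\widehat\btheta_{\star}$: unlike the discretization and the $J_{\widetilde U}$-bias, which are handled by the regularity of the Poisson solution and the variance-reduction argument essentially as in \citet{Chen15}, this term is genuinely new and requires the $L^2$ rate of Lemma~\ref{latent_convergence_appendix} to be strong enough that $\sum_{k\le K}\epsilon_k\,\E\|\btheta_k-\widehat\btheta_{\star}\|=o(S_K)$, together with the Lipschitz-in-$\btheta$ control of the adaptive transition kernel already established in Lemma~\ref{lyapunov_ori}; keeping every $V(\bx_k)$-weighted remainder uniformly integrable via Lemma~\ref{lemma:1} is the technical glue throughout.
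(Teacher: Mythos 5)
Your proposal is correct and is, at its core, the same argument the paper uses: both proofs rewrite the adaptive update as SGLD frozen at $\widehat\btheta_{\star}$ plus a drift perturbation $\Upsilon(\bx_k,\btheta_k)=\nabla_{\bx}\widetilde L(\bx_k,\btheta_k)-\nabla_{\bx}\widetilde L(\bx_k,\widehat\btheta_{\star})$, bound $\|\E[\Upsilon(\bx_k,\btheta_k)]\|\lesssim\E\|\btheta_k-\widehat\btheta_{\star}\|\le\sqrt{\E\|\btheta_k-\widehat\btheta_{\star}\|^2}=\mathcal{O}(\sqrt{\omega_k})$ via Jensen and Lemma \ref{latent_convergence_appendix}, and then feed this bias into the Poisson-equation machinery of \citet{Chen15}, with regularity of the Poisson solution supplied as in Lemma \ref{lyapunov_ori} and moments from Lemma \ref{lemma:1}. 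The difference is only in how that machinery enters: the paper cites Theorems 2 and 5 of \citet{Chen15} wholesale and reads off the bound $\mathcal{O}\big(\tfrac{1}{\sum_i\epsilon_i}+\tfrac{\sum_i\omega_i\|\E[\Upsilon(\bx_i,\btheta_i)]\|}{\sum_i\omega_i}+\tfrac{\sum_i\epsilon_i^2}{\sum_i\epsilon_i}\big)$, whereas you re-derive it by Taylor-expanding the Poisson solution and telescoping, which naturally yields a Ces\`aro-weighted statement that you then upgrade to the last-iterate quantity $\E[f(\bx_k)]$ through geometric forgetting of the kernel frozen at $\widehat\btheta_{\star}$. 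That upgrade step is a point the paper passes over by direct citation (the cited theorems control time-averaged estimators), so your route is more self-contained and more explicit about the average-versus-marginal distinction; conversely, the paper's proof is shorter and folds the stochastic-energy index bias into the definition of $\widehat\btheta_{\star}$ (it works with the exact index $J(\bx)$ in this proof), rather than carrying a separate $\mathcal{O}(\Var(\xi_n(\bx)))$ term through the estimate as you do. Both treatments reach the stated conclusion under Assumptions \ref{ass2a}--\ref{ass_step_size}.
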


\begin{proof} We study the single-chain CSGLD and reformulate the adaptive algorithm as follows:
\begin{equation*}
\begin{split}
    \bx_{k+1}&=\bx_k- \epsilon_k\nabla_{\bx} \widetilde{L}(\bx_k, \btheta_k)+\mathcal{N}({0, 2\epsilon_k \tau\bm{I}})\\
    &=\bx_k- \epsilon_k\left(\nabla_{\bx} 
    \widetilde{L}(\bx_k, \widehat\btheta_{\star})+{\Upsilon}(\bx_k, \btheta_k)\right)+\mathcal{N}({0, 2\epsilon_k \tau\bm{I}}),
\end{split}
\end{equation*}
where    
$\nabla_{\bx} \widetilde{L}(\bx,\btheta)= \frac{N}{n} \left[1+  \frac{\zeta\tau}{\Delta u}  \left(\log \theta({J}(\bx))-\log\theta(({J}(\bx)-1)\vee 1) \right) \right]  \nabla_{\bx} \widetilde U(\bx)$ \footnote[3]{$J(\bx)=\sum_{i=1}^m i 1_{u_{i-1}<U(\bx)\leq u_i}$, where the exact energy function $U(\bx)$ is selected.},  $\nabla_{\bx} \widetilde{L}(\bx,\btheta)$ is 
defined in Section \ref{Alg:app} and the bias term is given by ${\Upsilon}(\bx_k,\btheta_k)=\nabla_{\bx} \widetilde{L}(\bx_k,\btheta_k)-\nabla_{\bx} \widetilde{L}(\bx_k,\widehat\btheta_{\star})$.

Then, by Jensen's inequality and Lemma \ref{latent_convergence_appendix}, we have
\begin{equation}
\label{latent_bias}
\begin{split}
    \|\E[\Upsilon(\bx_k,\btheta_k)]\|&\leq 
    \E[\|\nabla_{\bx} \widetilde{L}(\bx_k, \btheta_k)-\nabla_{\bx} \widetilde{L}(\bx_k, \widehat\btheta_{\star})\|] \\
    &\lesssim  \E[\|\btheta_k-\widehat\btheta_{\star}\|]\leq \sqrt{\E[\|\btheta_k-\widehat\btheta_{\star}\|^2]}\leq \mathcal{O}\left( \sqrt{\omega_{k}}\right).
\end{split}
\end{equation}
Combining Eq.(\ref{latent_bias}) and Theorem 5 \citep{Chen15}, we have
\begin{equation*}
\begin{split}
    \left|\E\left[f(\bx_k)\right]-\int_{\MX}f(\bx)\varpi_{\widehat\btheta_{\star}}(\bx)d\bx\right|&=
    \mathcal{O}\left(\frac{1}{\sum_i^k \epsilon_i }+\frac{\sum_{i=1}^k \omega_i \|\E[\Upsilon(\bx_i,\btheta_i)]\|}{\sum_i^k \omega_i } +\frac{\sum_i^k \epsilon_i^2}{\sum_i^k \epsilon_i }\right)\\
    &\rightarrow 0, \text{\ as}\ k\rightarrow \infty, \\
\end{split}
\end{equation*}
where the last argument directly follows from the conditions on learning rates and step sizes in Assumption \ref{ass_step_size}. \qed
\end{proof}

\subsection{Proof of Theorem \ref{Asymptotic}}
\label{proof_theorem_1}
Recall that the stochastic approximation based on a single process follows from
\begin{equation}
\begin{split}
\label{ga_1}
    &\quad\bm{\theta}_{k+1}\\
    &=\bm{\theta}_{k}+\omega_{k+1} H(\bm{\theta}_{k}, \bx_{k+1})\\
    &=\bm{\theta}_{k}+\omega_{k+1}h(\bm{\theta}_{k}) +\omega_{k+1}\left(\mu_{\btheta_k}(\bx_{k+1})-\Pi_{{\btheta_k}}\mu_{\bm{\theta}_k}(\bx_{k+1})\right)\\
    &=\bm{\theta}_{k}+\omega_{k+1}h(\bm{\theta}_{k})\\
    &\quad+\omega_{k+1}\underbrace{\left(\Pi_{{\btheta_{k+1}}}\mu_{\btheta_{k+1}}(\bx_{k+1})-\Pi_{{\btheta_k}}\mu_{\btheta_k}(\bx_{k+1})+\frac{\omega_{k+2}-\omega_{k+1}}{\omega_{k+1}}\Pi_{{\btheta_{k+1}}}\mu_{\btheta_{k+1}}(\bx_{k+1})\right)}_{\bm{\nu}_{k+1}}\\
    &\quad+\omega_{k+1}\bigg(\underbrace{\frac{1}{\omega_{k+1}}\bigg(\omega_{k+1} \Pi_{{\btheta_{k}}}\mu_{\btheta_{k}}(\bx_{k})- \omega_{k+2} \Pi_{{\btheta_{k+1}}}\mu_{\btheta_{k+1}}(\bx_{k+1})\bigg)}_{\bm{\varsigma}_{k+1}}+\underbrace{\mu_{\btheta_k}(\bx_{k+1})-\Pi_{{\btheta_k}}\mu_{\btheta_k}(\bx_{k})}_{\bm{e}_{k+1}}\bigg) \\
    &=\bm{\theta}_{k}+\omega_{k+1}h(\bm{\theta}_{k}) +\omega_{k+1}\underbrace{\left( \bm{\nu}_{k+1}+\bm{\varsigma}_{k+1}\right)}_{\text{perturbation}}+\omega_{k+1}\underbrace{\bm{e}_{k+1}}_{\text{martingale}},\\
\end{split}
\end{equation}
where the second equality holds from the solution of Poisson's equation in Eq.(\ref{poisson_eqn_ori}).

We denote $\ddot\btheta_k=\btheta_k+\omega_{k+1} \Pi_{{\btheta_{k}}}\mu_{\btheta_{k}}(\bx_{k})$. Adding $\omega_{k+2} \Pi_{{\btheta_{k+1}}}\mu_{\btheta_{k+1}}(\bx_{k+1})$ on both sides of Eq.(\ref{ga_1}), we have
\begin{equation}
\begin{split}
    &\quad\ddot\btheta_{k+1}\\
    &=\ddot\btheta_k+\omega_{k+1}h(\bm{\theta}_{k}) +\omega_{k+1}\left( \bm{\nu}_{k+1}+\bm{e}_{k+1}+\bm{\varsigma}_{k+1}\right)+\omega_{k+2} \Pi_{{\btheta_{k+1}}}\mu_{\btheta_{k+1}}(\bx_{k+1})-\omega_{k+1} \Pi_{{\btheta_{k}}}\mu_{\btheta_{k}}(\bx_{k})\\
    &=\ddot\btheta_k+\omega_{k+1}h(\bm{\theta}_{k}) +\omega_{k+1}\left( \bm{\nu}_{k+1}+\bm{e}_{k+1}\right)\\
    &=\ddot\btheta_k+\omega_{k+1}h(\ddot\btheta_k) +\omega_{k+1}\left( \bm{\tilde\nu}_{k+1}+\bm{e}_{k+1}\right),\\
\end{split}
\end{equation}
where $\bm{\tilde\nu}_{k+1}=\bm{\nu}_{k+1}+h(\btheta_k)-h(\ddot\btheta_k)$. Next, we proceed to verify the conditions in \textbf{C3}.

(I) 
By the martingale difference property of $\{\bm{e_k}\}$ and the compactness assumption \ref{ass2a}, we know that for any $\widetilde\alpha>2$
$$\E[\bm{e}_{k+1}|\mathcal{F}_k]=\bm{0}, \ \ \ \ \ \sup_{k\geq 0}\E[\|\bm{e}_{k+1}\|^{\widetilde\alpha}|\mathcal{F}_k]<\infty. \eqno{(\text{I})}$$

(II) By the definition of $h(\btheta_k)$ in Eq.(\ref{h_i_theta}), we can easily check that $h(\btheta_k)$ is Lipschitz continuous in a neighborhood of $\widehat\btheta_{\star}$. Combining Eq.(\ref{poisson_reg}), we have $\|h(\btheta_k)-h(\ddot\btheta_k)\|=\mathcal{O}(\|\btheta_k-\ddot\btheta_{k}\|)=\mathcal{O}(\|\omega_{k+1} \Pi_{{\btheta_{k}}}\mu_{\btheta_{k}}(\bx_{k})\|)=\mathcal{O}(\omega_{k+1})$. Then $\E[\|\bm{\nu}_{k+1}\|]\leq C\|\btheta_k-\ddot \btheta_k\|+\mathcal{O}(\omega_{k+2})=\mathcal{O}(\omega_{k+1})$ by the step size condition Eq.(\ref{a1}). In what follows, we can verify
$$\E\left[\frac{\|\bm{\tilde\nu}_{k+1}\|^2}{\omega_{k}}\right]\leq 2\E\left[\frac{\|\bm{\nu}_{k+1}\|^2}{\omega_{k}}\right]+2\E\left[\frac{\|h(\btheta_k)-h(\ddot\btheta_k)\|^2}{\omega_k}\right]=\mathcal{O}(\omega_k)\rightarrow 0. \eqno{(\text{II})} $$

(III) For the martingale difference noise $\bm{e}_{k+1}=\mu_{\btheta_k}(\bx_{k+1})-\Pi_{{\btheta_k}}\mu_{\btheta_k}(\bx_{k})$ with mean 0, we have
\begin{equation*}
    \E[\bm{e}_{k+1}\bm{e}_{k+1}^{\top}|\mathcal{F}_k]=\E[\mu_{\btheta_k}(\bx_{k+1})\mu_{\btheta_k}(\bx_{k+1})^{\top}|\mathcal{F}_k]-\Pi_{{\btheta_k}}\mu_{\btheta_k}(\bx_{k})\Pi_{{\btheta_k}}\mu_{\btheta_k}(\bx_{k})^{\top}.
\end{equation*}
We denote $\E[\bm{e}_{k+1}\bm{e}_{k+1}^{\top}|\mathcal{F}_k]$ by a function $f(\bx_{k})$. Applying Lemma \ref{covariance_estimator}, we have
$$\E[\bm{e}_{k+1}\bm{e}_{k+1}^{\top}|\mathcal{F}_k]=\E[f(\bx_{k})]\rightarrow \int f(\bx)\varpi_{\widehat\btheta_{\star}}d\bx=\lim_{k\rightarrow \infty} \E[\bm{e}_{k+1}\bm{e}_{k+1}^{\top}|\mathcal{F}_k]:=\bR, \eqno{(\text{III})}$$
where $\bR:=\bR(\widehat\btheta_{\star})$ and $\bR(\btheta)$ is also equivalent to $\sum_{k=-\infty}^{\infty} \cov_{\btheta}(H(\btheta, \bx_k), H(\btheta, \bx_0))$. 

Having the conditions C1, C2 and C3 verified, we apply Lemma \ref{sufficiency} and have the following weak convergence for $\ddot\btheta_k$
\begin{equation*}
\begin{split}
    \omega_k^{-1/2}(\ddot\btheta_k-\widehat\btheta_{\star})\Rightarrow\mathcal{N}(0, \bSigma),
\end{split}
\end{equation*}
where $\bSigma=\int_0^{\infty} e^{t h_{\btheta_{\star}}}\circ \bR\circ  e^{th^{\top}_{\btheta_{\star}}}dt$ and $h_{\btheta_{\star}}=h_{\btheta}(\widehat\btheta_{\star})+\widehat\xi\bI$, $\widehat\xi=\lim_{k\rightarrow \infty}\frac{\omega_k^{0.5}-\omega_{k+1}^{0.5}}{\omega_k^{1.5}}$.

Considering the definition that $\ddot\btheta_k=\btheta_k+\omega_{k+1} \Pi_{{\btheta_{k}}}\mu_{\btheta_{k}}(\bx_{k})$ and $\E[ \|\Pi_{{\btheta_{k}}}\mu_{\btheta_{k}}(\bx_{k})\|]$ is uniformly bounded by Eq.(\ref{poisson_reg}), we have 
\begin{equation*}
    \omega_k^{1/2}\Pi_{{\btheta_{k}}}\mu_{\btheta_{k}}(\bx_{k})\rightarrow 0 \text{\ \ \ \ \ in\ probability.}
\end{equation*}

By Slutsky's theorem, we eventually have the desired result
\begin{equation*}
   \omega_k^{-1/2}(\btheta_k-\widehat\btheta_{\star})\Rightarrow\mathcal{N}(0, \bSigma).
\end{equation*}
where the step size $\omega_k$ decays with an order $\alpha\in (0.5, 1]$ such that $\omega_k=\mathcal{O}(k^{-\alpha})$. \qed

\newpage

\section{More on experiments}
\label{details_exp}

\subsection{Mode exploration on MNIST via the scalable random-field function}
\label{mnist_appendix}
For the network structure, we follow \cite{Jarrett09} and choose a standard convolutional neural network (CNN). Such a CNN has two convolutional (conv) layers and two fully-connected (FC) layers. The two conv layers has 32 and 64 feature maps, respectively. The FC layers both have 50 hidden nodes and the network has 5 outputs. A large batch size of 2500 is selected to reduce the gradient noise and reduce the stochastic approximation bias. We fix $\zeta=3e4$ and weight decay 25. For simplicity, we choose 100,000 partitions and $\Delta u=10$. The step size follows $\omega_k=\min\{0.01, \frac{1}{k^{0.6}+100}\}$.

\subsection{Simulations of multi-modal distributions}
\label{simulation_appendix}
The target density function is given by $\pi(\bx)\propto \exp(-U(\bx))$, where $\bx=(x_1, x_2)$ and $U(\bx)$ follows $U(\bx) = 0.2 (x_1^2 + x_2^2) - 2(\cos(2\pi x_1) + \cos(2\pi x_2))$. \begin{wrapfigure}{r}{0.35\textwidth}
   \begin{center}
   \vskip -0.2in
     \includegraphics[width=0.35\textwidth]{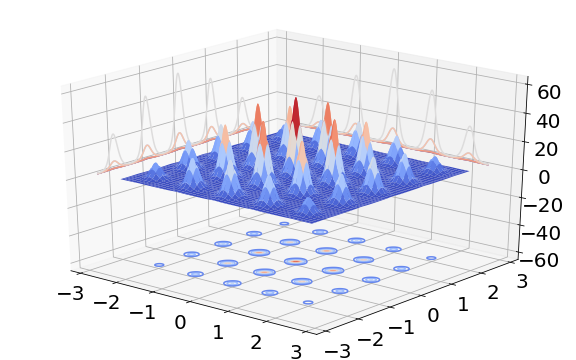}
   \end{center}
   \vskip -0.2in
   \caption{Target density.}
   \label{fig:energy}
\end{wrapfigure}We also include a regularization term $L(x) = \mathbb{I}_{(x_1^2 + x_2^2) > 20} \times ((x_1^2 + x_2^2) - 20)$. This design leads to a highly multi-modal distribution with 25 isolated modes. 
Figure \ref{fig:energy} shows the contour and the 3-D plot of the target density. The ICSGLD and baseline algorithms are applied to this example. For ICSGLD, we set $\epsilon_k=3e^{-3}$, $\tau = 1$, $\zeta=0.75$ and total number of iterations$=8e^4$. Besides, we partition the sample space into 100 subregions with bandwidth $\Delta u=0.125$ and set $\omega_k = \min(3e^{-3}, \frac{1}{k^{0.6}+100})$.

For comparison, we run the baseline algorithms under similar settings. 
For CSGLD, we run a single process 5 times of the time budget and all the settings are the same as those used by ICSGLD. For reSGLD, we run five parallel chains with learning rates $0.001, 0.002, \cdots, 0.005$ and temperatures $1, 2, \cdots, 5$, respectively. We estimate the correction every $100$ iterations. We fix the initial correction 30 and choose the same step size for the stochastic approximation as in ICSGLD. For SGLD, we run five chains in parallel with the learning rate $3e^{-3}$ and a temperature of $1$.
For cycSGLD, we run a single-chain with 5 times of the time budget. We set the initial learning rate as $1e^{-2}$ and choose 10 cycles.
For the particle-based SVGD, we run five chains in parallel. For each chain, we initialize 100 particles as being drawn from a uniform distribution over a rectangle. The learning rate is set to $3e^{-3}$.
\begin{figure}[ht]
\small
\vskip -0.2in
\begin{tabular}{cc}
\includegraphics[height=2.5in,width=2.5in]{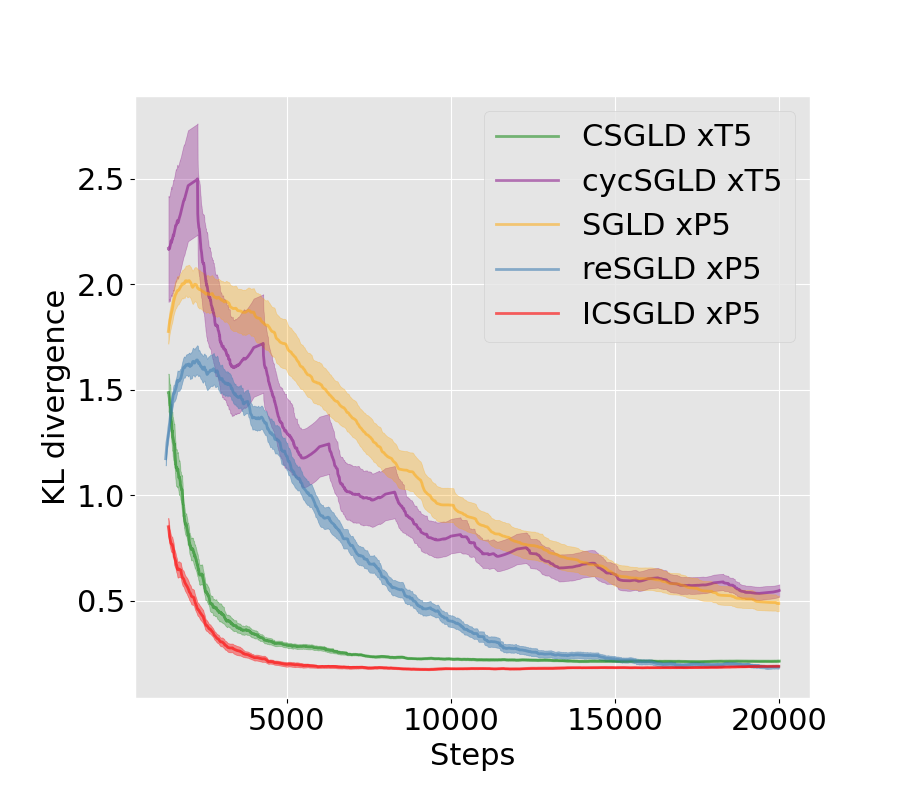} &
\includegraphics[height=2.5in,width=2.5in]{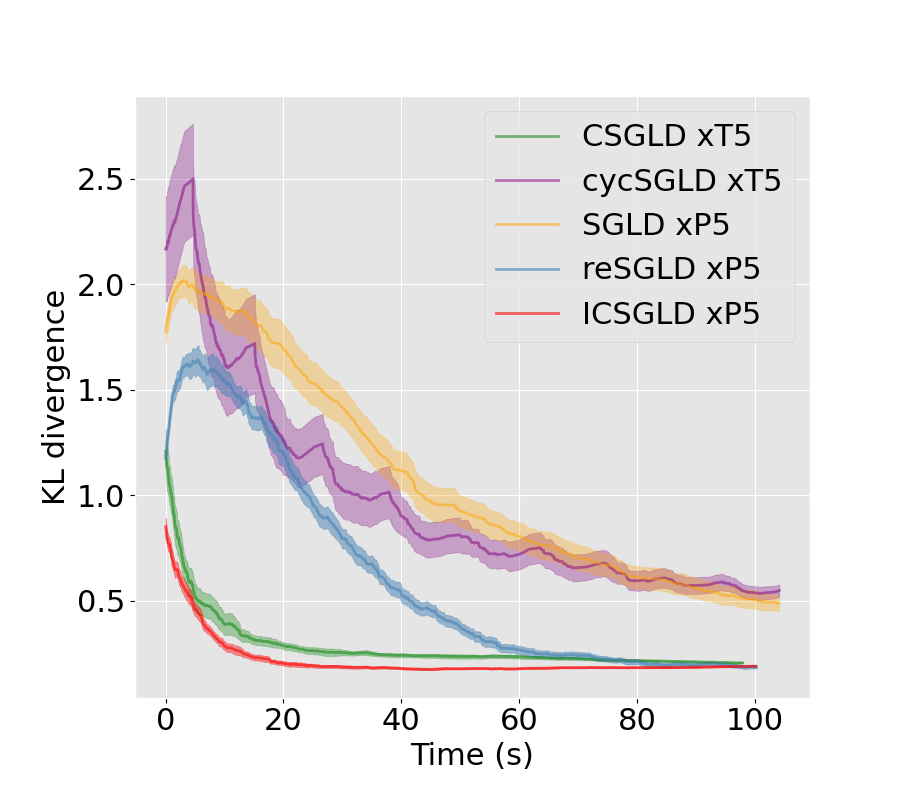} 
\end{tabular}
\vskip -0.1in
  \caption{Estimation KL divergence versus time steps for ICSGLD and baseline methods. We repeat experiments 20 times.}
\label{fig:convergence}
\vskip -0.1in
\end{figure}

To compare the convergence rates in terms of \emph{running steps} and \emph{time} between ICSGLD and other algorithms, we repeat each algorithm 20 times and calculate the mean and standard error over 20 trials. Note that we run all the algorithms based on 5 parallel chains ($\times$P5) except that cycSGLD and CSGLD are run in a single-chain with 5 times of time budget ($\times$T5) and the steps and running time are also scaled accordingly. Figure \ref{fig:convergence} shows that the vanilla SGLD$\times$P5 converges the slowest among the five algorithms due to the lack of mechanism to escape local traps; cycSGLD$\times$T5 slightly alleviates that problem by adopting cyclical learning rates; reSGLD$\times$P5 greatly accelerates the computations by utilizing high-temperature chains for exploration and low-temperature chains for exploitation, but the large correction term inevitably slows down the convergence; ICSGLD$\times$P5 converges faster than all the others and the noisy energy estimators only induce a bias for the latent variables and don't affect the convergence rate significantly. 

For the particle-based SVGD method, since more particles require expensive computations while fewer particles lead to a crude approximation. Therefore, we don't show the convergence of SVGD and only compare the Monte Carlo methods.

\subsection{Deep contextual bandits on mushroom tasks}
\label{bandit_mushroom}
For the UCI Mushroom data set, each mushroom is either edible or poisonous. Eating an edible mushroom yields a reward of 5, but eating a poisonous mushroom has a 50\% chance to result in a reward of -35 and a reward of 5 otherwise. Eating nothing results in 0 reward. All the agents use the same architecture. In particular, we fit a two-layer neural network with 100 neurons each and ReLU activation functions. The input of the network is a feature vector with dimension 22 (context) and there are 2 outputs, representing the predicted reward for eating or not eating a mushroom. The mean squared loss is adopted for training the models. We initialize 1024 data points and keep a data buffer of size 4096 as the training proceeds. The size of the mini-batch data is set to 512. To adapt to online scenarios, we train models after every 20 new observations.

We choose one $\epsilon$-greedy policy (EpsGreedy) based on the RMSProp optimizer with a decaying learning rate  \citep{bandits_showdown} as a baseline. Two variational methods, namely stochastic gradient descent with a constant learning rate (ConstSGD) \citep{Mandt} and Monte Carlo Dropout (Dropout) \citep{Gal16b} are compared to approximate the posterior distribution. For the sampling algorithms, we include preconditioned SGLD (pSGLD) \citep{Li16}, preconditioned CSGLD (pCSGLD) \citep{CSGLD}, and preconditioned ICSGLD (pICSGLD). Note that all the algorithms run 4 parallel chains with average outputs ($\times$P4) except that pCSGLD runs a single-chain with 4 times of computational budget ($\times$T4). In particular for the two contour algorithms, we set $\zeta=20$ and choose a constant step size for the stochastic approximation to fit for the time-varying posterior distributions. For more details on the experimental setups, we refer readers to section D in the supplementary material.

We report the experimental setups for each algorithm. Similar to Table 2 of \citet{bandits_showdown}, the inclusion of advanced techniques may change the optimal settings of the hyperparameters. Nevertheless, we try to report the best setups for each individual algorithm. We train each algorithm 2000 steps. We initialize 1024 mushrooms and
keep a data buffer of size 4096 as the training proceeds. For each step, we are given 20 random mushrooms and train the model 16 iterations every step for the parallel algorithms ($\times$P4); we train pCSGLD$\times$T4 64 iterations every step.

EpsGreedy decays the learning rate by a factor of 0.999 every step; by contrast, all the others choose a fixed learning rate. RMSprop adopts a regularizer of $0.001$ and a learning rate of $0.01$ to learn the preconditioners. Dropout proposes a 50\% dropout rate and each subprocess simulates 5 models for predictions. For the two importance sampling (IS) algorithms, we partition the energy space into $m=100$ subregions and set the energy depth $\Delta u$ as 10. We fix the hyperrameter $\zeta=20$. The step sizes for pICSGLD$\times$P4 and pCSGLD$\times$T4 are chosen as 0.03 and 0.006, respectively. A proper regularizer is adopted for the low importance weights. See Table \ref{hyper_TS} for details.

\begin{table*}[ht]
\begin{sc}
\caption[Table caption text]{Details of the experimental setups.} \label{hyper_TS}
\small
\begin{center} 
\begin{tabular}{cccccccc}
\hline
Algorithm & \scriptsize{Learning rate} & T\upshape{emperature} & \scriptsize{RMS\upshape{prop}} & IS & T\upshape{rain} & D\upshape{ropout} & $\epsilon$-G\upshape{reedy} \\
\hline
EpsGreedy$\times$P4 & 5\upshape{e}-7 (0.999) & 0 & Yes & No & 16 & No & 0.3\%  \\
ConstSGD$\times$P4 & 1\upshape{e}-6 & 0 & No & No & 16 & No & No \\
Dropout$\times$P4 & 1\upshape{e}-6 & 0 & No & No & 16 & Yes (50\%) & No \\
pCSGLD$\times$T4 & 5\upshape{e}-8 & 0.3 & Yes & Yes & 64 & No & No \\
pSGLD$\times$P4 & 3\upshape{e}-7 & 0.3 & Yes & No & 16 & No & No \\
pICSGLD$\times$P4 & 3\upshape{e}-7 & 0.3 & Yes & Yes  & 16 & No & No \\
\hline
\end{tabular}
\end{center}
\end{sc}
\vspace{-0.2in}
\end{table*}

\subsection{Uncertainty estimation}
\label{UQ_appendix}
All the algorithms, excluding M-SGD$\times$P4, choose a temperature of 0.0003 \footnote[2]{We use various data augmentation techniques, such as random flipping, cropping, and random erasing \citep{Zhong17}. This leads to a much more concentrated posterior and requires a very low temperature.}. We run the parallel algorithms 500 epochs ($\times$P4) and run the single-chain algorithms 2000 epochs ($\times$T4). The initial learning rate is 2e-6 (Bayesian settings), which corresponds to the standard 0.1 for averaged data likelihood.

We train cycSGHMC$\times$T4 and MultiSWAG$\times$T4 based on the cosine learning rates with 10 cycles. The learning rate in the last 15\% of each cycle is fixed at a constant value. MultiSWAG simulates 10 random models at the end of each cycle. M-SGD$\times$P4 follows the same cosine learning rate strategy with one cycle.

reSGHMC$\times$P4 proposes swaps between neighboring chains and requires a fixed correction of 4000 for ResNet20, 32, and 56 and a correction of 1000 for WRN-16-8. The learning rate is annealed at 250 and 375 epochs with a factor of 0.2. ICSGHMC$\times$P4 also applies the same learning rate. We choose $m=200$ and $\Delta u=200$ for ResNet20, 32, and 56 and $\Delta u=60$ for WRN-16-8. Proper regularizations may be applied to the importance weights and gradient multipliers for training deep neural networks.

Variance reduction \citep{deng_VR} only applies to reSGHMC$\times$P4 and ICSGHMC$\times$P4 because they are the only two algorithms that require accurate estimations of the energy. We only update control variates every 2 epochs in the last 100 epochs, which maintain a reasonable training time and a higher reduction of variance due to a small learning rate. Other algorithms yield a worse performance when variance reduction is applied to the gradients.

\subsection{Empirical Validation of Reduced Variance}

To compare the $\theta$'s learned from ICSGLD and CSGLD, we try to simulate from a Gaussian mixture distribution $0.4 N(-6, 1) + 0.6 N(4, 1)$, where $N(u, v)$ denotes a Gaussian distribution with mean $u$ and standard deviation $v$. We fix $\zeta=0.9$ and $\Delta u=1$. We run ICSGLD with 1,000,000 iterations \begin{wrapfigure}{r}{0.38\textwidth}
   \begin{center}
   \vskip -0.2in
     \includegraphics[width=0.38\textwidth]{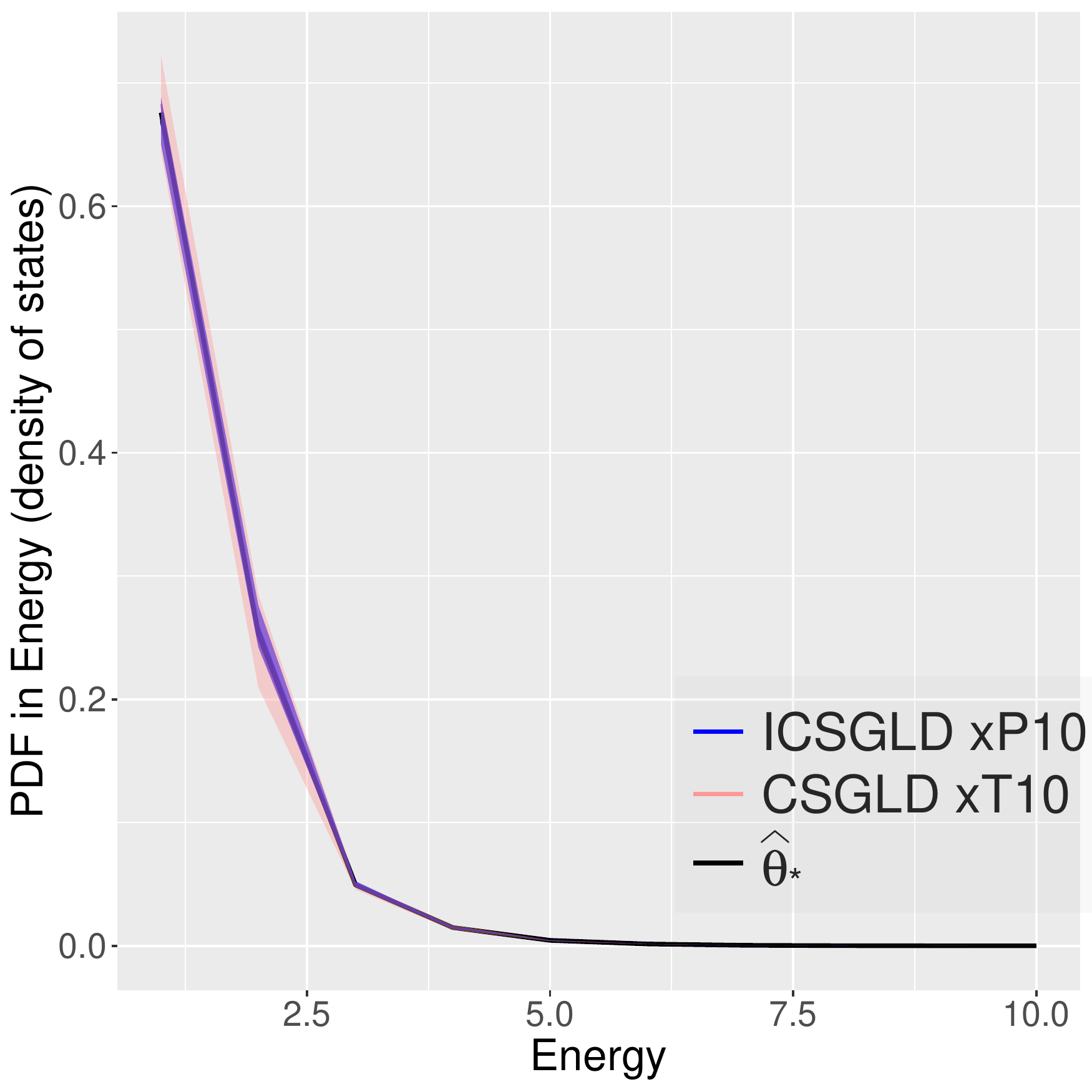}
   \end{center}
   \vskip -0.2in
   \caption{ICSGLD v.s. CSGLD.}
   \label{fig:reduced_variance}
\end{wrapfigure}  based on 10 interacting parallel chains and run CSGLD with 10,000,000 iterations using a single chain. We refer to them as ICSGLD$\times$P10 and CSGLD$\times$T10, respectively. The rest of the settings follows from the experimental setup in section 4.1 \citep{deng2020}. 

To measure the variance of the estimates, we repeated the experiments 10 times and present the mean and two standard deviations for both CSGLD$\times$T10 and ICSGLD$\times$P10 in Figure \ref{fig:reduced_variance}. The results indicate that both estimates of $\theta^{\zeta}$ (by CSGLD and ICSGLD) converge to the equilibrium that approximates the ground truth of the density of states. Notably, ICSGLD$\times$P10 yields a \emph{significantly smaller variance} than CSGLD$\times$T10, but with the same computational budget. This shows the clear advantage of ICSGLD (many interacting short runs) over CSGLD (a single long run) in tackling the \emph{large variance issue} for importance sampling.

\end{document}